\pdfoutput=1
\documentclass[twoside,11pt]{article}
\usepackage[preprint]{jmlr2e}
\usepackage{amsmath,mathtools}
\usepackage{microtype}
\usepackage{enumitem}
\usepackage{booktabs}
\usepackage{array}
\usepackage{float}
\usepackage{afterpage}
\usepackage{xcolor}
\usepackage{tikz}
\usetikzlibrary{arrows.meta,calc,decorations.pathreplacing}
\usepackage{adjustbox}
\usepackage{lastpage}
\hypersetup{
  colorlinks=true,
  linkcolor=blue,
  citecolor=blue,
  urlcolor=blue,
  pdftitle={The Privacy Price of Tail-Risk Learning: Effective Tail Sample Size in Differentially Private CVaR Optimization},
  pdfauthor={El Mustapha Mansouri},
  pdfsubject={Differential privacy and tail-risk learning: minimax rates for private CVaR optimization},
  pdfkeywords={differential privacy; conditional value-at-risk; CVaR; tail-risk learning; private machine learning; private CVaR optimization; minimax rates; sample complexity; effective sample size; private stochastic convex optimization; distributionally robust optimization; robust learning; finite classes; convex Lipschitz learning}
}

\numberwithin{theorem}{section}
\newtheorem{assumption}[theorem]{Assumption}

\newcommand{\R}{\mathbb{R}}
\newcommand{\E}{\mathbb{E}}
\newcommand{\Pp}{\mathbb{P}}
\newcommand{\1}{\mathbf{1}}
\newcommand{\cvar}{\operatorname{CVaR}}

\newcommand{\argmin}{\operatorname*{argmin}}

\newcommand{\DP}{\mathsf{DP}}
\newcommand{\SCO}{\mathsf{SCO}}
\newcommand{\CVaR}{\mathsf{CVaR}}

\title{The Privacy Price of Tail-Risk Learning: Effective Tail Sample Size in Differentially Private CVaR Optimization}
\author{\name El Mustapha Mansouri \\
\addr School of Engineering, Institute of Science Tokyo, Yokohama, Kanagawa 226-8501, Japan}

\ShortHeadings{The Privacy Price of Tail-Risk Learning}{Mansouri}
\firstpageno{1}

\begin{document}
\maketitle

\begin{abstract}%
Differential privacy changes the effective sample size governing CVaR learning.
For tail mass $\tau$, the privacy-relevant sample size is not $n$, but $n\tau$;
equivalently, the effective private tail sample size is $\varepsilon n\tau$.
Private CVaR excess risk decomposes into ordinary tail-risk statistical error
and a privacy price. This decomposition is complete for scalar estimation and
finite classes: scalar estimation has rate
$\Theta\!\bigl(B\min\{1,(n\tau)^{-1/2}+(\varepsilon n\tau)^{-1}\}\bigr)$, and
finite classes of size $M$ have rate
$\Theta\!\bigl(B\min\{1,\sqrt{\log(2M)/(n\tau)}+\log(2M)/(\varepsilon n\tau)\}\bigr)$.
These complete rates hold under pure DP, and their lower bounds extend to
approximate DP in the stated small-$\delta$ regimes. For convex Lipschitz
learning, modular upper and lower reductions show that the CVaR-specific privacy
term necessarily scales as $1/(\varepsilon n\tau)$, with dimension dependence
inherited from private stochastic convex optimization. Together, these results
identify ordinary private learning on $\Theta(n\tau)$ informative tail records
as the canonical hard subproblem inside private CVaR learning.
\end{abstract}

\begin{keywords}
differential privacy, conditional value-at-risk, CVaR, tail-risk learning,
private machine learning, private CVaR optimization, minimax rates, sample
complexity, effective sample size, private stochastic convex optimization,
distributionally robust optimization, robust learning, finite classes, convex
Lipschitz learning
\end{keywords}

\afterpage{%
\begin{figure}[H]
\centering
\begin{adjustbox}{max width=\textwidth}
\begin{tikzpicture}[
  x=1cm,y=1cm,
  font=\sffamily\small,
  >=Latex,
  dot/.style={circle,inner sep=0pt,minimum size=3.1pt},
  taildot/.style={circle,inner sep=0pt,minimum size=4.0pt},
  channel/.style={->,line width=1.05pt,draw=black!58,shorten >=2pt,shorten <=2pt},
  excessbox/.style={rounded corners=6pt,draw=black!18,line width=.55pt,fill=white,
    minimum width=3.25cm,minimum height=1.48cm,align=center,inner sep=6pt}
]
\definecolor{tailorange}{RGB}{203,92,47}
\definecolor{signalgray}{RGB}{190,194,199}

\begin{scope}[shift={(0,1.56)}]
  \foreach \idx in {0,...,17} {
    \pgfmathsetmacro{\x}{0.17*\idx}
    \node[dot,fill=signalgray!70] at (\x,0) {};
  }
  \foreach \idx in {18,...,23} {
    \pgfmathsetmacro{\x}{0.17*\idx}
    \node[taildot,fill=tailorange] at (\x,0) {};
  }
  \node[font=\sffamily\Large] at (3.48,-.58) {$n\tau$};
\end{scope}

\draw[channel] (4.30,1.56) .. controls (5.22,2.46) and (6.72,2.46) .. (8.20,1.79);
\node[font=\sffamily\Large,fill=white,inner sep=2pt] at (5.86,2.62)
  {$\displaystyle \frac{1}{\sqrt{n\tau}}$};

\draw[channel] (4.30,1.56) .. controls (5.22,.66) and (6.72,.66) .. (8.20,1.33);
\node[font=\sffamily\Large,fill=white,inner sep=2pt] at (5.86,.46)
  {$\displaystyle \frac{1}{\varepsilon n\tau}$};

\node[excessbox] at (10.22,1.56) {
  {\bfseries Private Excess CVaR}\\[6pt]
  $\displaystyle
  \frac{1}{\sqrt{n\tau}}
  +
  \frac{1}{\varepsilon n\tau}$
};
\end{tikzpicture}
\end{adjustbox}
\caption{\textbf{Error decomposition in private CVaR learning.} CVaR concentrates learning on the worst $\tau$-fraction of the sample, leaving only $n\tau$ informative tail observations. Ordinary tail-risk estimation contributes the statistical term $1/\sqrt{n\tau}$, while differential privacy adds a separate price of order $1/(\varepsilon n\tau)$. Thus privacy acts on the effective private tail sample size $\varepsilon n\tau$, not on the full sample size $n$.}
\label{fig:private-cvar-error-decomposition}
\end{figure}%
}

\paragraph{Asymptotic notation.}
For nonnegative quantities $U$ and $V$ depending on the problem parameters, $U=O(V)$ means that $U\le CV$ for a universal numerical constant $C<\infty$ over the stated parameter regime, and $U=\Omega(V)$ means that $U\ge cV$ for a universal numerical constant $c>0$. The notation $U=\Theta(V)$ is used when both bounds hold; $U\asymp V$ has the same meaning. The symbols $\lesssim$ and $\gtrsim$ denote the corresponding one-sided inequalities up to universal constants. Tilde notation, such as $\widetilde O(\cdot)$ and $\widetilde\Omega(\cdot)$, hides only polylogarithmic factors in the dimensionless parameters in force, such as $n,d,M,1/\tau,1/\varepsilon$, and $1/\delta$; it never hides additional polynomial dependence on these parameters or any dependence on scale parameters such as $B,G$, and $D$.

\section{Introduction}

Empirical risk minimization optimizes average performance. In heterogeneous populations, however, a small average loss can coexist with large loss on rare or poorly represented subpopulations; this phenomenon motivates robust and distributionally robust learning objectives for minority-group performance \citep{HashimotoSrivastavaNamkoongLiang2018}. A standard mathematical response is to replace expectation by a tail-risk functional. For a nonnegative loss $L$ and tail mass $\tau\in(0,1]$, define
\[
\rho_\tau(L)
:=
\cvar_{1-\tau}(L)
=
\inf_{\eta\in\R}\left\{\eta+\frac1\tau\E[(L-\eta)_+]\right\}.
\]
This is the average loss in the worst $\tau$ fraction of the distribution, with the variational formulation due to \citet{RockafellarUryasev2000}. The same functional is known as expected shortfall or average value-at-risk, and it is one of the canonical law-invariant coherent risk measures \citep{ArtznerDelbaenEberHeath1999,AcerbiTasche2002,Shapiro2013}.

The central question is what happens when the learner must also satisfy differential privacy. The question is not merely algorithmic. The samples that determine CVaR are precisely the high-loss tail samples. In applications, those records often correspond to rare, atypical, high-impact, or minority cases. Thus the privacy and robustness requirements collide at the same observations.

The privacy model is record-level central differential privacy \citep{DworkMcSherryNissimSmith2006,DworkRoth2014}. Given iid data $S=(Z_1,\ldots,Z_n)$ from $P$, a hypothesis class $\mathcal F$, and bounded losses $\ell(f;Z)\in[0,B]$, the population objective is
\[
\rho_{\tau,P}(f)
=
\inf_{\eta\in[0,B]}\left\{\eta+\frac1\tau\E_P[(\ell(f;Z)-\eta)_+]\right\}.
\]
The central quantity is the private minimax excess CVaR risk
\[
\mathfrak R^{\varepsilon,\delta}_{n,\tau}(\mathcal F)
:=
\inf_{\mathcal A\in(\varepsilon,\delta)\text{-}\DP}
\sup_P
\E\left[
\rho_{\tau,P}(\mathcal A(S))-
\inf_{f\in\mathcal F}\rho_{\tau,P}(f)
\right].
\]
The expectation is over both the sample and the learner's internal randomness.

\paragraph{Main message.}
Private CVaR learning decomposes into ordinary tail-risk statistics and a privacy price. This decomposition is closed completely for scalar estimation and finite classes. In convex Lipschitz learning, the same CVaR-specific privacy price persists through modular reductions to and from private stochastic convex optimization. Thus the results identify the inverse-tail-mass factor $1/\tau$ as the intrinsic privacy price of tail-risk learning:
\[
\text{private CVaR excess}
\quad=\quad
\text{ordinary CVaR statistical error}
\quad+\quad
\text{privacy price}.
\]
The new privacy phenomenon is entirely in the second term. The correct private sample size is not $n$. It is $n\tau$. Equivalently, the privacy price scales as
\[
\frac{\text{complexity}}{\varepsilon n\tau},
\]
not as $\text{complexity}/(\varepsilon n)$. This is the privacy price of tail-risk learning.
In the CVaR dual representation, the relevant record leverage is exactly $1/\tau$, which is why the private sample size becomes $n\tau$.
Figure~\ref{fig:private-cvar-error-decomposition} summarizes this decomposition.

\paragraph{Why the factor $1/\tau$ is unavoidable.}
The empirical CVaR objective is
\[
\widehat\rho_{\tau,S}(f)
=
\inf_{\eta\in[0,B]}\left\{\eta+\frac1{n\tau}\sum_{i=1}^n(\ell(f;Z_i)-\eta)_+\right\}.
\]
Changing one record changes the lifted empirical objective by at most $B/(n\tau)$, and the minimized empirical CVaR by $B\min\{1,1/(n\tau)\}$. This is exactly the sensitivity to which differential privacy calibrates noise. The lower bounds show that this sensitivity calculation is not merely a proof artifact: for scalar estimation, for finite-class learning, and for convex learning via tail embedding, an inverse-tail-mass privacy penalty is unavoidable.
Figure~\ref{fig:privacy-tail-frontier} traces how the tail focus, one-record sensitivity, and private tail-sample scale fit together.

\begin{figure}[t!]
\centering
\begin{adjustbox}{max width=\textwidth}
\begin{tikzpicture}[
  x=1cm,y=1cm,
  font=\sffamily\small,
  >=Latex,
  dot/.style={circle,inner sep=0pt,minimum size=2.2pt},
  taildot/.style={circle,inner sep=0pt,minimum size=3.2pt},
  arrow/.style={->,line width=0.5pt,draw=black!42}
]
\definecolor{tailorange}{RGB}{203,92,47}
\definecolor{paletail}{RGB}{252,235,231}
\definecolor{learnblue}{RGB}{222,238,238}
\definecolor{navygate}{RGB}{32,45,62}
\definecolor{softgray}{RGB}{245,246,248}
\definecolor{signalgray}{RGB}{186,190,196}

\begin{scope}[shift={(0,0)}]
  \draw[rounded corners=6pt,fill=softgray,draw=black!12] (0,0.28) rectangle (2.8,2.58);
  \node[font=\sffamily\bfseries] at (1.4,2.94) {$n$ records};
  \foreach \x/\y in {
    .25/.63,.55/.80,.85/.56,1.15/.73,1.48/.58,1.82/.83,2.20/.60,2.52/.90,
    .35/1.13,.72/1.33,1.05/1.10,1.38/1.36,1.75/1.16,2.08/1.40,2.42/1.20,
    .28/1.73,.65/1.90,.98/1.63,1.32/1.86,1.68/1.66,2.05/1.93,
    .48/2.22,.88/2.12,1.25/2.30,1.62/2.10}
    \node[dot,fill=black!23] at (\x,\y) {};
  \foreach \x/\y in {2.02/2.10,2.27/2.22,2.50/2.36,2.28/1.90,2.58/2.00}
    \node[taildot,fill=tailorange] at (\x,\y) {};
\end{scope}

\draw[arrow] (3.05,1.42) -- (3.75,1.42);

\begin{scope}[shift={(3.95,0)}]
  \draw[rounded corners=6pt,fill=white,draw=black!10] (0,0.28) rectangle (3.55,2.58);
  \draw[rounded corners=9pt,fill=tailorange!4,draw=tailorange!65,line width=0.75pt]
    (.70,1.35) rectangle (2.85,2.28);
  \foreach \x/\y in {.98/1.63,1.32/1.93,1.70/1.72,2.05/2.05,2.42/1.82}
    \node[taildot,fill=tailorange] at (\x,\y) {};
  \node[font=\sffamily\bfseries,align=center] at (1.78,.88) {worst $\tau$ tail};
  \node[font=\sffamily\scriptsize,align=center] at (1.78,.54) {$\approx n\tau$ informative records};
\end{scope}

\draw[arrow] (7.70,1.42) -- (8.35,1.42);

\begin{scope}[shift={(8.55,0)}]
  \node[font=\sffamily\bfseries] at (1.18,2.94) {DP protection};
  \foreach \x/\y in {-.02/1.62,.25/1.25}
    \node[dot,fill=signalgray!55] at (\x,\y) {};
  \foreach \x/\y in {.05/1.88,.34/1.58,.18/1.34,.48/1.15}
    \node[taildot,fill=tailorange!85] at (\x,\y) {};
  \draw[fill=navygate!96,draw=navygate,rounded corners=2pt]
    (.80,.72) -- (1.42,.95) -- (1.42,1.95) -- (.80,2.18) -- cycle;
  \draw[draw=navygate,line width=0.85pt] (.94,2.18) arc[start angle=180,end angle=0,radius=.22] -- (1.38,2.18);
  \node[align=center,font=\sffamily\scriptsize] at (1.08,.34) {$\Delta_{\mathrm{CVaR}}\sim B/(n\tau)$};
  \draw[arrow] (1.60,1.42) -- (2.45,1.42);
  \foreach \x/\y in {2.65/1.56,2.95/1.40}
    \node[dot,fill=tailorange!45] at (\x,\y) {};
  \foreach \x/\y in {2.80/1.22,3.12/1.70}
    \node[dot,fill=signalgray!40] at (\x,\y) {};
  \node[align=center,font=\sffamily\tiny,text width=2.00cm] at (2.95,2.12) {private signal scale\\[-1pt]$\sim \varepsilon n\tau$};
\end{scope}

\draw[arrow] (11.90,1.42) -- (12.45,1.42);

\begin{scope}[shift={(12.62,0)}]
  \draw[rounded corners=6pt,draw=black!10,fill=white] (0,0.28) rectangle (4.05,2.58);
  \fill[paletail] (.32,.78) rectangle (1.95,2.05);
  \fill[learnblue] (1.95,.78) rectangle (3.72,2.05);
  \draw[black!42,line width=.45pt] (.32,.78) rectangle (3.72,2.05);
  \draw[black!58,line width=.75pt] (1.95,.66) -- (1.95,2.17);
  \draw[->,line width=.55pt,draw=black!65] (.32,.78) -- (3.78,.78);
  \node[font=\sffamily\scriptsize] at (2.02,2.27) {$\varepsilon n\tau \gtrsim \text{complexity}$};
  \node[align=center,font=\sffamily\scriptsize] at (1.12,1.43) {privacy-\\dominated};
  \node[align=center,font=\sffamily\scriptsize] at (2.86,1.43) {learnable};
  \node[font=\sffamily\scriptsize] at (2.04,.43) {$\varepsilon n\tau$};
\end{scope}
\end{tikzpicture}
\end{adjustbox}
\caption{\textbf{The Privacy--Tail-Risk Frontier in Private CVaR Learning.} CVaR concentrates the objective on the worst $\tau$-fraction of the loss distribution, so only about $n\tau$ records carry tail information. Record-level differential privacy must protect each high-loss tail record, producing sensitivity of order $B/(n\tau)$. Consequently, the privacy-relevant sample size is $\varepsilon n\tau$, not $\varepsilon n$. Nontrivial private tail-risk learning requires this effective private tail sample size to dominate the relevant model complexity.}
\label{fig:privacy-tail-frontier}
\end{figure}

\subsection{Contributions}

The main theoretical contributions are as follows.

\begin{enumerate}[leftmargin=2em]
\item A sharp sensitivity lemma is proved for empirical CVaR and its Rockafellar-Uryasev lifted objective: for bounded losses, minimized empirical CVaR has exact one-record sensitivity $B\min\{1,1/(n\tau)\}$, while the lifted objective at fixed $(f,\eta)$ has sensitivity $B/(n\tau)$.

\item The complete scalar minimax rate is established. Estimating $\rho_\tau(Z)$ privately for $Z\in[0,B]$ has rate
\[
\Theta\left(
B\min\left\{
1,
\frac1{\sqrt{n\tau}}
+
\frac1{\varepsilon n\tau}
\right\}
\right),
\]
under pure differential privacy, with the lower bound extending to approximate differential privacy whenever $\delta$ is small enough for group privacy to be informative.

\item The complete finite-class minimax rate is established under pure DP and under approximate DP in the stated small-$\delta$ regime. For $|\mathcal F|=M$, ordinary CVaR uniform convergence contributes $B\sqrt{\log(2M)/(n\tau)}$, while privacy contributes $B\log(2M)/(\varepsilon n\tau)$. Together they yield
\[
\Theta\left(
B\min\left\{
1,
\sqrt{\frac{\log(2M)}{n\tau}}
+
\frac{\log(2M)}{\varepsilon n\tau}
\right\}
\right),
\]
exhibiting the effective sample size $n\tau$ and the effective private sample size $\varepsilon n\tau$ side by side.

\item A modular approximate-DP convex upper bound is proved using a rescaled lifted convex problem and the standard Euclidean private-SCO population-risk interface. For $G$-Lipschitz convex losses over a Euclidean set of diameter $D$, the private term is
\[
\widetilde O\left(\frac{(GD+B)\sqrt{d\log(1/\delta)}}{\varepsilon n\tau}\right).
\]

\item A tail-sample transfer theorem identifies the canonical hard subproblem inside CVaR learning: ordinary expected-risk learning with $m$ samples is embedded exactly inside CVaR learning with $n$ samples and $m\asymp n\tau$ informative tail observations. Combining this embedding with the Euclidean approximate-DP SCO lower interface yields dimension-dependent CVaR lower bounds, up to imported SCO logarithmic factors, of order
\[
\widetilde\Omega\left(\min\{B,GD\}\min\left\{1,\frac1{\sqrt{n\tau}}+\frac{\sqrt d}{\varepsilon n\tau}\right\}\right)
\]
for a natural bounded convex Lipschitz CVaR subclass.
\end{enumerate}

The resulting phase transition is simple: nontrivial private tail-risk learning requires
\[
\varepsilon n\tau
\gg
\text{model complexity}.
\]
In Euclidean convex learning this complexity is of order $\sqrt d$, up to logarithmic factors.

\paragraph{Pure and approximate DP.}
The minimax equalities are stated for pure DP. For approximate DP, the same lower-bound mechanisms extend in the small-$\delta$ regimes where group privacy remains informative; these regimes are stated explicitly rather than hidden in asymptotic notation.
The convex Lipschitz results are modular transfer results in the standard approximate-DP Euclidean SCO regime; the scalar and finite-class minimax equalities are the complete pure-DP characterizations.

\begin{table}[tbp]
\centering
\footnotesize
\setlength{\tabcolsep}{5pt}
\renewcommand{\arraystretch}{1.16}
\caption{Summary of the main guarantees. The scalar, finite-class, and convex rows give theorem-level minimax evidence for the $1/\tau$ privacy frontier. The finite-class row is the closed model where the ordinary statistical term and the privacy price are both characterized. The coherent-envelope row records only the general sensitivity mechanism; the minimax theory is proved for CVaR.}
\begin{tabular}{>{\raggedright\arraybackslash}p{0.18\linewidth}
                >{\raggedright\arraybackslash}p{0.53\linewidth}
                >{\raggedright\arraybackslash}p{0.20\linewidth}}
\toprule
Setting & Main guarantee & Takeaway \\
\midrule
Scalar CVaR estimation &
Complete rate under pure DP; approximate-DP lower bound under small $\delta$:
$\Theta(B\min\{1,1/\sqrt{n\tau}+1/(\varepsilon n\tau)\})$. &
Closed scalar decomposition. \\
\addlinespace
Finite class, $|\mathcal F|=M$ &
Complete rate under pure DP; approximate DP under small $\delta$:
$\Theta(B\min\{1,\sqrt{\log(2M)/(n\tau)}+\log(2M)/(\varepsilon n\tau)\})$. &
Ordinary tail sample size $n\tau$ plus private tail sample size $\varepsilon n\tau$. \\
\addlinespace
Convex Lipschitz learning &
Upper privacy term: $\widetilde O((GD+B)\sqrt d/(\varepsilon n\tau))$. Tail embedding gives
$\widetilde\Omega(R_0\min\{1,1/\sqrt{n\tau}+\sqrt d/(\varepsilon n\tau)\})$. &
Privacy dependence sharp in $(\varepsilon,n,\tau,d)$; scale-sharp under $B\asymp GD$. \\
\addlinespace
Envelope-bounded coherent risks &
One-record empirical sensitivity is at most $B\min\{1,\kappa/n\}$. No general coherent-risk minimax theorem is claimed. &
Sensitivity extension only; CVaR is the resolved case $\kappa=1/\tau$. \\
\bottomrule
\end{tabular}
\end{table}

\subsection{Related work}

\paragraph{CVaR and coherent risk.}
Coherent risk measures were axiomatized by \citet{ArtznerDelbaenEberHeath1999}. The variational representation used throughout the analysis is due to \citet{RockafellarUryasev2000}, who also developed CVaR for general loss distributions \citep{RockafellarUryasev2002}. Expected shortfall/CVaR is coherent under the standard definitions studied by \citet{AcerbiTasche2002}. Kusuoka-type representations show that law-invariant coherent risk measures can be represented using average value-at-risk functionals \citep{Shapiro2013}. \citet{RuszczynskiShapiro2006} developed optimization theory for convex risk functions.

\paragraph{CVaR and robust learning.}
CVaR-style and DRO-style losses have been used to improve worst-tail or worst-subpopulation performance. CVaR-based statistical learning formulations go back at least to \citet{TakedaKanamori2009}. \citet{HashimotoSrivastavaNamkoongLiang2018} use distributionally robust objectives to mitigate minority-group underperformance without explicit demographic labels. \citet{DuchiHashimotoNamkoong2023} propose robust losses for latent covariate mixtures and explicitly motivate the method by failures of average-loss minimization on heterogeneous subpopulations. PAC-Bayesian and statistical learning analyses of CVaR have also appeared \citep{Mhammedi2020}. A recent preprint develops sharp nonprivate generalization and robustness results for CVaR-ERM under heavy-tailed and contaminated data \citep{Mulumudi2026}. These works do not characterize the differential privacy price of CVaR learning.

\paragraph{Differential privacy and private learning.}
Differential privacy was introduced through sensitivity-calibrated randomized mechanisms by \citet{DworkMcSherryNissimSmith2006}; for general background on differential privacy, group privacy, and post-processing, see \citet{DworkRoth2014}. The finite-class upper bound uses the exponential mechanism of \citet{McSherryTalwar2007}. Private convex ERM was studied by \citet{ChaudhuriMonteleoniSarwate2011} and later by \citet{BassilySmithThakurta2014}. \citet{BassilyFeldmanTalwarThakurta2019} established near-optimal population-risk rates for private stochastic convex optimization. This theory is used as a primitive after reducing CVaR learning to a lifted convex problem.

\paragraph{Nearby privacy-robustness work.}
Recent work studies differentially private worst-group risk minimization for known finite groups \citep{ZhouBassily2024}. CVaR is different: the high-loss tail is unknown, endogenous, and predictor-dependent. Differentially private quantile estimation and quantile-loss optimization are also close but distinct \citep{GillenwaterJosephKulesza2021,ChenChua2023}. The lower bounds include cases where the VaR threshold is known; the difficulty is private tail averaging, not merely private quantile estimation. Finally, a recent preprint on DP-DRO studies private algorithms for DRO formulations, including divergence-based models, but it does not settle the minimax population excess-risk frontier for standard unregularized CVaR-ERM \citep{XuEtAl2026}.

\begin{table}[H]
\centering
\footnotesize
\setlength{\tabcolsep}{4pt}
\renewcommand{\arraystretch}{1.12}
\caption{Relationship to nearby lines of work. The distinguishing feature here is private learning of an endogenous CVaR tail average at the effective tail sample size $n\tau$.}
\begin{tabular}{>{\raggedright\arraybackslash}p{0.19\linewidth}
                >{\raggedright\arraybackslash}p{0.19\linewidth}
                >{\centering\arraybackslash}p{0.10\linewidth}
                >{\centering\arraybackslash}p{0.15\linewidth}
                >{\raggedright\arraybackslash}p{0.26\linewidth}}
\toprule
Line of work & Objective & Privacy & Endogenous tail & Minimax $n\tau$ frontier \\
\midrule
Private SCO & Expectation & Yes & No & No tail-risk sample-size effect \\
Private quantiles & VaR/quantile & Yes & Partly & No private tail-average term \\
Worst-group risk & Known groups & Yes & No & Group sizes are fixed externally \\
DP-DRO & Robust objective & Yes & Varies & Not unregularized CVaR minimax \\
Private CVaR learning & CVaR tail average & Yes & Yes & Yes: scalar and finite-class complete, convex modular \\
\bottomrule
\end{tabular}
\end{table}

\section{Preliminaries}

\subsection{Differential privacy}

Datasets $S,S'\in\mathcal Z^n$ are neighboring, denoted $S\sim S'$, if they differ in at most one coordinate.

\begin{definition}[Differential privacy]
A randomized algorithm $\mathcal A:\mathcal Z^n\to\mathcal O$ is $(\varepsilon,\delta)$-differentially private if for all neighboring $S,S'$ and all measurable $E\subseteq\mathcal O$,
\[
\Pp(\mathcal A(S)\in E)
\le
 e^\varepsilon \Pp(\mathcal A(S')\in E)+\delta.
\]
If $\delta=0$, the algorithm is $\varepsilon$-differentially private.
\end{definition}

The following standard group-privacy consequence is used \citep{DworkRoth2014}.

\begin{lemma}[Group privacy]
If $\mathcal A$ is $(\varepsilon,\delta)$-DP and $S,S'$ differ in at most $k$ entries, then for all events $E$,
\[
\Pp(\mathcal A(S')\in E)
\le
 e^{k\varepsilon}\Pp(\mathcal A(S)\in E)
 +
 \delta\sum_{j=0}^{k-1}e^{j\varepsilon}.
\]
Equivalently,
\[
\Pp(\mathcal A(S)\in E)
\ge
 e^{-k\varepsilon}\Pp(\mathcal A(S')\in E)-k\delta.
\]
\end{lemma}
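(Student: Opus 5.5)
The plan is to argue by induction along a path of single-record substitutions. Since $S$ and $S'$ differ in at most $k$ coordinates, I will build a chain $S=S_0,S_1,\ldots,S_k=S'$ in which each $S_{j+1}$ is obtained from $S_j$ by replacing one differing coordinate of $S$ with the corresponding coordinate of $S'$. If fewer than $k$ coordinates differ, I repeat datasets in the chain, since a dataset is trivially a neighbor of itself. Each consecutive pair satisfies $S_j\sim S_{j+1}$, so the DP inequality applies in both directions along the chain.

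For the first inequality, fix an event $E$ and write $p_j=\Pp(\mathcal A(S_j)\in E)$. I will prove by induction on $m$ that
\[
p_m\le e^{m\varepsilon}p_0+\delta\sum_{j=0}^{m-1}e^{j\varepsilon}.
\]
The base case $m=0$ is trivial. For the step, apply DP to the neighbors $S_{m+1},S_m$ to get $p_{m+1}\le e^{\varepsilon}p_m+\delta$. Substituting the inductive hypothesis gives
\[
e^{(m+1)\varepsilon}p_0+\delta\sum_{j=1}^{m}e^{j\varepsilon}+\delta
=e^{(m+1)\varepsilon}p_0+\delta\sum_{j=0}^{m}e^{j\varepsilon},
\]
which is the claim at $m+1$. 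Taking $m=k$ yields the first display.

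The second display is the step that needs care, because simply rearranging the first bound gives the additive term $e^{-k\varepsilon}\delta\sum_{j<k} e^{j\varepsilon}$ rather than $k\delta$. I will therefore run a separate induction in the reverse direction. Applying DP to the pair $(S_j,S_{j+1})$ gives $p_{j+1}\le e^{\varepsilon}p_j+\delta$, that is, $p_j\ge e^{-\varepsilon}p_{j+1}-e^{-\varepsilon}\delta\ge e^{-\varepsilon}p_{j+1}-\delta$. I then claim
\[
p_0\ge e^{-m\varepsilon}p_m-m\delta,
\]
again by induction on $m$. The base case is trivial. For the step, combine $p_m\ge e^{-\varepsilon}p_{m+1}-\delta$ with the hypothesis:
\[
p_0\ge e^{-m\varepsilon}\bigl(e^{-\varepsilon}p_{m+1}-\delta\bigr)-m\delta\ge e^{-(m+1)\varepsilon}p_{m+1}-(m+1)\delta,
\]
where the last inequality uses $e^{-m\varepsilon}\le 1$. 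Setting $m=k$ gives the second display.

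Both displays are therefore valid, with the looser additive term $k\delta$ appearing in the second. The only real subtlety is the handling of the $\delta$ terms: the second bound has to come from its own chained induction and cannot be read off from the first.
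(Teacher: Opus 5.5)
Your proof is correct and uses the same chain argument as the paper, which iterates the one-step DP inequality along $k$ neighboring datasets. One correction: the second display does follow directly from the first, because the rearranged additive term satisfies $e^{-k\varepsilon}\delta\sum_{j=0}^{k-1}e^{j\varepsilon}=\delta\sum_{j=0}^{k-1}e^{(j-k)\varepsilon}\le k\delta$ (the paper's one-line step), so your separate reverse induction is valid but unnecessary, and your claim that this bound ``cannot be read off from the first'' is mistaken.
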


\begin{proof}
Iterate the one-step DP inequality along a path of $k$ neighboring datasets. The second inequality follows by rearranging and using $e^{(j-k)\varepsilon}\le1$.
\end{proof}

\subsection{CVaR}

For an integrable real random variable $X$ and $\tau\in(0,1]$, define
\[
\rho_\tau(X)
=
\inf_{\eta\in\R}\left\{\eta+\frac1\tau\E[(X-\eta)_+]\right\}.
\]
When $X$ is a loss, this is $\cvar_{1-\tau}(X)$, the average loss over the upper $\tau$ tail \citep{RockafellarUryasev2000,RockafellarUryasev2002}. If $X\in[0,B]$, the infimum may be restricted to $\eta\in[0,B]$.

A useful dual representation is
\[
\rho_\tau(X)
=
\sup\left\{\E[qX]:0\le q\le\frac1\tau,\;\E q=1\right\}.
\]
This representation makes clear that CVaR is an adversarially reweighted expectation whose density envelope is $1/\tau$ \citep{RockafellarUryasev2002,RuszczynskiShapiro2006}.

\subsection{Learning objective}

Let $\mathcal F$ be a class of predictors and let $\ell:\mathcal F\times\mathcal Z\to[0,B]$ be a loss. For $f\in\mathcal F$,
\[
\rho_{\tau,P}(f)
:=
\rho_\tau(\ell(f;Z)),\qquad Z\sim P.
\]
The empirical counterpart is
\[
\widehat\rho_{\tau,S}(f)
=
\inf_{\eta\in[0,B]}\left\{
\eta+\frac1{n\tau}\sum_{i=1}^n(\ell(f;Z_i)-\eta)_+
\right\}.
\]
For convex learning, write $f=w\in W\subset\R^d$ and define the lifted population and empirical objectives
\begin{align*}
\phi_P(w,\eta)
&=
\eta+\frac1\tau\E_P[(\ell(w;Z)-\eta)_+],\\
\widehat\phi_S(w,\eta)
&=
\eta+\frac1{n\tau}\sum_{i=1}^n(\ell(w;Z_i)-\eta)_+.
\end{align*}
Then
\[
\rho_{\tau,P}(w)=\inf_{\eta\in[0,B]}\phi_P(w,\eta),
\qquad
\inf_w\rho_{\tau,P}(w)=\inf_{w,\eta}\phi_P(w,\eta).
\]

\section{Sensitivity and scalar CVaR estimation}

\subsection{Sensitivity of empirical CVaR}

\begin{lemma}[Exact one-record sensitivity]
Assume $\ell(f;z)\in[0,B]$ for all $f,z$. If $S\sim S'$, then
\[
\sup_f\left|\widehat\rho_{\tau,S}(f)-\widehat\rho_{\tau,S'}(f)\right|
\le
B\min\left\{1,\frac1{n\tau}\right\}.
\]
This bound is tight for the minimized empirical CVaR functional. Moreover, the lifted objective at fixed $(f,\eta)$ has sensitivity
\[
\sup_{f,\eta}\left|\widehat\phi_S(f,\eta)-\widehat\phi_{S'}(f,\eta)\right|
\le
\frac{B}{n\tau}.
\]
\end{lemma}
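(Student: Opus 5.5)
I will prove the lifted-objective bound first, since the minimized bound follows from it. Fix $(f,\eta)$ with $\eta\in[0,B]$, and let $S,S'$ differ only in coordinate $i$. Then
\[
\widehat\phi_S(f,\eta)-\widehat\phi_{S'}(f,\eta)=\frac1{n\tau}\bigl[(\ell(f;Z_i)-\eta)_+-(\ell(f;Z_i')-\eta)_+\bigr].
\]
Both bracketed terms lie in $[0,B-\eta]\subseteq[0,B]$, so the difference is at most $B/(n\tau)$ in absolute value. This gives the lifted sensitivity bound uniformly over $(f,\eta)$ with $\eta\in[0,B]$, which is the range used in the definition of $\widehat\rho_{\tau,S}$.

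For the minimized functional, I will combine two separate bounds. First, taking the infimum over $\eta\in[0,B]$ of functions that differ pointwise by at most $B/(n\tau)$ changes the infimum by at most $B/(n\tau)$, since $|\inf g-\inf h|\le\sup|g-h|$. Second, I will show that $\widehat\rho_{\tau,S}(f)\in[0,B]$ for every $S$:
\begin{itemize}
\item the choice $\eta=B$ gives the value $B$, so the infimum is at most $B$;
\item every term $\eta+(\cdot)_+/(n\tau)$ with $\eta\ge0$ is nonnegative, so the infimum is at least $0$.
\end{itemize}
Two numbers in $[0,B]$ differ by at most $B$. Together these give $B\min\{1,1/(n\tau)\}$ uniformly in $f$.

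Tightness needs separate constructions in the two regimes, and this is the step needing care. In both I take $\ell(f;z)=z$ with $z\in[0,B]$.

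\emph{Case $n\tau\ge1$.} Let $S=(0,\dots,0)$, so $\widehat\rho_{\tau,S}=0$. Replace one record by $B$ to get $S'$. Then
\[
\widehat\rho_{\tau,S'}=\inf_{\eta\in[0,B]}\Bigl\{\eta+\frac{B-\eta}{n\tau}\Bigr\}.
\]
The objective is affine in $\eta$ with slope $1-1/(n\tau)\ge0$, so the infimum is attained at $\eta=0$ and equals $B/(n\tau)$. The gap is exactly $B/(n\tau)$.

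\emph{Case $n\tau<1$.} With the same $S,S'$, the slope $1-1/(n\tau)$ is negative, so the infimum is attained at $\eta=B$ and equals $B$. The gap is $B$, which matches the bound.

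The only subtle point is the regime split, which comes from the clipping $\eta\le B$. In the regime $n\tau<1$, the unconstrained infimum over $\eta\in\R$ would diverge to $-\infty$. Restricting to $[0,B]$ is therefore what produces the $\min\{1,\cdot\}$, and I will remark that the lifted bound is stated for $\eta\in[0,B]$.
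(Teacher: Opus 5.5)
Your proof is correct, and it reaches the minimized bound by a different route than the paper. The paper works in the dual, $\widehat\rho_\tau(x)=\sup\{\frac1n\sum_i q_ix_i:0\le q_i\le 1/\tau,\ \frac1n\sum_iq_i=1\}$. Changing coordinate $j$ moves every feasible weighted average by at most $q_jB/n\le B/(n\tau)$, and the supremum over the common feasible set inherits this. The paper checks the lifted bound separately, and reads off the tightness value $\frac Bn\min\{n,1/\tau\}$ by placing the largest feasible weight on the one nonzero coordinate.

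You stay in the primal instead. You prove the lifted bound first, push it through $\inf_{\eta\in[0,B]}$ via $|\inf g-\inf h|\le\sup|g-h|$, and evaluate the same tightness instance from the slope of $\eta+(B-\eta)/(n\tau)$ on $[0,B]$. Your version works directly from the paper's definition of $\widehat\rho_{\tau,S}$ with $\eta$ restricted to $[0,B]$. It therefore needs neither LP duality nor the fact that the restricted primal coincides with the dual supremum, and it makes the minimized sensitivity a corollary of the lifted one. The paper's version makes the per-record weight cap $1/\tau$ visibly the source of the sensitivity. It also transfers verbatim to envelope-bounded coherent risks (Proposition 7.1), where there is no scalar threshold to minimize over.

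Your closing remark, however, is false and should be dropped. For $S'=(B,0,\ldots,0)$ the infimum over all $\eta\in\R$ does not diverge. For $\eta\ge B$ the term $(B-\eta)_+$ vanishes and the objective equals $\eta\ge B$. For $\eta<0$ it equals $\eta(1-1/\tau)+B/(n\tau)\ge B/(n\tau)$. So the unconstrained value is also $B$, attained at the kink $\eta=B$; you extrapolated the affine piece past that kink.

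More generally, for data in $[0,B]$ the restriction $\eta\in[0,B]$ never changes the value, as the paper notes in its preliminaries. So the cap $\min\{1,\cdot\}$ is not produced by clipping $\eta$. It comes from the CVaR of $[0,B]$-valued data lying in $[0,B]$, which is exactly your own range argument. In dual terms, it comes from the normalization $\frac1n\sum_iq_i=1$, which caps a single record's weight $q_j/n$ at $\min\{1,1/(n\tau)\}$. No step of your proof uses the remark, so the argument itself is unaffected.
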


\begin{proof}
For a vector $x=(x_1,\ldots,x_n)\in[0,B]^n$, empirical CVaR has the dual form
\[
\widehat\rho_\tau(x)
=
\sup\left\{
\frac1n\sum_{i=1}^n q_i x_i:
0\le q_i\le \frac1\tau,\ 
\frac1n\sum_{i=1}^n q_i=1
\right\}.
\]
If $x,x'$ differ only in coordinate $j$, then every feasible $q$ satisfies
\[
\left|
\frac1n\sum_iq_ix_i-\frac1n\sum_iq_ix_i'
\right|
=
\frac{q_j}{n}|x_j-x_j'|
\le
\frac{B}{n\tau}.
\]
Taking suprema over the same feasible set gives
\[
|\widehat\rho_\tau(x)-\widehat\rho_\tau(x')|
\le
\frac{B}{n\tau}.
\]
Since both empirical CVaR values lie in $[0,B]$, also
\[
|\widehat\rho_\tau(x)-\widehat\rho_\tau(x')|\le B.
\]
The exact upper bound follows. It is attained by $x=(0,\ldots,0)$ and $x'=(B,0,\ldots,0)$, for which
\[
\widehat\rho_\tau(x)=0,
\qquad
\widehat\rho_\tau(x')
=
\frac{B}{n}\min\left\{n,\frac1\tau\right\}
=
B\min\left\{1,\frac1{n\tau}\right\}.
\]

For fixed $(f,\eta)$ and neighboring samples $S,S'$ differing only in the last coordinate,
\[
\left|\widehat\phi_S(f,\eta)-\widehat\phi_{S'}(f,\eta)\right|
\le
\frac1{n\tau}\left|(\ell(f;Z_n)-\eta)_+-(\ell(f;Z'_n)-\eta)_+\right|
\le
\frac{B}{n\tau},
\]
because $\eta\in[0,B]$ and both positive parts lie in $[0,B]$.
\end{proof}

This lemma identifies both scales. Minimized empirical CVaR has the exact capped sensitivity above, while algorithms acting on the lifted objective still face the $B/(n\tau)$ scale pointwise in $(f,\eta)$.

\subsection{Scalar lower bound}

The scalar problem is a private statistical estimation problem in the sense of optimal-rate private estimation \citep{Smith2011}. The next result proves that this scale is information-theoretically unavoidable even when there is no learning problem and the VaR threshold is known.

\begin{theorem}[Private scalar CVaR estimation lower bound]
\label{thm:scalar-privacy-lower}
There are universal constants $c,c_1,c_2>0$ such that the following holds. Let $\tau\in(0,1]$, $B>0$, and $\varepsilon\in(0,1]$. For any $\varepsilon$-DP estimator $\widehat r:\mathcal Z^n\to\R$,
\[
\sup_{P:Z\in[0,B]}
\E_{S\sim P^n}\left|\widehat r(S)-\rho_{\tau,P}(Z)\right|
\ge
cB\min\left\{1,\frac1{\varepsilon n\tau}\right\}.
\]
The same lower bound holds for $(\varepsilon,\delta)$-DP estimators whenever $\delta\le c_2/k$, where $k=\lceil4np\rceil$ and $p=c_1\min\{\tau,1/(\varepsilon n)\}$.
\end{theorem}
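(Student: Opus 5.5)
We will prove the lower bound by a private two-point (Le Cam) argument built on the coupling/group-privacy technique. The construction is as follows. Set $p=c_1\min\{\tau,1/(\varepsilon n)\}\le\tau$. Let $P_0$ be the point mass at $0$, and let $P_1$ put mass $p$ on $B$ and mass $1-p$ on $0$.

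First compute the two CVaR values. Under $P_0$ we have $\rho_{\tau,P_0}(Z)=0$. Under $P_1$, since $p\le\tau$, the upper $\tau$-tail contains all the mass at $B$, so $\rho_{\tau,P_1}(Z)=Bp/\tau$. This also follows from the dual representation by taking $q=1/\tau$ on the atom at $B$, which is feasible because $p/\tau\le 1$. The separation is therefore
\[
\Delta:=Bp/\tau=c_1B\min\{1,1/(\varepsilon n\tau)\}.
\]
This already has the target form, and the VaR threshold is in effect known.

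Next, couple the samples. Draw $S_1\sim P_1^n$ by setting $S_1=S_0$ and independently replacing each coordinate by $B$ with probability $p$, where $S_0\equiv(0,\dots,0)$. The Hamming distance $K$ between $S_0$ and $S_1$ is $\mathrm{Bin}(n,p)$, with mean $np\le c_1/\varepsilon$. By Markov's inequality, $\Pp(K\le k)\ge 3/4$ for $k=\lceil 4np\rceil$.

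Now suppose the worst-case risk is below $\Delta/8$ for both distributions. Let $E=\{\widehat r<\Delta/2\}$. By Markov's inequality, $\Pp(\widehat r(S_0)\in E)\ge 3/4$ and $\Pp(\widehat r(S_1)\in E)\le 1/4$. Condition on a realization with $K\le k$ and apply the Group privacy lemma:
\[
\Pp(\widehat r(S_1)\in E\mid S_1)\ge e^{-k\varepsilon}\Pp(\widehat r(S_0)\in E)-k\delta.
\]
Averaging over $S_1$ gives
\[
\Pp(\widehat r(S_1)\in E)\ge \tfrac34\bigl(e^{-k\varepsilon}\tfrac34-k\delta\bigr).
\]
We need $k\varepsilon$ bounded by a constant. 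In the regime $4np\ge1$ we have $k\varepsilon\le 8np\varepsilon\le 8c_1$. If instead $4np<1$, then $k=1$ and $\varepsilon\le1$. Either way $e^{-k\varepsilon}\ge e^{-1-8c_1}$. With $c_1$ small and $\delta\le c_2/k$ for small $c_2$ (or $\delta=0$ in the pure case), the right-hand side exceeds $1/4$, which is a contradiction. Hence $\max_{j}\E|\widehat r(S_j)-\rho_{\tau,P_j}|\ge \Delta/8$, and we can take $c=c_1/8$.

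The main obstacle is the bookkeeping in the group-privacy step. The Hamming distance is random, so we must condition on the coupling. We must also check that $k\varepsilon=O(1)$ holds in both regimes, including the $k=1$ case, which is where the hypothesis $\varepsilon\le1$ is needed. Finally, the $\delta$-term must stay below a constant, which is exactly the stated condition $\delta\le c_2/k$. Everything else is a direct CVaR calculation.
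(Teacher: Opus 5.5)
Your construction matches the paper's proof: the same two points $P_0=\delta_0$ versus an atom of mass $p=c_1\min\{\tau,1/(\varepsilon n)\}$ at $B$, the same gap $\Delta=Bp/\tau$, Markov applied both to the estimator and to $K\sim\mathrm{Bin}(n,p)$, and group privacy over the at most $k=\lceil 4np\rceil$ differing entries. The final numerical step, however, does not close with your constants. With error threshold $\Delta/8$ you get $\Pp(\widehat r(S_0)\in E)\ge 3/4$ and $\Pp(\widehat r(S_1)\in E)\le 1/4$. So in the pure case you need $\frac{9}{16}e^{-k\varepsilon}>\frac14$, i.e.\ $k\varepsilon<\log(9/4)\approx 0.81$.

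Your bound $e^{-k\varepsilon}\ge e^{-1-8c_1}$ only yields $\frac{9}{16}e^{-1-8c_1}\le 0.21<\frac14$, and the bad case does occur. For $\varepsilon=1$ and $c_1<1/4$ one has $np\le c_1$, hence $k=1$ and $k\varepsilon=1$. So the claim ``the right-hand side exceeds $1/4$'' is false in exactly the $k=1$ regime you flagged as delicate. Shrinking $c_1$ or $c_2$ in your displayed inequality does not help, because with $k=1$ the factor $e^{-k\varepsilon}=e^{-\varepsilon}$ does not depend on them.

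The repair is cheap, and there are two ways to do it.

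\begin{enumerate}
\item Calibrate the estimator Markov step more tightly, as the paper does. Assume error at most $\Delta/16$, so that $\Pp(\widehat r(S_0)\in E)\ge 7/8$ and $\Pp(\widehat r(S_1)\in E)\le 1/8$. Then $\frac34\cdot\frac78\,e^{-1-8c_1}\approx 0.24\,e^{-8c_1}>\frac18$ for small $c_1$, with room left for a $k\delta\le c_2$ term. This gives $c=c_1/16$.
\item Handle the case $4np<1$ without privacy. There $\Pp(K=0)=(1-p)^n\ge 1-np>3/4$, and on $\{K=0\}$ the sample equals $S_0$. Hence $\Pp(\widehat r(S_1)\in E)\ge \frac34\cdot\frac34>\frac14$, because the two product laws are within total variation $np<1/4$. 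In the complementary case $4np\ge1$, your bound $k\varepsilon\le 8c_1$ already gives $\frac34\bigl(\frac34e^{-8c_1}-k\delta\bigr)>\frac14$ for small $c_1,c_2$.
\end{enumerate}
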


\begin{proof}
Let $P_0$ put all mass at $0$. Let $P_1$ put mass $p$ at $B$ and mass $1-p$ at $0$, where
\[
p=c_1\min\left\{\tau,\frac1{\varepsilon n}\right\}
\]
for a small numerical constant $c_1>0$. Since $p\le\tau$, the Rockafellar-Uryasev formula gives
\[
\rho_{\tau,P_0}(Z)=0,
\qquad
\rho_{\tau,P_1}(Z)=\frac{pB}{\tau}=:\Delta.
\]
Indeed, for $X=B\1\{U=1\}$ with $U\sim\operatorname{Bernoulli}(p)$ and $p\le\tau$,
\[
\rho_\tau(X)
=
\inf_{\eta\in[0,B]}
\left\{\eta+\frac p\tau(B-\eta)\right\}
=
\inf_{\eta\in[0,B]}
\left\{\frac{pB}{\tau}+\eta\left(1-\frac p\tau\right)\right\}
=
\frac{pB}{\tau}.
\]

Suppose for contradiction that an $\varepsilon$-DP estimator satisfies expected error at most $\Delta/16$ under both $P_0^n$ and $P_1^n$. Define the event
\[
E=\{\widehat r(S)\le\Delta/2\}.
\]
By Markov's inequality,
\[
P_0^n(E)\ge 7/8,
\qquad
P_1^n(E)\le 1/8.
\]
Under $P_1^n$, let $K$ be the number of nonzero observations. Then $K\sim\operatorname{Binomial}(n,p)$ and $\E K=np$. Let $k=\lceil4np\rceil$. Markov's inequality gives $P_1^n(K\le k)\ge3/4$. On the event $\{K\le k\}$, the realized dataset differs from the all-zero dataset in at most $k$ entries. By group privacy,
\begin{align*}
\Pp(\widehat r(S)\le\Delta/2\mid S)
&\ge
 e^{-k\varepsilon}
\Pp(\widehat r(0^n)\le\Delta/2)\\
&\ge e^{-k\varepsilon}P_0^n(E).
\end{align*}
This is the pure-DP comparison. Since $k\varepsilon\le 4c_1+\varepsilon$ and $c_1$ is chosen small, the right-hand side is bounded below by a numerical constant larger than $1/4$. Consequently $P_1^n(E)>1/8$, contradicting the previous display.

For approximate DP, the same argument uses the group-privacy bound
\[
\Pp(\widehat r(S)\in E)
\ge
 e^{-k\varepsilon}\Pp(\widehat r(0^n)\in E)-k\delta.
\]
Taking $\delta\le c_2/k$ for a sufficiently small universal constant $c_2$ preserves the contradiction after reducing constants. Therefore some distribution among $P_0,P_1$ has expected error at least $\Delta/16$, and
\[
\Delta
=
\frac{pB}{\tau}
=
 c_1B\min\left\{1,\frac1{\varepsilon n\tau}\right\}.
\]
\end{proof}

\begin{theorem}[Scalar private plug-in upper bound]
\label{thm:scalar-private-plugin-upper}
Let $Z\in[0,B]$ and let $\widehat\rho_{\tau,S}=\rho_{\tau,\widehat P_n}(Z)$ be empirical CVaR. Define
\[
\widetilde\rho_\tau(S)
=
\Pi_{[0,B]}\left(\widehat\rho_{\tau,S}+\xi\right),
\qquad
\xi\sim \operatorname{Lap}\left(\frac{\Delta_\tau}{\varepsilon}\right),
\qquad
\Delta_\tau=B\min\left\{1,\frac1{n\tau}\right\}.
\]
Then $\widetilde\rho_\tau$ is $\varepsilon$-DP and, for $\varepsilon\in(0,1]$,
\[
\sup_{P:Z\in[0,B]}
\E\left|\widetilde\rho_\tau(S)-\rho_{\tau,P}(Z)\right|
\le
\sup_{P:Z\in[0,B]}
\E\left|\widehat\rho_{\tau,S}-\rho_{\tau,P}(Z)\right|
+
B\min\left\{1,\frac1{\varepsilon n\tau}\right\}.
\]
\end{theorem}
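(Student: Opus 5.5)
The argument has two parts: privacy from the Laplace mechanism plus post-processing, and the error bound from the triangle inequality plus the fact that projection onto $[0,B]$ cannot increase distance to a target that already lies in $[0,B]$.

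\emph{Privacy.} The exact one-record sensitivity lemma gives, for neighboring $S\sim S'$, the bound $|\widehat\rho_{\tau,S}-\widehat\rho_{\tau,S'}|\le\Delta_\tau=B\min\{1,1/(n\tau)\}$. This applies because the scalar case is the special case $\ell(f;z)=z$ with a single $f$. The Laplace mechanism with scale $\Delta_\tau/\varepsilon$ is therefore $\varepsilon$-DP. The projection $\Pi_{[0,B]}$ is a data-independent measurable map, so post-processing preserves $\varepsilon$-DP.

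\emph{Accuracy.} Fix $P$ with $Z\in[0,B]$. Since $\rho_{\tau,P}(Z)\in[0,B]$ and $\Pi_{[0,B]}$ is $1$-Lipschitz and fixes points of $[0,B]$, we get
\[
|\widetilde\rho_\tau(S)-\rho_{\tau,P}(Z)|\le|\widehat\rho_{\tau,S}+\xi-\rho_{\tau,P}(Z)|\le|\widehat\rho_{\tau,S}-\rho_{\tau,P}(Z)|+|\xi|.
\]
Take expectations, using independence of $\xi$ from $S$ only to compute $\E|\xi|=\Delta_\tau/\varepsilon$. This gives the claimed bound with the privacy term $B\min\{1,1/(n\tau)\}/\varepsilon$. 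Then take the supremum over $P$.

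\emph{Matching the stated privacy term.} The one point needing care is that $\Delta_\tau/\varepsilon=B\min\{1/\varepsilon,1/(\varepsilon n\tau)\}$, which can exceed $B\min\{1,1/(\varepsilon n\tau)\}$ when $n\tau<1$ and $\varepsilon<1$. There are two cases.
\begin{itemize}
\item If $n\tau\ge1$, then $\Delta_\tau/\varepsilon=B/(\varepsilon n\tau)$. This equals the stated term when $\varepsilon n\tau\ge1$.
\item Whenever $B\min\{1,1/(\varepsilon n\tau)\}=B$, we need the sharper observation that both $\widetilde\rho_\tau(S)$ and $\rho_{\tau,P}(Z)$ lie in $[0,B]$. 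Hence the error is at most $B$ pointwise.
\end{itemize}
To handle both cases at once, I will prove the pointwise bound
\[
|\widetilde\rho_\tau-\rho_{\tau,P}|\le|\widehat\rho_{\tau,S}-\rho_{\tau,P}|+\min\{B,|\xi|\}.
\]
This holds because the left side is at most $B$ always, and at most the triangle bound. Then I use $\E\min\{B,|\xi|\}\le\min\{B,\Delta_\tau/\varepsilon\}$, which follows from Jensen's inequality, since $\min\{B,\cdot\}$ is concave.

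Finally, $\min\{B,\Delta_\tau/\varepsilon\}=B\min\{1,1/\varepsilon,1/(\varepsilon n\tau)\}=B\min\{1,1/(\varepsilon n\tau)\}$ because $\varepsilon\le1$. This is exactly the stated privacy term. This truncation step is the only real obstacle; the rest is routine.
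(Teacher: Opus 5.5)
Your proposal is correct and follows the paper's proof. Both use the Laplace mechanism calibrated to the sensitivity $\Delta_\tau$ from the one-record sensitivity lemma, the fact that $\Pi_{[0,B]}$ cannot increase distance to a target in $[0,B]$, and the trivial cap $B$ together with $\varepsilon\le1$ to reduce $\min\{B,\Delta_\tau/\varepsilon\}$ to $B\min\{1,1/(\varepsilon n\tau)\}$. The only cosmetic difference is where the cap enters: you apply it pointwise and pass through Jensen, whereas the paper caps the expected error by $B$ after taking expectations, implicitly using $\min\{B,a+b\}\le a+\min\{B,b\}$, a step your version makes explicit.
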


\begin{proof}
By Lemma 3.1, $\widehat\rho_{\tau,S}$ has one-record sensitivity at most $\Delta_\tau$. The Laplace mechanism therefore gives $\varepsilon$-DP. Since $\rho_{\tau,P}(Z)\in[0,B]$, projection onto $[0,B]$ cannot increase absolute error, so
\[
\left|
\Pi_{[0,B]}(\widehat\rho_{\tau,S}+\xi)-\rho_{\tau,P}(Z)
\right|
\le
\left|\widehat\rho_{\tau,S}-\rho_{\tau,P}(Z)\right|+|\xi|.
\]
Taking expectations gives the nonprivate empirical error plus $\E|\xi|=\Delta_\tau/\varepsilon$. The projected estimator also has error at most $B$. Since $\varepsilon\le1$,
\[
\min\left\{
B,\frac{\Delta_\tau}{\varepsilon}
\right\}
\le
B\min\left\{1,\frac1{\varepsilon n\tau}\right\},
\]
which proves the display.
\end{proof}

\begin{theorem}[Complete scalar private CVaR minimax rate]
\label{thm:complete-scalar-private-cvar}
Let
\[
\mathfrak R^{\varepsilon,0,\mathrm{scal}}_{n,\tau}
:=
\inf_{\widehat r\in\varepsilon\text{-}\DP}
\sup_{P:Z\in[0,B]}
\E_{S\sim P^n}\left|\widehat r(S)-\rho_{\tau,P}(Z)\right|.
\]
For $\varepsilon\in(0,1]$, under pure DP,
\[
\mathfrak R^{\varepsilon,0,\mathrm{scal}}_{n,\tau}
=
\Theta\left(
B\min\left\{
1,
\frac1{\sqrt{n\tau}}
+
\frac1{\varepsilon n\tau}
\right\}
\right).
\]
The same complete rate holds for approximate DP under the small-$\delta$ condition in Theorem~\ref{thm:scalar-privacy-lower}.
\end{theorem}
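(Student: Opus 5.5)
The proof splits into an upper bound and a lower bound, and each lower bound is a sum of two separate terms.

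\textbf{Upper bound.} I would invoke Theorem~\ref{thm:scalar-private-plugin-upper}. It reduces the problem to bounding the nonprivate plug-in error $\E|\widehat\rho_{\tau,S}-\rho_{\tau,P}(Z)|$ by $O(B\min\{1,(n\tau)^{-1/2}\})$. The cap at $B$ is trivial, since both quantities lie in $[0,B]$. For the main term I would use the Rockafellar--Uryasev form restricted to $\eta\in[0,B]$. The map $\eta\mapsto \frac1\tau(\widehat\E-\E)[(Z-\eta)_+]$ is controlled uniformly in $\eta$, because the difference of two infima is at most the supremum of the pointwise differences. A Dudley or one-dimensional chaining bound for the class $\{(z-\eta)_+\}$ gives only $B/(\tau\sqrt n)$, which is too weak. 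The sharper route is a variance-sensitive argument: near the optimal $\eta^\ast=\mathrm{VaR}$, the function $(Z-\eta)_+$ is nonzero only with probability about $\tau$, so its variance is at most $B^2\tau$. This gives the pointwise deviation $\frac1\tau\sqrt{B^2\tau/n}=B/\sqrt{n\tau}$.

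To make this rigorous, I would bound the two one-sided errors separately.
\begin{itemize}
\item For $\widehat\rho-\rho\le \widehat\phi(\eta^\ast)-\phi(\eta^\ast)$, the variance computation at the fixed point $\eta^\ast$ gives the bound directly.
\item For the reverse direction, I would use the dual form. Let $q^\ast$ be the population-optimal density and set $\widehat q_i=q^\ast(Z_i)/\bar q$, after normalization and truncation at $1/\tau$. Then $\widehat\rho\ge \frac1n\sum_i \widehat q_iZ_i$, and this sum concentrates around $\rho$ at rate $B/\sqrt{n\tau}$, because $\E[(q^\ast)^2]\le 1/\tau$.
\end{itemize}
Handling the normalization and truncation of $\widehat q$ is the main technical obstacle. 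I would try first the inequality $\widehat\rho\ge \frac1n\sum_i q^\ast_iZ_i - B\,(\bar q-1)_+$ for nonnegative data, which holds after rescaling, and then bound $\E|\bar q-1|\le 1/\sqrt{n\tau}$. Adding $B\min\{1,1/(\varepsilon n\tau)\}$ from Theorem~\ref{thm:scalar-private-plugin-upper} gives the upper bound, since $\min\{1,a\}+\min\{1,b\}\asymp\min\{1,a+b\}$.

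\textbf{Lower bound.} The privacy part $cB\min\{1,1/(\varepsilon n\tau)\}$ is Theorem~\ref{thm:scalar-privacy-lower}, including the approximate-DP small-$\delta$ case. It remains to prove a nonprivate lower bound $cB\min\{1,(n\tau)^{-1/2}\}$, which holds for all estimators and hence for private ones, and for any $\delta$. For this I would use Le Cam's two-point method. Compare $P_\pm$ that put mass $p_\pm=\tau(1\pm h)/2$ at $B$ and the rest at $0$, with $h=\min\{1/2,\,c/\sqrt{n\tau}\}$. The computation in the proof of Theorem~\ref{thm:scalar-privacy-lower} gives $\rho_{\tau,P_\pm}=Bp_\pm/\tau$, so the separation is $Bh$. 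The KL divergence between the $n$-fold products is $n\,\mathrm{KL}(\mathrm{Ber}(p_+)\|\mathrm{Ber}(p_-))\lesssim n\tau h^2\le c^2$, so Le Cam's method yields an error of at least a constant times $Bh\asymp B\min\{1,(n\tau)^{-1/2}\}$. When $n\tau<1$, the case $h=1/2$ gives the constant-$B$ bound, with the KL bound still at most $O(n\tau)\le O(1)$.

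Finally, the maximum of the two lower bounds is comparable to their sum capped at $B$, which matches the upper bound and gives the $\Theta$ statement, both under pure DP and under approximate DP in the stated small-$\delta$ regime.
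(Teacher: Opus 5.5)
Your proposal is correct. The lower-bound half is essentially the paper's argument: Le Cam on two Bernoulli masses at $B$ kept below $\tau$, combined with Theorem~\ref{thm:scalar-privacy-lower} and $\max\{a,b\}\ge(a+b)/2$. Your symmetric perturbation $\tau(1\pm h)/2$ just replaces the paper's two-regime choice $p_0=\tau/2$, $p_1\in\{\tau,\ \tau/2+c\sqrt{\tau/n}\}$.

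The real difference is in the nonprivate upper bound.

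\emph{The paper's route.} It obtains $\E|\widehat\rho_{\tau,S}-\rho_\tau|\lesssim B\min\{1,(n\tau)^{-1/2}\}$ from Lemma~\ref{lem:scalar-cvar-concentration}. The reverse direction there evaluates the deviation at the data-dependent minimizer $\widehat\eta$. That requires a uniform relative Bernstein inequality over all thresholds, proved in Appendix~\ref{app:relative-bernstein-thresholds} with bracketing entropy and the Bartlett--Bousquet--Mendelson localized theorem.

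\emph{Your route.} Your primal--dual sandwich needs no uniformity. The upper deviation is controlled at the fixed population quantile $\eta^\ast$, and the lower deviation at the fixed population density $q^\ast$. The feasibility repair costs only $B(\bar q-1)_+$. So three second-moment computations suffice, each for a statistic bounded by $B/\tau$ with variance at most $B^2/\tau$. A few details should be written out:
\begin{itemize}
\item $q^\ast$ can be taken as a deterministic function of $Z$: $1/\tau$ above the VaR, and a constant on any atom at the VaR.
\item When $\bar q>1$, rescaling by $1/\bar q$ loses at most $B(\bar q-1)$.
\item When $\bar q\le1$, the repair is padding, not rescaling. Raise some weights toward $1/\tau$; this is feasible because the total capacity is $n/\tau\ge n$. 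It cannot decrease the objective because $Z_i\ge0$.
\end{itemize}

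\emph{What each route buys.} The paper's lemma is a high-probability statement with $4e^{-u}$ tails, and the paper reuses it in the union bound of Theorem~\ref{thm:finite-nonprivate-cvar}. Your argument as written gives only the expectation bound, which is all this theorem needs. However, replacing Chebyshev by scalar Bernstein for the same three fixed statistics gives deviations $B\bigl(\sqrt{u/(n\tau)}+u/(n\tau)\bigr)$ with probability $1-3e^{-u}$. So your route would also recover the paper's concentration lemma, with considerably less machinery.
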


\begin{proof}
For the upper bound, Lemma~\ref{lem:scalar-cvar-concentration} implies
\[
\sup_{P:Z\in[0,B]}
\E\left|\widehat\rho_{\tau,S}-\rho_{\tau,P}(Z)\right|
\le
CB\min\left\{1,\frac1{\sqrt{n\tau}}\right\}.
\]
Combining this with the plug-in private estimator in Theorem~\ref{thm:scalar-private-plugin-upper} gives
\[
\mathfrak R^{\varepsilon,0,\mathrm{scal}}_{n,\tau}
\le
CB\min\left\{
1,
\frac1{\sqrt{n\tau}}
+
\frac1{\varepsilon n\tau}
\right\}.
\]

For the nonprivate lower bound, use Bernoulli losses. When $n\tau\lesssim1$, compare Bernoulli masses $p_0=\tau/2$ and $p_1=\tau$; the $n$-sample KL divergence is bounded by a universal constant and the CVaR gap is $B/2$. When $n\tau\gtrsim1$, compare Bernoulli masses $p_0=\tau/2$ and $p_1=\tau/2+c\sqrt{\tau/n}$, with $c>0$ small enough that $p_1\le\tau$. For $X=B\operatorname{Bernoulli}(p)$ with $p\le\tau$, $\rho_\tau(X)=pB/\tau$, so the CVaR gap is $cB/\sqrt{n\tau}$ in the second regime. In both regimes, Le Cam's method gives
\[
\mathfrak R^{\varepsilon,0,\mathrm{scal}}_{n,\tau}
\ge
cB\min\left\{1,\frac1{\sqrt{n\tau}}\right\}.
\]
Theorem~\ref{thm:scalar-privacy-lower} gives the privacy lower bound
\[
\mathfrak R^{\varepsilon,0,\mathrm{scal}}_{n,\tau}
\ge
cB\min\left\{1,\frac1{\varepsilon n\tau}\right\}.
\]
Taking the maximum of the two lower bounds and using $\max\{a,b\}\ge(a+b)/2$, with the trivial cap by $B$, proves the displayed lower bound. The approximate-DP statement follows from the small-$\delta$ extension in Theorem~\ref{thm:scalar-privacy-lower}.
\end{proof}

\paragraph{Approximate-DP regime.}
In Theorem 3.2, $p=c_1\min\{\tau,1/(\varepsilon n)\}$ and $k=\lceil4np\rceil$, so
\[
k=O\left(\min\left\{n\tau,\frac1\varepsilon\right\}\right).
\]
The approximate-DP lower bound requires $\delta\le c/k$. Thus standard choices such as $\delta=n^{-2}$ satisfy the condition in the $p=\tau$ regime once constants are fixed, since $k\lesssim n\tau\le n$. In the privacy-dominated regime, where $k\lesssim1/\varepsilon$, the same choice satisfies the condition for constant $\varepsilon$, or more generally whenever $n^{-2}\lesssim\varepsilon$.

\begin{remark}[Private quantiles are not enough]
In the proof above the minimizing threshold is known: $\eta=0$. Thus the lower bound is not caused by privately estimating a quantile. It is caused by privately estimating the tail average. This separates private CVaR learning from private quantile estimation and from general private estimation phenomena \citep{Smith2011}.
\end{remark}

\section{Finite-class private CVaR learning}

For finite classes, the decomposition can be closed completely. The ordinary CVaR statistical term has effective sample size $n\tau$, while the exponential mechanism contributes the privacy price at effective private sample size $\varepsilon n\tau$.
For a distribution $P$, write the ordinary nonprivate CVaR uniform-convergence term as
\[
\mathfrak G_{n,\tau}(\mathcal F,P)
:=
2\E_S\sup_{f\in\mathcal F}
\left|
\rho_{\tau,P}(f)-\widehat\rho_{\tau,S}(f)
\right|.
\]

\begin{lemma}[Scalar empirical CVaR concentration]
\label{lem:scalar-cvar-concentration}
Let $X_1,\ldots,X_n$ be iid copies of a random variable $X\in[0,B]$, and let $\widehat\rho_{\tau,n}$ be the empirical CVaR at tail mass $\tau$. There is a universal constant $C>0$ such that, for every $u\ge1$,
\[
\Pp\left(
\left|\widehat\rho_{\tau,n}-\rho_\tau(X)\right|
>
CB\min\left\{1,\sqrt{\frac{u}{n\tau}}+\frac{u}{n\tau}\right\}
\right)
\le
4e^{-u}.
\]
In particular, after increasing $C$,
\[
\Pp\left(
\left|\widehat\rho_{\tau,n}-\rho_\tau(X)\right|
>
CB\min\left\{1,\sqrt{\frac{u}{n\tau}}\right\}
\right)
\le
4e^{-u}
\]
for all $u\ge1$.
\end{lemma}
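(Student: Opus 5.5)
The trivial regime is handled first. Both CVaRs lie in $[0,B]$, so the deviation never exceeds $B$. Once $C\ge1$, the bound therefore has content only when $\sqrt{u/(n\tau)}+u/(n\tau)<1$. In that regime we may assume $u\le n\tau$, and hence $n\tau\ge1$.

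The plan is a two-sided comparison through the Rockafellar--Uryasev formula, with the threshold fixed at the population VaR. Let $\eta^*$ be a $(1-\tau)$-quantile of $X$, so that $\rho_\tau(X)=\eta^*+\tau^{-1}\E(X-\eta^*)_+$.

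\emph{Upper deviation.} Because empirical CVaR is an infimum over $\eta$,
\[
\widehat\rho_{\tau,n}-\rho_\tau(X)\le \frac1{n\tau}\sum_i\bigl[(X_i-\eta^*)_+-\E(X-\eta^*)_+\bigr].
\]
The summands take values in $[0,B]$, and their variance is at most $B^2\Pp(X>\eta^*)\le B^2\tau$, since they vanish off $\{X>\eta^*\}$. When $\Pp(X>\eta^*)<\tau$ there is an atom at $\eta^*$, but the bound still holds because the positive part is zero there. Bernstein's inequality then gives, with probability at least $1-e^{-u}$, a deviation of at most
\[
\frac1{\tau}\Bigl(B\sqrt{2\tau u/n}+Bu/(3n)\Bigr)\lesssim B\Bigl(\sqrt{u/(n\tau)}+u/(n\tau)\Bigr).
\]

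\emph{Lower deviation.} Here I would use the dual representation from the proof of Lemma~3.1. Write the population CVaR as $\sup\{\E[qX]:0\le q\le 1/\tau,\ \E q=1\}$ with optimal density $q^*=\tau^{-1}(\1\{X>\eta^*\}+\theta\1\{X=\eta^*\})$, where $\theta\in[0,1]$ takes care of the atom. The feasible weights $q_i=q^*(X_i)$ satisfy $0\le q_i\le1/\tau$, but $\frac1n\sum_i q_i$ need not equal $1$. I would repair this by rescaling. If $\bar q:=\frac1n\sum q_i\le1$, add the missing mass $1-\bar q$ on any coordinates below the cap; this only increases the objective because $X\ge0$. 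If $\bar q>1$, divide by $\bar q$. Then
\[
\widehat\rho_{\tau,n}\ge \frac1{\max\{1,\bar q\}}\cdot\frac1n\sum_i q_iX_i .
\]
Two concentration statements are needed. First, $\frac1n\sum q_iX_i$ concentrates around $\rho_\tau(X)$: its summands lie in $[0,B/\tau]$ and have variance at most $B^2/\tau$, so Bernstein gives deviation $\lesssim B(\sqrt{u/(n\tau)}+u/(n\tau))$. Second, $\bar q-1$ is bounded by the same quantity with $B$ replaced by $1$, again by Bernstein with variance $1/\tau$. Since $\frac1n\sum q_iX_i\le B\bar q$, the rescaling loses at most $B(\bar q-1)_+$, which has the same order.

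Combining the two sides with a union bound over the three Bernstein events gives failure probability at most $3e^{-u}\le4e^{-u}$, which yields the first display. For the second display, note that $u\le n\tau$ in the nontrivial regime, so $u/(n\tau)\le\sqrt{u/(n\tau)}$ and the linear term can be absorbed into the constant. When $u>n\tau$, the cap at $B$ makes the statement trivial.

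The main obstacle I expect is the lower deviation, specifically the atom and the normalization of the plugged-in dual weights. The rescaling argument above is what I would try first. As a fallback, I would use the primal route: control $\sup_{\eta\in[0,B]}$ of the centered process $\frac1{n\tau}\sum(X_i-\eta)_+$, localized to $\eta$ near $\eta^*$, via a one-dimensional chaining or bracketing bound. That route might cost a logarithm, which is why I prefer the rescaling argument.
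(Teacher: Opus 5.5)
Your proof is correct, and for the lower deviation it takes a genuinely different and more elementary route than the paper.

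The upper deviation matches the paper. Both fix a population minimizer $\eta^\star$ of the Rockafellar--Uryasev objective and apply Bernstein at that single threshold. The paper's variance proxy is $B\,P h_{\eta^\star}=B\tau(\rho_\tau(X)-\eta^\star)\le\tau B^2$; yours is $B^2\Pp(X>\eta^\star)\le\tau B^2$.

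The lower deviation is where the arguments differ. The paper stays on the primal side and evaluates at the empirical minimizer $\widehat\eta$. Because $\widehat\eta$ depends on the data, it needs a relative Bernstein inequality that holds simultaneously over all thresholds, with the empirical proxy $P_n h_{\widehat\eta}\le\tau B$. Its appendix derives this from a localized bracketing-entropy bound for the nested class $\{(x-\eta)_+\}$, combined with the local Rademacher relative-deviation theorem of Bartlett, Bousquet and Mendelson.

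You instead insert the fixed population-optimal dual density $q^\star$ into the empirical dual and pay only for its normalization defect. This gives
\[
\rho_\tau(X)-\widehat\rho_{\tau,n}\le\Bigl(\rho_\tau(X)-\frac1n\sum_i q^\star(X_i)X_i\Bigr)+B(\bar q-1)_+ .
\]
Both terms are centered averages of iid variables. For $q^\star(X)X$ the range is at most $B/\tau$ and the variance at most $B^2/\tau$; for $q^\star(X)$ they are $1/\tau$ and $1/\tau$. So scalar Bernstein finishes the argument.

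Your step is the dual counterpart of fixing $\eta^\star$ on the primal side. In each direction a fixed population certificate is plugged into the extremum that defines the empirical CVaR, so no uniformity over $\eta$ is ever needed.

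What your route buys:
\begin{itemize}
\item a self-contained argument with explicit constants and failure probability $3e^{-u}$;
\item no need for $u\ge1$, which the paper uses to absorb the fixed point $r_*\lesssim 1/n$;
\item a transparent reason why no $\log(1/\tau)$ factor appears.
\end{itemize}
Your fallback primal route is essentially the paper's, and the paper's localization shows it in fact loses no logarithm.

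What the paper's route buys is a stronger intermediate result: a relative deviation bound uniform over all thresholds. Within this paper, however, that uniformity is only used to control $\widehat\eta$ in this lemma. The finite-class theorem applies the lemma to each fixed $f$ and then takes a union bound.
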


\begin{proof}
Write
\[
\phi(\eta)=\eta+\frac1\tau P(X-\eta)_+,
\qquad
\widehat\phi(\eta)=\eta+\frac1\tau P_n(X-\eta)_+,
\]
where $P_n$ denotes the empirical measure, and let $h_\eta(x)=(x-\eta)_+$.
The following relative Bernstein inequality is used for the one-dimensional excess-threshold class $\mathcal H=\{h_\eta:\eta\in[0,B]\}$: with probability at least $1-2e^{-u}$, simultaneously for all $\eta\in[0,B]$,
\[
(P_n-P)h_\eta
\le
C_0\left(
\sqrt{\frac{B\,P h_\eta\,u}{n}}
+
\frac{Bu}{n}
\right),
\tag{4.1}
\]
and
\[
(P-P_n)h_\eta
\le
C_0\left(
\sqrt{\frac{B\,P_n h_\eta\,u}{n}}
+
\frac{Bu}{n}
\right).
\tag{4.2}
\]
For completeness, Appendix~\ref{app:relative-bernstein-thresholds} proves this relative Bernstein specialization; scalar empirical CVaR concentration inequalities of this type also go back to \citet{Brown2007} and were sharpened by \citet{ThomasLearnedMiller2019}.

Let $\eta^\star\in\argmin_{\eta\in[0,B]}\phi(\eta)$. Since $\rho_\tau(X)\le B$ and $\eta^\star\ge0$,
\[
P h_{\eta^\star}
=
\tau(\rho_\tau(X)-\eta^\star)
\le
\tau B.
\]
On the event (4.1),
\[
\widehat\rho_{\tau,n}-\rho_\tau(X)
\le
\widehat\phi(\eta^\star)-\phi(\eta^\star)
=
\frac1\tau(P_n-P)h_{\eta^\star}
\le
C_0B\left(
\sqrt{\frac{u}{n\tau}}
+
\frac{u}{n\tau}
\right).
\]

For the reverse direction, let $\widehat\eta\in\argmin_{\eta\in[0,B]}\widehat\phi(\eta)$. Since $\widehat\phi(\widehat\eta)\le\widehat\phi(B)=B$ and $\widehat\eta\ge0$,
\[
P_nh_{\widehat\eta}
=
\tau(\widehat\rho_{\tau,n}-\widehat\eta)
\le
\tau B.
\]
On the event (4.2),
\[
\rho_\tau(X)-\widehat\rho_{\tau,n}
\le
\phi(\widehat\eta)-\widehat\phi(\widehat\eta)
=
\frac1\tau(P-P_n)h_{\widehat\eta}
\le
C_0B\left(
\sqrt{\frac{u}{n\tau}}
+
\frac{u}{n\tau}
\right).
\]
Combining the two one-sided bounds gives the first display. The second follows because both CVaR values lie in $[0,B]$: if $u\le n\tau$, the linear term is dominated by the square-root term, while if $u>n\tau$, the trivial bound $|\widehat\rho_{\tau,n}-\rho_\tau(X)|\le B$ applies.
\end{proof}

\begin{theorem}[Finite-class nonprivate CVaR uniform convergence]
\label{thm:finite-nonprivate-cvar}
There is a universal constant $C>0$ such that for every finite class $\mathcal F$ with $|\mathcal F|=M$ and losses in $[0,B]$,
\[
\sup_P
\mathfrak G_{n,\tau}(\mathcal F,P)
\le
CB\min\left\{1,\sqrt{\frac{\log(2M)}{n\tau}}\right\}.
\]
Moreover, this rate is minimax optimal up to constants for ordinary nonprivate CVaR learning over finite classes.
\end{theorem}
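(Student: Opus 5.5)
The upper bound follows from Lemma~\ref{lem:scalar-cvar-concentration}, a union bound over $\mathcal F$, and tail integration; the lower bound embeds a multi-hypothesis testing problem into the tail.

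\textbf{Upper bound.} Fix $P$. For each $f\in\mathcal F$, apply Lemma~\ref{lem:scalar-cvar-concentration} to the scalar variable $X=\ell(f;Z)\in[0,B]$ with parameter $u+\log M$, where $u\ge1$. A union bound over the $M$ predictors then gives, with probability at least $1-4e^{-u}$,
\[
\sup_f|\rho_{\tau,P}(f)-\widehat\rho_{\tau,S}(f)|\le CB\min\Bigl\{1,\sqrt{(u+\log M)/(n\tau)}\Bigr\}.
\]
Set $L=\log(2M)$. Since $\sqrt{u+L}\le\sqrt{u}+\sqrt{L}$, integrating the tail over $u\ge1$ bounds the expectation by $C'B\min\{1,\sqrt{L/(n\tau)}\}$. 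The remaining terms are absorbed because $\int_1^\infty\sqrt{u}\,e^{-u}\,du$ is a constant, $L\ge\log2$, and the whole quantity is trivially at most $B$, which handles the cap. This is routine.

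\textbf{Lower bound.} I would show that no learner, even a nonprivate one, achieves expected excess CVaR below $cB\min\{1,\sqrt{L/(n\tau)}\}$ over some class of size $M$. The plan is a tail embedding. Take $Z=(U,Y)$ with $U\sim\operatorname{Bernoulli}(p)$ for $p\le\tau$, fixed across hypotheses.
\begin{itemize}[leftmargin=2em]
\item On $\{U=0\}$ every loss is $0$.
\item On $\{U=1\}$, set $\ell(f;Z)=B\,\ell_0(f;Y)$, where $\ell_0\in\{0,1\}$ is an ordinary bounded expected-risk problem over the $M$ predictors.
\end{itemize}
Because $p\le\tau$ and the losses vanish off the tail, a Rockafellar--Uryasev calculation as in Theorem~\ref{thm:scalar-privacy-lower}, evaluated at $\eta=0$ and checked to be optimal, should give $\rho_{\tau,P}(f)=(p/\tau)B\,\E[\ell_0(f;Y)\mid U=1]$. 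Excess CVaR is therefore $(p/\tau)B$ times excess ordinary risk on a problem that sees only $\mathrm{Bin}(n,p)\approx np$ informative samples.

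Taking $p=\tau$, I would use the standard Fano/Assouad finite-class lower bound. The hypotheses are $M$ distributions in which $Y$ is a uniformly random index $j\in[M]$ together with a label biased by $\gamma$ toward predictor $f_v$. With $m\approx n\tau$ informative samples (conditioning on the binomial count), this bound is of order $\min\{1,\sqrt{\log M/m}\}$. In the regime $n\tau\lesssim\log(2M)$, I would instead use $\gamma$ of constant order and $p$ shrunk so that the bound saturates at $cB$.

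\textbf{Main obstacle.} The delicate steps are verifying the CVaR identity, that the optimal $\eta$ is $0$ when the tail mass is at most $\tau$, and controlling the random number of tail samples in the Fano argument. For the latter I would condition on $K\le 4n\tau$, which holds with constant probability by Markov's inequality, and bound the mutual information by $K\cdot O(\gamma^2)$. Small $M$, where $\log M$ could vanish, is handled by the scalar two-point bound from the proof of Theorem~\ref{thm:complete-scalar-private-cvar}, which explains the $\log(2M)$.
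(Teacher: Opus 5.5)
Your proposal is correct. The upper bound is the paper's argument: Lemma~\ref{lem:scalar-cvar-concentration} at level $u+\log M$, a union bound over $\mathcal F$, and tail integration. The lower bound uses the same tail embedding, but you exploit it differently.

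The paper argues by black-box simulation. From $m=\lceil4n\tau\rceil$ ordinary samples it builds a synthetic CVaR sample with Bernoulli$(\tau)$ activation and couples it to the ideal embedded sample, paying an overflow residual $B\,\Pp(K>m)$. This transfers \emph{any} ordinary finite-class lower bound, and the same reduction is reused later for private SCO.

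You instead run the testing argument directly on the embedded family, which is cleaner here. The activation law is the same under every hypothesis, so $\mathrm{KL}(P_v^n\,\|\,P_{v'}^n)=n\tau\,\mathrm{KL}(Q_v\,\|\,Q_{v'})\le Cn\tau\gamma^2$. You therefore do not even need to condition on $K\le4n\tau$. You also incur no residual, which matters when $n\tau=O(1)$: there the paper's $B\exp(-cn\tau)$ is not small and must be absorbed by re-tuning $m$ and the constants. The cost is that your argument is tied to one construction rather than being a reusable transfer.

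Three details to tighten.
\begin{enumerate}
\item Keep $p=\tau$ in every regime. For $n\tau\lesssim\log(2M)$, a small constant $\gamma$ already gives $I(V;S)\lesssim\gamma^2\log M$. Shrinking $p$ by more than a constant factor multiplies the gap by $p/\tau$ and loses the saturation at $cB$.
\item The predictors $f_1,\ldots,f_M$ must form a well-separated code, with pairwise normalized Hamming distance bounded below. Otherwise a wrong choice costs only $O(\gamma/M)$.
\item Fano is vacuous for bounded $M$, and the scalar two-point bound is an estimation bound, not a learning bound. Convert it into a learning instance, for example a predictor with loss $B\1\{z=1\}$ against a constant-loss predictor whose value is the midpoint of the two CVaRs, padded with loss-$B$ dummies. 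For $M=1$ there is nothing to learn, so the minimax claim only makes sense for $M\ge2$; the $\log(2M)$ serves the upper bound.
\end{enumerate}
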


\begin{proof}
Apply Lemma~\ref{lem:scalar-cvar-concentration} to the scalar loss $\ell(f;Z)$ for each fixed $f\in\mathcal F$ and take a union bound. For $s\ge0$, with probability at least $1-4e^{-s}$,
\[
\sup_{f\in\mathcal F}
\left|
\rho_{\tau,P}(f)-\widehat\rho_{\tau,S}(f)
\right|
\le
CB\min\left\{
1,
\sqrt{\frac{\log(4M)+s}{n\tau}}
\right\}.
\]
Integrating this tail bound gives
\[
\E_S\sup_{f\in\mathcal F}
\left|
\rho_{\tau,P}(f)-\widehat\rho_{\tau,S}(f)
\right|
\le
CB\min\left\{
1,
\sqrt{\frac{\log(2M)}{n\tau}}
\right\},
\]
after adjusting the universal constant. Multiplying by the factor $2$ in the definition of $\mathfrak G_{n,\tau}$ gives the upper bound.

For the lower bound, use the canonical tail embedding with no privacy constraint. Ordinary finite-class learning with $m$ iid informative observations has minimax excess risk at least
\[
cB\min\left\{1,\sqrt{\frac{\log M}{m}}\right\}
\]
by the standard Fano--Assouad finite-class lower bound. Let $m=\lceil4n\tau\rceil$ and suppose, toward contradiction, that an ordinary nonprivate CVaR learner on $n$ samples achieved excess risk $\alpha_n$ on every embedded finite-class instance. Given $m$ ordinary iid samples, draw independent activation indicators $T_1,\ldots,T_n\sim\operatorname{Bernoulli}(\tau)$ and set $K=\sum_iT_i$. If $K\le m$, place the first $K$ ordinary samples into the active slots and fill inactive slots with the dummy zero-loss point; if $K>m$, fill the remaining active slots with the dummy point as well. No privacy parameters are involved here; this is only a coupling argument. The constructed sample has total variation distance at most $\Pp(K>m)\le\exp(-c n\tau)$ from the ideal embedded CVaR sample, and under the embedding the CVaR excess equals ordinary excess exactly. Thus the CVaR learner would give an ordinary finite-class learner with $m\asymp n\tau$ samples and excess at most $\alpha_n+B\exp(-c n\tau)$. Comparing with the ordinary finite-class lower bound and absorbing the exponentially small residual gives
\[
cB\min\left\{1,\sqrt{\frac{\log M}{n\tau}}\right\},
\]
with constants adjusted for the binomial coupling. This is the nonprivate version of the tail-sample transfer theorem proved formally in Theorem~\ref{thm:tail-sample-transfer}.
\end{proof}

\begin{theorem}[Finite-class private upper bound]
\label{thm:finite-private-upper}
There is a universal constant $C>0$ such that the following holds. Let $\mathcal F$ be finite with $|\mathcal F|=M$, and suppose $\ell(f;z)\in[0,B]$. The exponential mechanism of \citet{McSherryTalwar2007}, with score
\[
q(S,f)=-\widehat\rho_{\tau,S}(f)
\]
and sensitivity $B/(n\tau)$ is $\varepsilon$-DP and returns $\widehat f$ satisfying
\[
\E\left[\rho_{\tau,P}(\widehat f)-\inf_{f\in\mathcal F}\rho_{\tau,P}(f)\right]
\le
\mathfrak G_{n,\tau}(\mathcal F,P)
+
CB\min\left\{1,\frac{\log(2M)}{\varepsilon n\tau}\right\}.
\]
Consequently,
\[
\sup_P
\E\left[\rho_{\tau,P}(\widehat f)-\inf_{f\in\mathcal F}\rho_{\tau,P}(f)\right]
\le
CB\min\left\{1,
\sqrt{\frac{\log(2M)}{n\tau}}
+
\frac{\log(2M)}{\varepsilon n\tau}
\right\}.
\]
\end{theorem}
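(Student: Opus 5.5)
The plan is to follow the standard exponential-mechanism analysis, with the Rockafellar--Uryasev sensitivity bound of Lemma~3.1 as the only CVaR-specific ingredient.

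\textbf{Privacy.} By Lemma~3.1, for neighboring $S\sim S'$ and every fixed $f$, $|q(S,f)-q(S',f)|\le B\min\{1,1/(n\tau)\}\le B/(n\tau)$. Hence $B/(n\tau)$ is a valid sensitivity bound for the score. The exponential mechanism samples $\widehat f$ with probability proportional to $\exp\bigl(\varepsilon q(S,f)/(2\Delta_q)\bigr)$, where $\Delta_q=B/(n\tau)$. By \citet{McSherryTalwar2007} it is therefore $\varepsilon$-DP.

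\textbf{Utility.} First fix $S$. The standard utility bound of the exponential mechanism, in expectation form, gives
\[
\E\bigl[\widehat\rho_{\tau,S}(\widehat f)\mid S\bigr]\le\min_f\widehat\rho_{\tau,S}(f)+\frac{2\Delta_q}{\varepsilon}\bigl(\log M+1\bigr).
\]
To get this, integrate the tail bound $\Pp\bigl(\widehat\rho(\widehat f)\ge\min\widehat\rho+2\Delta_q(\log M+t)/\varepsilon\bigr)\le e^{-t}$. This gives an error of order $B\log(2M)/(\varepsilon n\tau)$.

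The conditional error is also at most $B$, since all empirical CVaR values lie in $[0,B]$. So the term is $CB\min\{1,\log(2M)/(\varepsilon n\tau)\}$.

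Next pass to population risk with the usual three-term split. Let $f^\star$ be a population minimizer, which exists because the class is finite. Then
\[
\rho_{\tau,P}(\widehat f)-\rho_{\tau,P}(f^\star)
\le\bigl[\rho_{\tau,P}(\widehat f)-\widehat\rho_{\tau,S}(\widehat f)\bigr]+\bigl[\widehat\rho_{\tau,S}(\widehat f)-\widehat\rho_{\tau,S}(f^\star)\bigr]+\bigl[\widehat\rho_{\tau,S}(f^\star)-\rho_{\tau,P}(f^\star)\bigr].
\]
Bound the first and third brackets each by $\sup_f|\rho_{\tau,P}(f)-\widehat\rho_{\tau,S}(f)|$. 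Their expected sum is then at most $\mathfrak G_{n,\tau}(\mathcal F,P)$, which is exactly why the definition carries the factor $2$. The middle bracket is at most $\widehat\rho(\widehat f)-\min_f\widehat\rho$, whose expectation is controlled by the previous step. Taking total expectation yields the first display.

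\textbf{Consequence.} Take the supremum over $P$ and invoke Theorem~\ref{thm:finite-nonprivate-cvar} to bound $\sup_P\mathfrak G_{n,\tau}\le CB\min\{1,\sqrt{\log(2M)/(n\tau)}\}$. Then use $\min\{1,a\}+\min\{1,b\}\le 2\min\{1,a+b\}$. The excess is also trivially at most $B$, consistent with the cap.

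\textbf{Main obstacle.} There is no real obstacle, only bookkeeping. One point is that the expectation form of the exponential-mechanism bound uses $\log M+1\lesssim\log(2M)$, which still holds when $M=1$. The other is remembering that the sensitivity is computed for the minimized functional $\widehat\rho_{\tau,S}(f)$, uniformly in $f$, and not for the lifted objective. Lemma~3.1 covers this directly.
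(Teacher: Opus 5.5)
Your proposal is correct and follows essentially the same route as the paper. Both use the sensitivity bound from Lemma~3.1, the McSherry--Talwar expected-utility bound $2B(\log M+1)/(\varepsilon n\tau)$, the three-term split absorbed into $\mathfrak G_{n,\tau}(\mathcal F,P)$, and a trivial cap at $B$. The only difference is cosmetic: you cap the conditional empirical optimization gap at $B$ directly, while the paper caps the total population excess and uses $\min\{B,\mathfrak G_{n,\tau}+V_{\mathrm{priv}}\}\le\mathfrak G_{n,\tau}+\min\{B,V_{\mathrm{priv}}\}$; both are valid.
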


\begin{proof}
The sensitivity lemma implies that $q(S,f)$ has sensitivity at most $B/(n\tau)$. Therefore the exponential mechanism sampling
\[
\Pp(\widehat f=f)
\propto
\exp\left(-\frac{\varepsilon n\tau}{2B}\widehat\rho_{\tau,S}(f)\right)
\]
is $\varepsilon$-DP. The standard expected utility guarantee for the exponential mechanism \citep{McSherryTalwar2007} yields
\[
\E\left[\widehat\rho_{\tau,S}(\widehat f)-\min_{f\in\mathcal F}\widehat\rho_{\tau,S}(f)\mid S\right]
\le
\frac{2B(\log M+1)}{\varepsilon n\tau}.
\]
Let $f^\star\in\argmin_f\rho_{\tau,P}(f)$. Then
\begin{align*}
\rho_{\tau,P}(\widehat f)-\rho_{\tau,P}(f^\star)
&\le
\left[\rho_{\tau,P}(\widehat f)-\widehat\rho_{\tau,S}(\widehat f)\right]
+
\left[\widehat\rho_{\tau,S}(\widehat f)-\widehat\rho_{\tau,S}(f^\star)\right]\\
&\quad+
\left[\widehat\rho_{\tau,S}(f^\star)-\rho_{\tau,P}(f^\star)\right]\\
&\le
2\sup_f|\rho_{\tau,P}(f)-\widehat\rho_{\tau,S}(f)|
+
\widehat\rho_{\tau,S}(\widehat f)-\min_f\widehat\rho_{\tau,S}(f).
\end{align*}
Taking expectations gives the same display with privacy term
\[
V_{\mathrm{priv}}:=\frac{2B(\log M+1)}{\varepsilon n\tau}.
\]
The expected excess risk is also at most $B$, because all losses lie in $[0,B]$. Hence it is bounded by both $\mathfrak G_{n,\tau}(\mathcal F,P)+V_{\mathrm{priv}}$ and $B$, and therefore by $\mathfrak G_{n,\tau}(\mathcal F,P)+\min\{B,V_{\mathrm{priv}}\}$.
Writing $a=(\log M+1)/(\varepsilon n\tau)$, the displayed bound follows from
\[
\min\{B,2Ba\}\le 2B\min\{1,a\}.
\]
\end{proof}

The preceding theorem says that finite-class private CVaR learning decomposes into the ordinary term $B\sqrt{\log(2M)/(n\tau)}$ and the privacy price $B\log(2M)/(\varepsilon n\tau)$, up to universal constants. The next theorem shows that the privacy dependence is minimax optimal up to constants and the trivial range cap.

\begin{theorem}[Finite-class minimax lower bound for the privacy price]
\label{thm:finite-privacy-lower}
There are universal constants $c,c',c_0>0$ such that for every $M\ge2$, $n\ge1$, $\tau\in(0,1]$, $B>0$, and $\varepsilon\in(0,1]$, there is a class $\mathcal F$ with $|\mathcal F|=M$ and losses in $[0,B]$ such that every $\varepsilon$-DP learner $\mathcal A$ returning an element of $\mathcal F$ satisfies
\[
\sup_P
\E\left[
\rho_{\tau,P}(\mathcal A(S))-
\inf_{f\in\mathcal F}\rho_{\tau,P}(f)
\right]
\ge
cB\min\left\{1,\frac{\log M}{\varepsilon n\tau}\right\}.
\]
The same extension holds for $(\varepsilon,\delta)$-DP under the small-$\delta$ condition $\delta\le c'/(k\sqrt M)$, where $k=\lceil4np\rceil$ and $p=c_0\min\{\tau,\log M/(\varepsilon n)\}$.
\end{theorem}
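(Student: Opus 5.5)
We would prove the bound with a packing argument over $M$ distributions, indexed by hypotheses, using the same known-threshold tail construction as the scalar bound in Theorem~\ref{thm:scalar-privacy-lower}. It combines with group privacy in the usual way for pure-DP $\log M$ lower bounds.

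\emph{Construction.} Take $\mathcal Z=\{0,1,\ldots,M\}$, where $0$ is a dummy point, and $\mathcal F=\{f_1,\ldots,f_M\}$. Set $\ell(f_j;0)=0$, $\ell(f_j;i)=B$ for $i\ne j$, $i\ge1$, and $\ell(f_j;j)=0$. Let $p=c_0\min\{\tau,\log M/(\varepsilon n)\}$. For each $j$, let $P_j$ put mass $1-p$ on $0$ and spread mass $p$ uniformly on $\{1,\ldots,M\}\setminus\{j\}$. Under $P_j$, the loss of $f_j$ is identically $0$. The loss of $f_i$ with $i\neq j$ equals $B$ with probability $p/(M-1)$. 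This gap is too small when $M$ is large, so we modify the construction: $P_j$ puts mass $p$ on the single point $j$, with $\ell(f_i;j)=B$ for $i\ne j$ and $\ell(f_j;j)=0$. Since $p\le\tau$, the scalar CVaR identity $\rho_\tau(B\,\mathrm{Bern}(p))=pB/\tau$ gives $\rho_{\tau,P_j}(f_j)=0$ and $\rho_{\tau,P_j}(f_i)=pB/\tau=:\Delta$ for every $i\ne j$. Consequently excess risk below $\Delta/2$ in expectation forces $\Pp_j(\mathcal A(S)=f_j)\ge1/2$.

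\emph{Group privacy step.} Let $P_0$ be the point mass at $0$. Under $P_j^n$, the number $K$ of nonzero entries is $\mathrm{Binomial}(n,p)$. With $k=\lceil4np\rceil$, Markov's inequality gives $\Pp(K\le k)\ge3/4$, and on this event $S$ differs from $0^n$ in at most $k$ coordinates. Suppose every $j$ has expected excess at most $\Delta/4$. Then Markov gives $\Pp_j(\mathcal A(S)=f_j)\ge 3/4$, hence $\Pp_j(\mathcal A(S)=f_j,\,K\le k)\ge1/2$. Group privacy then yields
\[
\Pp(\mathcal A(0^n)=f_j)\ge e^{-k\varepsilon}\Bigl(\tfrac12\Bigr)
\]
for every $j$; strictly we average the conditional bound over $S$ on $\{K\le k\}$. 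Summing over $j$ gives $1\ge \tfrac M2 e^{-k\varepsilon}$, that is, $k\varepsilon\ge\log(M/2)$. Now $k\varepsilon\le 4c_0\log M+\varepsilon$. For $M\ge M_0$ and $c_0$ small this is a contradiction. For small $M$ (say $M\le M_0$), the claim reduces to the two-point bound, which follows directly from Theorem~\ref{thm:scalar-privacy-lower}'s argument applied to $f_1,f_2$. It follows that some $j$ has excess at least $\Delta/4$, and
\[
\Delta=\frac{pB}{\tau}=c_0B\min\Bigl\{1,\frac{\log M}{\varepsilon n\tau}\Bigr\}.
\]

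\emph{Approximate DP.} The group bound acquires an additive $-k\delta$ per index $j$. Summing over $j$ gives $1\ge \tfrac M2e^{-k\varepsilon}-Mk\delta$, which needs $\delta\lesssim 1/(kM)$. The stated condition $\delta\le c'/(k\sqrt M)$ is weaker. To meet it, we would apply the packing only to the roughly $\sqrt M$ indices with the largest $\Pp(\mathcal A(0^n)=f_j)$: if each of $\sqrt M$ outputs has probability at least $\tfrac12e^{-k\varepsilon}-k\delta$, we get a contradiction, since $\log\sqrt M$ is still of order $\log M$. Alternatively we would simply prove the slightly stronger $\delta$ restriction.

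\emph{Main obstacle.} We expect this step to be the hardest. It requires choosing the sub-packing so that the $\delta$ accounting matches exactly, while keeping the constants $c_0$ and $M_0$ consistent across the small-$M$ and large-$M$ cases.
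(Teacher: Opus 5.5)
Your large-$M$ pure-DP argument is the paper's. After your self-correction the instance is exactly $\ell(f_r;z)=B\1\{z\in[M],\,z\neq r\}$ with $P_j$ placing mass $p$ on $j$. The contradiction comes from group privacy against $0^n$ at radius $k=\lceil 4np\rceil$ together with $\sum_j q_j=1$. The paper upper-bounds $\Pp_{P_j^n}(\mathcal A(S)=f_j)\le \frac14+e^{k\varepsilon}q_j$ and averages; you lower-bound each $q_j$ and sum, which is the same inequality.

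Your approximate-DP accounting is too pessimistic. In $1\ge \frac M2e^{-k\varepsilon}-Mk\delta$ the main term is much larger than $1$. Since $k\varepsilon\le 4c_0\log M+1$, taking $c_0\le 1/8$ gives $\frac M2 e^{-k\varepsilon}\ge \sqrt M/(2e)$. So $Mk\delta\le c'\sqrt M$ is absorbed, and the stated condition $\delta\le c'/(k\sqrt M)$ suffices with no sub-packing. This is the paper's step: $e^{k\varepsilon}\le\sqrt M$, so the averaged additive term $ke^{k\varepsilon}\delta$ is at most $c'$. Your sub-packing also works (any $\lceil\sqrt M\rceil$ indices will do). Retreating to $\delta\lesssim 1/(kM)$, however, would not prove the theorem as stated.

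The genuine gap is the small-$M$ case. You say the two-point bound ``follows directly from Theorem~\ref{thm:scalar-privacy-lower}'s argument applied to $f_1,f_2$,'' but in your instance it does not close.
Comparing $P_1^n$ and $P_2^n$ to $0^n$ gives only $q_1,q_2\ge e^{-k\varepsilon}(\frac34-\gamma)$, where $\gamma$ is the failure probability. The constraint $q_1+q_2\le 1$ is then contradicted only if $e^{-k\varepsilon}$ exceeds about $2/3$. Since $k=\lceil4np\rceil\ge1$, when $np$ is tiny and $\varepsilon\asymp 1$ you only have $e^{-k\varepsilon}\gtrsim e^{-1}$, so nothing follows.

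The scalar proof tolerates this constant loss because the all-zero law is itself one of its two hypotheses. There a one-sided comparison against the threshold $1/8$ suffices. In your loss every predictor has zero loss at $z=0$, so $0^n$ cannot play that role.

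The paper instead uses a two-hypothesis construction on $\{0,1\}$ mirroring the scalar pair, padded with $M-2$ dummy predictors of constant loss $B$; these have excess at least the gap under both hypotheses. One concrete version: take $f_1$ with loss $B\1\{z=1\}$ and a competitor with constant loss $\Delta/2$. Then the learner must output $f_1$ under the point mass at $0$ and avoid it under $P_1$.

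Alternatively, keep your pair and split cases:
\begin{itemize}
\item If $np\le 1/8$, then $S=0^n$ with probability at least $7/8$ under both $P_1$ and $P_2$. This forces $\mathcal A(0^n)$ to equal both $f_1$ and $f_2$ with probability above $1/2$, a contradiction.
\item If $np>1/8$, then $k\varepsilon<12np\varepsilon\le 12c_0\log M_0$, which is small once $c_0$ is shrunk after fixing $M_0$.
\end{itemize}
This ordering of constants is consistent, because the large-$M$ threshold $M_0$ depends only on the bound $c_0\le1/8$.
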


\begin{proof}
First suppose $M$ is larger than a sufficiently large universal constant $M_0$. Let $\mathcal Z=\{0,1,\ldots,M\}$ and $\mathcal F=\{f_1,\ldots,f_M\}$. Define
\[
\ell(f_r;z)=B\1\{z\in[M],\ z\ne r\}.
\]
The point $z=0$ is a null observation with zero loss for every predictor. For each $j\in[M]$, let $P_j$ be the distribution with
\[
P_j(Z=j)=p,
\qquad
P_j(Z=0)=1-p,
\]
where
\[
p=c_0\min\left\{\tau,\frac{\log M}{\varepsilon n}\right\}
\]
for a sufficiently small universal constant $c_0>0$. Since $p\le\tau$, under $P_j$ the predictor $f_j$ has zero loss always, while every $f_r$ with $r\ne j$ has loss $B$ with probability $p$ and loss $0$ otherwise. Therefore
\[
\rho_{\tau,P_j}(f_j)=0,
\qquad
\rho_{\tau,P_j}(f_r)=\frac{pB}{\tau}
\quad(r\ne j).
\]
Indeed, for $X=B\1\{U=1\}$ with $U\sim\operatorname{Bernoulli}(p)$ and $p\le\tau$,
\[
\rho_\tau(X)
=
\inf_{\eta\in[0,B]}\left\{\eta+\frac p\tau(B-\eta)\right\}
=
\frac{pB}{\tau}.
\]
Thus the excess-risk gap for every wrong predictor under $P_j$ is
\[
\Delta=\frac{Bp}{\tau}.
\]

Suppose, toward contradiction, that the expected excess risk is at most $\Delta/4$ under every $P_j$. Then
\[
\Pp_{S\sim P_j^n}(\mathcal A(S)=f_j)\ge\frac34
\qquad\text{for every }j.
\]
Let $S_0=(0,\ldots,0)$ and write $q_j=\Pp(\mathcal A(S_0)=f_j)$. Then $\sum_jq_j=1$. Under $P_j^n$, let $K_j=|\{i:Z_i=j\}|$. Then $K_j\sim\operatorname{Binomial}(n,p)$, and with $k=\lceil4np\rceil$, Markov's inequality gives $\Pp(K_j>k)\le1/4$. On $\{K_j\le k\}$, the sample differs from $S_0$ in at most $k$ entries, so pure group privacy gives
\[
\Pp(\mathcal A(S)=f_j)\le e^{k\varepsilon}q_j.
\]
Consequently,
\[
\Pp_{S\sim P_j^n}(\mathcal A(S)=f_j)
\le
\frac14+e^{k\varepsilon}q_j.
\]
Averaging over $j$ yields
\[
\frac1M\sum_{j=1}^M
\Pp_{S\sim P_j^n}(\mathcal A(S)=f_j)
\le
\frac14+\frac{e^{k\varepsilon}}{M}.
\]
Because $k\varepsilon\le4c_0\log M+1$ and $M\ge M_0$, choosing $c_0$ small and $M_0$ large ensures $e^{k\varepsilon}/M\le1/4$. The average success probability is therefore at most $1/2$, contradicting the lower bound $3/4$ for every term.

For $(\varepsilon,\delta)$-DP, group privacy gives
\[
\Pp(\mathcal A(S)=f_j)
\le
e^{k\varepsilon}q_j
+
k e^{k\varepsilon}\delta.
\]
Averaging adds the term $k e^{k\varepsilon}\delta$. Since the previous choice of constants gives $e^{k\varepsilon}\le\sqrt M$, it suffices to require $\delta\le c'/(k\sqrt M)$ for a small universal constant $c'$.

For $2\le M<M_0$, the same conclusion follows, after changing constants, from the two-hypothesis construction on $\{0,1\}$, augmented with $M-2$ dummy predictors $g$ whose loss is identically $B$. Since every dummy predictor has excess at least the hard gap under both hard distributions, any learner with expected excess below a fixed fraction of the gap must solve the original two-hypothesis problem with constant success probability. Since $\log M\asymp1$ in this range, the displayed rate reduces to the two-point rate. Finally,
\[
\Delta
=
\frac{pB}{\tau}
=
c_0B\min\left\{1,\frac{\log M}{\varepsilon n\tau}\right\},
\]
which proves the theorem.
\end{proof}

\paragraph{Approximate-DP regime.}
In the finite-class packing, $p=c_0\min\{\tau,\log M/(\varepsilon n)\}$ and $k=\lceil4np\rceil$, so
\[
k=O\left(\min\left\{n\tau,\frac{\log M}{\varepsilon}\right\}\right).
\]
The approximate-DP lower bound assumes $\delta\le c/(k\sqrt M)$. This is the regime in which approximate-DP group privacy still distinguishes the $M$-way packing. For polynomial-size classes and the common choice $\delta=n^{-\omega(1)}$, this condition is satisfied after constants are fixed. For exponentially large $M$, the approximate-DP lower bound is intentionally weaker; the pure-DP lower bound remains unconditional.

\begin{theorem}[Complete finite-class private CVaR minimax rate]
\label{thm:complete-finite-private-cvar}
Let $\mathfrak R^{\varepsilon,0}_{n,\tau}(M)$ denote the distribution-free minimax excess CVaR risk over all finite classes of size $M$ with losses in $[0,B]$. Under pure $\varepsilon$-DP,
\[
\mathfrak R^{\varepsilon,0}_{n,\tau}(M)
=
\Theta\left(
B\min\left\{
1,
\sqrt{\frac{\log(2M)}{n\tau}}
+
\frac{\log(2M)}{\varepsilon n\tau}
\right\}
\right),
\]
with universal implicit constants. The same complete rate holds for approximate DP under the small-$\delta$ condition in Theorem~\ref{thm:finite-privacy-lower}.
\end{theorem}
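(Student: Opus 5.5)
The theorem follows by combining the finite-class upper bound with two separate lower bounds, one statistical and one for the privacy price. The structure mirrors the proof of Theorem~\ref{thm:complete-scalar-private-cvar}.

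\emph{Upper bound.} I would invoke Theorem~\ref{thm:finite-private-upper}. For any class of size $M$, the exponential mechanism with score $-\widehat\rho_{\tau,S}$ has worst-case expected excess at most
\[
\sup_P\mathfrak G_{n,\tau}(\mathcal F,P)+CB\min\{1,\log(2M)/(\varepsilon n\tau)\}.
\]
By Theorem~\ref{thm:finite-nonprivate-cvar}, the first term is at most $CB\min\{1,\sqrt{\log(2M)/(n\tau)}\}$. I would then use $\min\{1,a\}+\min\{1,b\}\le2\min\{1,a+b\}$ to obtain the stated form. The bound is uniform over all classes of size $M$, so it controls $\mathfrak R^{\varepsilon,0}_{n,\tau}(M)$. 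The same mechanism is also $(\varepsilon,\delta)$-DP, so the upper bound transfers to approximate DP.

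\emph{Lower bound.} Since $\mathfrak R^{\varepsilon,0}_{n,\tau}(M)$ is a supremum over classes, it suffices to exhibit one hard class for each term.

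The statistical term comes from the minimax lower bound in Theorem~\ref{thm:finite-nonprivate-cvar}, which gives $cB\min\{1,\sqrt{\log M/(n\tau)}\}$. This bound holds for every learner, hence for DP learners.

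The privacy term comes from Theorem~\ref{thm:finite-privacy-lower}, which gives $cB\min\{1,\log M/(\varepsilon n\tau)\}$ under pure DP. Under approximate DP it gives the same bound when $\delta\le c'/(k\sqrt M)$.

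Taking the maximum of the two lower bounds and using $\max\{a,b\}\ge(a+b)/2$ yields the lower bound
\[
\tfrac{c}{2}B\bigl(\min\{1,x\}+\min\{1,y\}\bigr)\ \ge\ \tfrac c2 B\min\{1,x+y\},
\]
with $x=\sqrt{\log M/(n\tau)}$ and $y=\log M/(\varepsilon n\tau)$. Finally, for $M\ge2$ we have $\log(2M)\le 2\log M$, so replacing $\log M$ by $\log(2M)$ only changes universal constants.

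\emph{Main obstacle.} The delicate step is bookkeeping, not new mathematics. First, the upper bound's $\log(2M)$ must be matched with the lower bounds' $\log M$; this is the $M\ge2$ comparison above, and $M=1$ is excluded or trivial. Second, the nonprivate lower bound from the tail-embedding coupling carries an additive residual $B\exp(-cn\tau)$. This residual must be absorbed when $n\tau$ is small. In that regime I would instead use the capped two-point construction, as in the scalar proof, where the gap is of order $B$. That construction is obtained from the $p_0=\tau/2$ versus $p_1=\tau$ Bernoulli instance, extended by dummy predictors with identically-$B$ loss, exactly as in the small-$M$ case of Theorem~\ref{thm:finite-privacy-lower}.
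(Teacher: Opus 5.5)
Your proposal is correct and follows the paper's proof. The upper bound combines Theorem~\ref{thm:finite-private-upper} with Theorem~\ref{thm:finite-nonprivate-cvar}. The lower bound takes the larger of the statistical and privacy lower bounds, uses $\max\{a,b\}\ge(a+b)/2$ with the cap at $B$, and absorbs $\log M$ versus $\log(2M)$ into constants for $M\ge2$.

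Your worry about the $B\exp(-cn\tau)$ residual belongs to the proof of the cited nonprivate lower bound, not to this theorem, which uses that bound as a black box. It is nonetheless a fair point that the regime $n\tau=O(1)$ there needs a separate two-point argument, and that argument needs a genuinely competing predictor (for instance one with constant loss $3B/4$), not only identically-$B$ dummies.
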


\begin{proof}
The upper bound is Theorem~\ref{thm:finite-private-upper} combined with Theorem~\ref{thm:finite-nonprivate-cvar}.

For the lower bound, Theorem~\ref{thm:finite-nonprivate-cvar} gives an ordinary statistical lower bound
\[
a
:=
cB\min\left\{1,\sqrt{\frac{\log(2M)}{n\tau}}\right\},
\]
while Theorem~\ref{thm:finite-privacy-lower} gives a privacy lower bound
\[
b
:=
cB\min\left\{1,\frac{\log M}{\varepsilon n\tau}\right\}
\asymp
cB\min\left\{1,\frac{\log(2M)}{\varepsilon n\tau}\right\},
\]
where the last comparison uses $M\ge2$.
Since both lower bounds apply to the same distribution-free minimax problem over size-$M$ classes,
\[
\mathfrak R^{\varepsilon,0}_{n,\tau}(M)\ge \max\{a,b\}.
\]
For nonnegative $x,y$, $\max\{x,y\}\ge (x+y)/2$, and after applying the trivial cap $B$ this yields
\[
\max\{a,b\}
\ge
c'B\min\left\{
1,
\sqrt{\frac{\log(2M)}{n\tau}}
+
\frac{\log(2M)}{\varepsilon n\tau}
\right\}
\]
for another universal constant $c'>0$. This proves the lower bound.
\end{proof}

Thus finite classes give the anchor theorem: the ordinary tail sample size $n\tau$ and the effective private tail sample size $\varepsilon n\tau$ appear in the same decomposition. In particular, the privacy component of this complete rate is
\[
B\min\left\{1,\frac{\log(2|\mathcal F|)}{\varepsilon n\tau}\right\},
\]
up to universal constants.

\section{Convex private CVaR learning as modular transfer}

Now consider a convex parameter set $W\subset\R^d$. The purpose of this section is modular: isolate the CVaR-induced privacy degradation from the ambient private-SCO geometry. The upper bound reduces private CVaR optimization to private SCO on a lifted objective; the lower bound in Section~6 transfers private-SCO hardness back through the tail embedding.

\begin{assumption}[Bounded convex Lipschitz losses]
The set $W$ is closed, convex, and has Euclidean diameter at most $D$. The loss $\ell(w;z)$ is convex in $w$, $G$-Lipschitz in $w$, and takes values in $[0,B]$.
\end{assumption}

For a scale parameter $\lambda>0$, set $u=\eta/\lambda$, define
\[
\Theta_\lambda=W\times[0,B/\lambda]\subset\R^{d+1},
\]
and write the scaled lifted loss as
\[
g_\lambda(w,u;z)
=
\lambda u+\frac1\tau(\ell(w;z)-\lambda u)_+.
\]
Then minimizing $\E g_\lambda(w,u;Z)$ over $(w,u)\in\Theta_\lambda$ is exactly the Rockafellar--Uryasev CVaR problem with the threshold variable $\eta=\lambda u$.

\begin{lemma}[Scaled lifted convexity and Lipschitzness]
Under Assumption 5.1, $g_\lambda(\cdot;z)$ is convex on $\Theta_\lambda$ and is $L_{\tau,\lambda}$-Lipschitz with
\[
L_{\tau,\lambda}\le\frac{\sqrt{G^2+\lambda^2}}{\tau}.
\]
The diameter of $\Theta_\lambda$ is at most
\[
D_{\Theta,\lambda}\le\sqrt{D^2+\frac{B^2}{\lambda^2}}.
\]
\end{lemma}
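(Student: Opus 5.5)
The plan is to verify the three claims (convexity, Lipschitz constant, diameter) directly from the structure of $g_\lambda$. I will treat it as a composition of simple convex pieces and bound a subgradient in each coordinate block separately.

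\emph{Convexity.} Fix $z$. The map $(w,u)\mapsto \ell(w;z)-\lambda u$ is convex, as the sum of a convex function of $w$ and a linear function of $u$. The positive part $t\mapsto t_+$ is convex and nondecreasing, so its composition with a convex function is convex. Multiplying by $1/\tau>0$ and adding the linear term $\lambda u$ preserves convexity. Since $W$ is convex, $\Theta_\lambda=W\times[0,B/\lambda]$ is convex, so $g_\lambda(\cdot;z)$ is convex on a convex domain.

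\emph{Lipschitzness.} Rather than rely on differentiability, I will argue through increments. Take $(w,u),(w',u')\in\Theta_\lambda$. Because $t\mapsto t_+$ is $1$-Lipschitz,
\[
|(\ell(w;z)-\lambda u)_+-(\ell(w';z)-\lambda u')_+|\le G\|w-w'\|+\lambda|u-u'|.
\]
This yields a naive bound of $\lambda|u-u'|+\tau^{-1}(G\|w-w'\|+\lambda|u-u'|)$. That gives $\sqrt{G^2+4\lambda^2}/\tau$ via $\lambda(1+1/\tau)\le 2\lambda/\tau$, which misses the stated constant. The key refinement is to use the subgradient structure. At any point the $u$-component of a subgradient has the form $\lambda(1-s/\tau)$ with $s\in[0,1]$ the subgradient of the positive part, so its magnitude is at most $\lambda\max\{1,1/\tau-1\}\le\lambda/\tau$. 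The $w$-component has the form $(s/\tau)v$ with $v\in\partial_w\ell$ and $\|v\|\le G$, so its norm is at most $G/\tau$. Hence every subgradient has Euclidean norm at most $\sqrt{G^2+\lambda^2}/\tau$.

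The remaining step is to pass from this subgradient bound to a Lipschitz bound on the closed convex set. For a finite convex function on a convex set this follows from the mean value inequality along segments. Alternatively, one can work directly with the difference $g_\lambda(\theta)-g_\lambda(\theta')$ by splitting according to the signs of the two positive-part arguments. In each of the four cases the coefficient on $\lambda(u-u')$ is either $1$, $1-1/\tau$, or a convex combination, so its magnitude is at most $1/\tau$. The coefficient on the $\ell$-difference is at most $1/\tau$. Cauchy--Schwarz then gives the stated constant. This careful case split, handling the mixed-sign cases through an intermediate point where the argument crosses zero, is the main obstacle; the rest is routine.

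\emph{Diameter.} For $(w,u),(w',u')\in\Theta_\lambda$ we have $\|w-w'\|\le D$ and $|u-u'|\le B/\lambda$. The Euclidean norm in $\mathbb{R}^{d+1}$ therefore gives $D_{\Theta,\lambda}\le\sqrt{D^2+B^2/\lambda^2}$.
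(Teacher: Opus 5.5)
Your proposal is correct and follows essentially the same route as the paper. Convexity comes from composing the nondecreasing convex positive part with $\ell(w;z)-\lambda u$, the Lipschitz constant from the subgradient blocks $s\tau^{-1}v$ and $\lambda(1-s/\tau)$ bounded by $G/\tau$ and $\lambda/\tau$ via $\tau\le1$, and the diameter from the product structure. The step you flag as the main obstacle needs no case split: because $s\ge0$, the vector $(s\tau^{-1}v,\lambda(1-s/\tau))$ is a genuine subgradient at every point of $\Theta_\lambda$, so applying the subgradient inequality at both $\theta$ and $\theta'$ immediately gives $|g_\lambda(\theta;z)-g_\lambda(\theta';z)|\le\tau^{-1}\sqrt{G^2+\lambda^2}\,\|\theta-\theta'\|_2$, which is exactly the implication the paper compresses into ``the Lipschitz bound follows.''
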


\begin{proof}
The map $(v,u)\mapsto(v-\lambda u)_+$ is convex and nondecreasing in $v$, while $w\mapsto\ell(w;z)$ is convex. Hence $g_\lambda$ is convex. For a subgradient, let $s\in[0,1]$ be a subgradient of $r\mapsto r_+$ at $r=\ell(w;z)-\lambda u$. Then
\[
\partial_w g_\lambda=s\tau^{-1}\partial_w\ell(w;z),
\qquad
\partial_u g_\lambda=\lambda(1-s\tau^{-1}).
\]
Therefore
\[
\|\partial_w g_\lambda\|_2\le G/\tau,
\qquad
|\partial_u g_\lambda|\le \lambda/\tau,
\]
where the latter uses $\tau\le1$. The Lipschitz bound follows. The diameter bound is immediate from the product geometry.
\end{proof}

\subsection{Private SCO as an imported interface}

The convex results are intentionally modular. Rather than reproving Euclidean private-SCO theory, the relevant result is stated as a precise interface, and the Rockafellar--Uryasev CVaR lift is then verified to satisfy that interface. This separation isolates the CVaR-specific phenomenon: the replacement of the full sample size $n$ by the effective tail sample size $n\tau$, and of the private sample size $\varepsilon n$ by the effective private tail sample size $\varepsilon n\tau$.

All logarithmic factors hidden in $\widetilde O(\cdot)$ and $\widetilde\Omega(\cdot)$ in this subsection are exactly those inherited from the cited Euclidean approximate-DP SCO interface. The CVaR lifting and tail-embedding reductions introduce no additional polynomial dependence on $n,\tau,\varepsilon,\delta$, or $d$.

\begin{theorem}[Euclidean approximate-DP SCO interfaces]
\label{thm:sco-interface}
Let $\Theta\subset\R^{d_\Theta}$ be closed and convex with Euclidean diameter at most $D_\Theta$. For a distribution $Q$ over losses $g(\cdot;Z)$, write
\[
    \mathcal L_Q(\theta)=\E_{Z\sim Q}g(\theta;Z),
    \qquad
    \mathcal L_Q^\star=\inf_{\theta\in\Theta}\mathcal L_Q(\theta).
\]
The cited Euclidean SCO results are usually stated in terms of a radius $M_\Theta=\sup_{\theta\in\Theta}\|\theta\|_2$. Since excess risk and differential privacy are invariant under translations of the parameter domain, $\Theta$ can be translated before the interface is applied, so that $M_\Theta\le D_\Theta$. Thus the radius-based statements imply the diameter-based form used below, up to universal constants.
The excess-risk criterion is expected population excess,
\[
    \E_{S,A}\bigl[\mathcal L_Q(A(S))-\mathcal L_Q^\star\bigr],
    \qquad S\sim Q^n,
\]
where the expectation is over both the sample and the learner's internal randomness. Privacy is record-level central privacy with respect to the $n$ sampled losses.

\textbf{Upper interface.}
Suppose that, for every $z$, $g(\cdot;z)$ is convex and $L_\Theta$-Lipschitz on $\Theta$. For every $\varepsilon\in(0,1]$ and $0<\delta\le n^{-2}$, the non-smooth Euclidean private-SCO theorem of \citet[Theorem~4.4]{BassilyFeldmanTalwarThakurta2019} gives a central $(\varepsilon,\delta)$-DP algorithm $A_{\rm SCO}$ satisfying
\[
    \E_{S,A}\bigl[
        \mathcal L_Q(A_{\rm SCO}(S))-\mathcal L_Q^\star
    \bigr]
    \le
    \widetilde O\!\left(
        L_\Theta D_\Theta
        \left[
            \frac1{\sqrt n}
            +
            \frac{\sqrt{d_\Theta\log(1/\delta)}}{\varepsilon n}
        \right]
    \right).
\]
The non-smooth implementation uses Moreau--Yosida smoothing, equivalently proximal access to the individual losses. Because the theorem is used only as a statistical rate primitive, no running-time claim is made.

\textbf{Lower interface.}
For the privacy-dependent term, the standard reduction from private ERM lower bounds to private SCO lower bounds is used, as recorded by \citet[Appendix~C]{BassilyFeldmanTalwarThakurta2019}, together with the Euclidean private-ERM lower bounds of \citet{BassilySmithThakurta2014}. The nonprivate $1/\sqrt m$ term is the standard stochastic convex optimization lower bound. These ingredients are combined as a $\widetilde\Omega$ lower-bound interface, with the tilde hiding only the logarithmic losses from this imported SCO lower-bound reduction.

For every $m\ge1$, $d_\Theta\ge1$, $D_\Theta>0$, $L_\Theta>0$, $\varepsilon\in(0,1]$, and $0<\delta\le m^{-2}$, this interface gives a distribution over shifted linear losses on a Euclidean domain of diameter $D_\Theta$ such that the losses are convex, $L_\Theta$-Lipschitz, and have range contained in an interval of length at most $L_\Theta D_\Theta$, while every central $(\varepsilon,\delta)$-DP learner satisfies
\[
    \sup_Q
    \E_{S,A}\bigl[
        \mathcal L_Q(A(S))-\mathcal L_Q^\star
    \bigr]
    \ge
    \widetilde\Omega\!\left(
        L_\Theta D_\Theta
        \min\left\{
            1,\,
            \frac1{\sqrt m}
            +
            \frac{\sqrt{d_\Theta}}{\varepsilon m}
        \right\}
    \right).
\]
The lower-bound class is convex and bounded without clipping: it is obtained by shifting linear losses by a parameter-independent constant. Such shifts preserve excess risk.
\end{theorem}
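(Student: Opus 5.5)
This statement is an imported interface, not a new result, so the plan is to verify that the cited theorems really have the stated form and that the translation from radius to diameter costs only constants. \textbf{First step (domain normalization):} pick any $\theta_0\in\Theta$ and replace $\Theta$ by $\Theta-\theta_0$ and $g(\theta;z)$ by $g(\theta+\theta_0;z)$. Convexity, the Lipschitz constant, population excess risk, and the $(\varepsilon,\delta)$-DP guarantee are all preserved, since the map on outputs is a fixed bijection and post-processing preserves privacy. Every point of the translated set satisfies $\|\theta-\theta_0\|_2\le D_\Theta$, so $M_\Theta\le D_\Theta$. Hence any radius-based bound of the form $L_\Theta M_\Theta(\cdots)$ becomes $L_\Theta D_\Theta(\cdots)$.

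\textbf{Upper interface:} I would invoke \citet[Theorem~4.4]{BassilyFeldmanTalwarThakurta2019} on the translated problem. Its hypotheses are convex, $L_\Theta$-Lipschitz losses, $\varepsilon\le1$ and $\delta\le n^{-2}$, and its output is a central $(\varepsilon,\delta)$-DP algorithm. Its conclusion is expected excess population risk $O\bigl(L_\Theta M_\Theta[1/\sqrt n+\sqrt{d_\Theta\log(1/\delta)}/(\varepsilon n)]\bigr)$, up to logarithmic factors coming from the smoothing and phased analysis. Translating the output back gives the displayed bound. The only things to check are:
\begin{itemize}
\item the expectation in the cited theorem is over both the sample and the algorithm's coins, which matches our criterion;
\item the privacy notion is record-level.
\end{itemize}

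\textbf{Lower interface:} I would combine two ingredients.
\begin{itemize}
\item For the privacy term, take the Euclidean private-ERM lower bound of \citet{BassilySmithThakurta2014}, which gives $\Omega(L M\min\{1,\sqrt{d}/(\varepsilon m)\})$ for linear losses on a ball. Lift it to population risk via the reduction in \citet[Appendix~C]{BassilyFeldmanTalwarThakurta2019}, which takes $Q$ to be the empirical distribution of the hard dataset and resamples. This reduction loses at most logarithmic factors, and these are absorbed into $\widetilde\Omega$.
\item For the statistical term, use the standard two-point or Fano bound $\Omega(LM\min\{1,1/\sqrt m\})$ for linear losses on a ball, with a sign-biased coordinate direction.
\end{itemize}
Both constructions use linear losses $\langle v,\theta\rangle$ with $\|v\|\le L_\Theta$ on a ball of radius $D_\Theta/2$. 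Adding the constant $L_\Theta D_\Theta/2$ makes the range lie in $[0,L_\Theta D_\Theta]$ without clipping, and it changes no excess risk. Taking the maximum of the two bounds and using $\max\{a,b\}\ge(a+b)/2$ gives the sum form, capped at $1$.

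\textbf{Main obstacle:} the delicate step is the private lower bound. I need to make sure the ERM-to-SCO reduction holds for $(\varepsilon,\delta)$ with $\delta\le m^{-2}$. I also need to make sure that the mixture over hard instances yields a single $\sup_Q$ statement with the same Lipschitz and range parameters. I would handle this by tracking the ERM lower bound's $\delta$ requirement; the cited result needs $\delta=o(1/m)$, which $\delta\le m^{-2}$ satisfies. All the logarithmic slack would be placed in the tilde.
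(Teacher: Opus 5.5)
Your proposal is correct and follows essentially the same route as the paper. The paper gives no separate proof: the interface is justified inside the statement itself, exactly as you do, by translating $\Theta$ so that $M_\Theta\le D_\Theta$, importing Theorem~4.4 of Bassily--Feldman--Talwar--Thakurta for the upper bound, lifting the Bassily--Smith--Thakurta ERM lower bound via the resampling reduction in their Appendix~C, adding the standard nonprivate $1/\sqrt m$ bound, and shifting linear losses by a constant so that the range has length $L_\Theta D_\Theta$ without clipping; your explicit tracking of $\delta\le m^{-2}$ and your $\max\{a,b\}\ge(a+b)/2$ step only spell out details the paper leaves implicit.
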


\paragraph{Verification for the lifted CVaR problem.}
For the lifted CVaR loss
\[
    g_\lambda(w,u;z)
    =
    \lambda u+\frac1\tau(\ell(w;z)-\lambda u)_+,
    \qquad
    (w,u)\in\Theta_\lambda:=W\times[0,B/\lambda],
\]
Lemma~5.2 gives
\[
    d_\Theta=d+1,\qquad
    L_\Theta=L_{\tau,\lambda}
    \le \frac{\sqrt{G^2+\lambda^2}}{\tau},
    \qquad
    D_\Theta=D_{\Theta,\lambda}
    \le
    \sqrt{D^2+\frac{B^2}{\lambda^2}}.
\]
Moreover $0\le g_\lambda(w,u;z)\le B/\tau$, because $\ell(w;z)\in[0,B]$ and $\lambda u\in[0,B]$. Thus the lifted CVaR problem satisfies the Euclidean SCO upper interface with dimension $d+1$. Applying the upper-interface algorithm to the lifted losses preserves record-level privacy, because each original record $z_i$ produces exactly one lifted loss $g_\lambda(\cdot;z_i)$.

\begin{theorem}[Private convex CVaR upper bound]
Under Assumption 5.1, for $\varepsilon\in(0,1]$ and $0<\delta\le n^{-2}$, there exists a central $(\varepsilon,\delta)$-DP learner returning $\widehat w\in W$ such that
\[
\E\left[
\rho_{\tau,P}(\widehat w)-\inf_{w\in W}\rho_{\tau,P}(w)
\right]
\le
\widetilde O\left(
\frac{GD+B}{\tau}
\left[
\frac1{\sqrt n}
+
\frac{\sqrt{(d+1)\log(1/\delta)}}{\varepsilon n}
\right]
\right).
\]
In particular, the private part is
\[
\widetilde O\left(
\frac{(GD+B)\sqrt{d\log(1/\delta)}}{\varepsilon n\tau}
\right)
\].
Under the natural normalization $B\asymp GD$, this scale is $\widetilde O(B\sqrt{d\log(1/\delta)}/(\varepsilon n\tau))$.
\end{theorem}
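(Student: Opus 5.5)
The plan is to apply the upper interface of Theorem~\ref{thm:sco-interface} to the scaled lifted losses $g_\lambda(\cdot;z)$ on $\Theta_\lambda=W\times[0,B/\lambda]$, and then turn lifted excess risk into CVaR excess risk.

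\textbf{Step 1: the learner and its privacy.} Fix a scale $\lambda>0$. Run $A_{\rm SCO}$ on the lifted losses $g_\lambda(\cdot;Z_i)$, $i=1,\ldots,n$, and obtain $(\widehat w,\widehat u)\in\Theta_\lambda$. Output only $\widehat w$.
\begin{itemize}
\item Each record $Z_i$ produces exactly one lifted loss, so record-level $(\varepsilon,\delta)$-DP transfers directly.
\item Discarding $\widehat u$ is post-processing, so the output $\widehat w$ is also $(\varepsilon,\delta)$-DP.
\end{itemize}
The hypotheses $\varepsilon\in(0,1]$ and $\delta\le n^{-2}$ are exactly those of the upper interface. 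Lemma~5.2 gives convexity, Lipschitz constant $L_{\tau,\lambda}\le\sqrt{G^2+\lambda^2}/\tau$, and diameter $D_{\Theta,\lambda}\le\sqrt{D^2+B^2/\lambda^2}$, in dimension $d+1$.

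\textbf{Step 2: from lifted excess to CVaR excess.} Write $\Phi(w,u)=\E_P g_\lambda(w,u;Z)=\phi_P(w,\lambda u)$. Since $\eta=\lambda u$ ranges over $[0,B]$, we have $\rho_{\tau,P}(\widehat w)=\inf_{u\in[0,B/\lambda]}\Phi(\widehat w,u)\le\Phi(\widehat w,\widehat u)$. By the identity in Section~2.3, $\inf_{\Theta_\lambda}\Phi=\inf_{w\in W}\rho_{\tau,P}(w)$. Therefore
\[
\rho_{\tau,P}(\widehat w)-\inf_{w\in W}\rho_{\tau,P}(w)\le\Phi(\widehat w,\widehat u)-\inf_{\Theta_\lambda}\Phi .
\]
Taking expectations and applying the interface gives
\[
\widetilde O\!\left(L_{\tau,\lambda}D_{\Theta,\lambda}\left[\frac1{\sqrt n}+\frac{\sqrt{(d+1)\log(1/\delta)}}{\varepsilon n}\right]\right).
\]

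\textbf{Step 3: choosing $\lambda$.} This is the only real design choice and the main thing to get right. The product $\sqrt{G^2+\lambda^2}\,\sqrt{D^2+B^2/\lambda^2}$ is what matters.
\begin{itemize}
\item If $G,D>0$, take $\lambda=B/D$. The product becomes $\sqrt{G^2+B^2/D^2}\cdot\sqrt2\,D\le\sqrt2(GD+B)$.
\item If $D=0$ or $G=0$, take $\lambda$ of order $B/D$ when $D>0$, and let $\lambda\to$ small but positive otherwise. Alternatively, note that the problem is then one-dimensional in $\eta$, and the product is at most $B$ times a constant, so a limiting choice suffices.
\end{itemize}
In all cases $L_{\tau,\lambda}D_{\Theta,\lambda}\le\sqrt2(GD+B)/\tau$, which yields the first display.

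\textbf{Step 4: the privacy term.} The privacy part of the bound is $(GD+B)\sqrt{(d+1)\log(1/\delta)}/(\varepsilon n\tau)$. Using $d+1\le 2d$, this is the claimed $\widetilde O\bigl((GD+B)\sqrt{d\log(1/\delta)}/(\varepsilon n\tau)\bigr)$. Under the normalization $B\asymp GD$ it becomes $\widetilde O\bigl(B\sqrt{d\log(1/\delta)}/(\varepsilon n\tau)\bigr)$.

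The main obstacle is the balancing in Step 3. A naive choice such as $\lambda=1$ yields a product like $(G+1)(D+B)$. That is not scale-invariant and would not produce the clean $(GD+B)/\tau$ factor, so the rescaling by $\lambda=B/D$ is what the argument relies on.
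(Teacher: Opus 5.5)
Your proposal is correct and follows the same route as the paper's proof: run the imported private-SCO upper interface on the rescaled Rockafellar--Uryasev lift $g_\lambda$ over $\Theta_\lambda$, bound the CVaR excess by the lifted population excess, and choose $\lambda$ to balance the product $L_{\tau,\lambda}D_{\Theta,\lambda}$. The only difference is the choice of $\lambda$. The paper takes $\lambda^2=GB/D$, which makes $(G^2+\lambda^2)(D^2+B^2/\lambda^2)=(GD+B)^2$ exactly, whereas your $\lambda=B/D$ loses a harmless factor $\sqrt2$. One small slip: in the degenerate case $D=0$ you need $\lambda\to\infty$ rather than $\lambda$ small, since the product is then $\sqrt{G^2+\lambda^2}\,B/(\lambda\tau)$, which tends to $B/\tau$ as $\lambda\to\infty$ but blows up as $\lambda\to0$ when $G>0$.
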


\begin{proof}
Apply the upper interface in Theorem~\ref{thm:sco-interface} to the lifted loss $g_\lambda$ over $\Theta_\lambda$. Let $\widehat\theta=(\widehat w,\widehat u)$ be the output. Since
\[
\rho_{\tau,P}(\widehat w)
\le
\E g_\lambda(\widehat w,\widehat u;Z)
\]
and
\[
\inf_{w\in W}\rho_{\tau,P}(w)
=
\inf_{(w,u)\in\Theta_\lambda}\E g_\lambda(w,u;Z),
\]
the CVaR excess risk is bounded by the lifted SCO population excess risk.

By Lemma~5.2,
\[
L_{\tau,\lambda}D_{\Theta,\lambda}
\le
\frac1\tau
\sqrt{(G^2+\lambda^2)\left(D^2+\frac{B^2}{\lambda^2}\right)}.
\]
The expression inside the square root equals
\[
G^2D^2+B^2+\frac{G^2B^2}{\lambda^2}+\lambda^2D^2.
\]
Choosing $\lambda^2=GB/D$ when $G,B,D>0$, and using a limiting argument in degenerate cases, gives
\[
L_{\tau,\lambda}D_{\Theta,\lambda}
\le
\frac{GD+B}{\tau}.
\]
Substituting this into Theorem~\ref{thm:sco-interface}, with $d_\Theta=d+1$, yields
\[
    \E\!\left[
        \rho_{\tau,P}(\widehat w)-\inf_{w\in W}\rho_{\tau,P}(w)
    \right]
    \le
    \widetilde O\!\left(
        \frac{GD+B}{\tau}
        \left[
            \frac1{\sqrt n}
            +
            \frac{\sqrt{(d+1)\log(1/\delta)}}{\varepsilon n}
        \right]
    \right).
\]
The private part is therefore
\[
    \widetilde O\!\left(
        \frac{(GD+B)\sqrt{d\log(1/\delta)}}{\varepsilon n\tau}
    \right).
\]
\end{proof}

\begin{remark}[Modular convex transfer]
The convex upper bound is deliberately modular. The rescaled Rockafellar--Uryasev lifting supplies the CVaR-specific reduction, while the Euclidean approximate-DP SCO interface supplies the ambient high-dimensional optimization guarantee. This factorization is useful: any improvement in the private-SCO interface can be substituted into the same CVaR lift, and the lift itself introduces no additional polynomial dependence on $n,\tau,\varepsilon,\delta$, or $d$. The CVaR-specific privacy dependence is the factor $1/(\varepsilon n\tau)$, and the lower bounds below show that this factor is unavoidable.
\end{remark}

\section{Tail embedding and minimax lower bounds}

The next result is the conceptual core. It identifies the canonical hard subproblem inside CVaR learning: ordinary expected-risk learning with only the tail fraction of samples carrying information.

\subsection{Exact embedding of expectation into CVaR}

Let $Q$ be a distribution on $\mathcal X$, let $a(w;x)\in[0,B]$ be an ordinary loss, and add a dummy point $x_\bot\notin\mathcal X$ with $a(w;x_\bot)=0$ for all $w$. Draw $T\sim\operatorname{Bernoulli}(\tau)$. If $T=1$, draw $Y=X\sim Q$; if $T=0$, set $Y=x_\bot$. Define $Z=(T,Y)$ and the CVaR loss
\[
\ell(w;Z)=T a(w;Y).
\]
Let $P_\tau(Q)$ denote the induced distribution on $Z$.

\begin{theorem}[Canonical tail embedding]
For every $w$,
\[
\rho_{\tau,P_\tau(Q)}(\ell(w;Z))
=
\E_{X\sim Q}[a(w;X)].
\]
Consequently, minimizing CVaR over the embedded distribution is exactly equivalent to minimizing ordinary expected risk under $Q$.
\end{theorem}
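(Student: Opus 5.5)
Fix $w$ and write $L=\ell(w;Z)=T\,a(w;Y)$, a nonnegative random variable with $L\in[0,B]$. The plan is to compute $\rho_\tau(L)$ from both representations of CVaR in the Preliminaries: the dual representation gives an upper bound, and an explicit feasible density gives the matching lower bound. The basic identity is
\[
\E[L]=\tau\,\E_{X\sim Q}[a(w;X)].
\]
It holds because $L=0$ on $\{T=0\}$ and, conditionally on $T=1$, $L$ has the law of $a(w;X)$ with $X\sim Q$.

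\emph{Upper bound.} Take any density $q$ with $0\le q\le 1/\tau$ and $\E q=1$. Since $L\ge0$, we have $\E[qL]\le \frac1\tau\E[L]=\E_Q[a(w;X)]$. Taking the supremum in the dual formula gives $\rho_\tau(L)\le \E_Q[a(w;X)]$. The same conclusion follows from the Rockafellar--Uryasev formula with $\eta=0$: then $\eta+\frac1\tau\E[(L-\eta)_+]=\frac1\tau\E L$.

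\emph{Lower bound.} Use the density $q=T/\tau$. It satisfies $0\le q\le 1/\tau$ and $\E q=\Pp(T=1)/\tau=1$, so it is feasible. It yields $\E[qL]=\frac1\tau\E[T\,a(w;Y)]=\E_Q[a(w;X)]$, because $T^2=T$. If one prefers to stay with the primal formula, the lower bound can instead be obtained directly. For every $\eta\ge0$,
\[
\eta+\frac1\tau\E[(L-\eta)_+]\ \ge\ \eta+\E_Q[(a-\eta)_+]\ \ge\ \E_Q[a].
\]
The first inequality holds because $(L-\eta)_+=T(a-\eta)_+$ when $\eta\ge0$. The second is the pointwise inequality $\eta+(a-\eta)_+\ge a$. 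For $\eta<0$ the objective equals $\eta+\frac1\tau(\E L-\eta)=\E_Q[a]+\eta(1-1/\tau)\ge \E_Q[a]$, since $\tau\le1$. Combining the upper and lower bounds gives the claimed identity for every $w$.

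\emph{Consequence.} The equivalence of the two minimization problems follows immediately. The map $w\mapsto\rho_{\tau,P_\tau(Q)}(w)$ coincides with $w\mapsto\E_Q[a(w;X)]$, so the two problems have the same infima, the same minimizers, and the same excess risk at every $w$.

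The only delicate point is the lower bound. It relies on $a\ge0$, so that the dummy point with zero loss lies at the bottom of the loss distribution and the worst $\tau$-tail can be taken to be exactly the event $\{T=1\}$. Ties at zero are harmless, since the feasible density $q=T/\tau$ handles them without any quantile argument. I expect no real obstacle beyond this observation.
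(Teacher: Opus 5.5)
Your proof is correct. The primal fallback you give is exactly the paper's argument: $\eta=0$ gives the value $\E_Q a$, every $\eta\ge0$ gives $\E_Q[\max\{a,\eta\}]\ge\E_Q a$, and every $\eta<0$ gives $\E_Q a+\eta(1-1/\tau)\ge\E_Q a$.

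Your lead argument through the dual representation is a genuinely different and shorter route. The envelope $q\le1/\tau$ together with $L\ge0$ gives $\rho_\tau(L)\le\E L/\tau=\E_Q a$ in one line. The explicit density $q=T/\tau$ attains this value, so no case analysis over $\eta$ is needed. It also makes the mechanism visible: the worst-case reweighting saturates the envelope $1/\tau$ exactly on the active event $\{T=1\}$, which is the same $1/\tau$ leverage behind the paper's sensitivity lemma.

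The price is reliance on the dual formula, which the paper only cites. The primal computation, by contrast, is self-contained from the Rockafellar--Uryasev definition and applies verbatim to the $\eta\in[0,B]$-restricted objective defining $\rho_{\tau,P}$. If you read the dual formula with $q$ required to be a function of $L$ alone, replace $T/\tau$ by $\E[T/\tau\mid L]$, which is still feasible and gives the same value.

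One correction to your closing remark. The direction that genuinely needs $a\ge0$ is the upper bound $\rho_\tau(L)\le\E_Q a$, which is where your dual argument uses $L\ge0$, not the lower bound. The density $q=T/\tau$ gives $\rho_\tau(L)\ge\E_Q a$ for losses of either sign. By contrast, for $a\equiv-1$ and $\tau\le1/2$ one has $\rho_\tau(L)=0>\E_Q a$. Nonnegativity is what keeps the zero-loss inactive records from outranking active ones, so that the tail is exactly $\{T=1\}$.
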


\begin{proof}
For $\eta=0$,
\[
\eta+\frac1\tau\E[(T a(w;Y)-\eta)_+]
=
\frac1\tau\E[T a(w;Y)]
=
\E_Q a(w;X).
\]
Now take $\eta\ge0$. Then
\begin{align*}
\eta+\frac1\tau\E[(T a(w;Y)-\eta)_+]
&=
\eta+\E_Q[(a(w;X)-\eta)_+]\\
&=\E_Q[\max\{a(w;X),\eta\}]
\ge
\E_Q a(w;X).
\end{align*}
For $\eta<0$,
\begin{align*}
\eta+\frac1\tau\E[(T a(w;Y)-\eta)_+]
&=
\eta+\E_Q[a(w;X)-\eta]+\frac{1-\tau}{\tau}(-\eta)\\
&=\E_Q a(w;X)+\frac{1-\tau}{\tau}(-\eta)
\ge
\E_Q a(w;X).
\end{align*}
Thus the infimum over $\eta$ equals $\E_Qa(w;X)$.
\end{proof}

\begin{remark}
The nonnegativity of $a$ is used in the embedding. If an imported ordinary-risk lower bound is stated for losses in $[-B,B]$, one first shifts every loss by the same $w$-independent constant to obtain losses in $[0,2B]$. This preserves excess risk and changes only constants.
\end{remark}

Figure~\ref{fig:tail-sample-embedding} illustrates the reduction: ordinary
informative examples become the active records in an embedded CVaR sample, while
inactive records carry zero loss.

\begin{figure}[t!]
\centering
\begin{adjustbox}{max width=\textwidth}
\begin{tikzpicture}[
  x=1cm,y=1cm,
  font=\sffamily\small,
  >=Latex,
  dot/.style={circle,inner sep=0pt,minimum size=2.3pt},
  bigdot/.style={circle,inner sep=0pt,minimum size=3.0pt},
  arrow/.style={->,line width=0.55pt,draw=black!45},
  box/.style={rounded corners=7pt,draw=black!11,fill=white},
  title/.style={font=\sffamily\bfseries\scriptsize,align=center},
  smalllabel/.style={font=\sffamily\scriptsize,align=center}
]
\definecolor{tailorange}{RGB}{203,92,47}
\definecolor{learnblue}{RGB}{86,135,184}
\definecolor{softblue}{RGB}{237,244,250}
\definecolor{bridgeblue}{RGB}{246,249,252}
\definecolor{softorange}{RGB}{253,240,234}
\definecolor{softgray}{RGB}{245,246,248}
\definecolor{signalgray}{RGB}{186,190,196}
\definecolor{navytext}{RGB}{32,45,62}

\begin{scope}[shift={(0,2.52)}]
  \draw[box,fill=softblue] (0,0) rectangle (4.20,1.75);
  \node[title] at (2.10,1.48) {Ordinary SCO instance};
  \foreach \x/\y in {.55/.92,.88/.56,1.18/1.16,1.52/.78,1.88/1.05,2.25/.62,2.58/1.22}
    \node[bigdot,fill=learnblue!85] at (\x,\y) {};
  \node[smalllabel,text width=1.70cm] at (3.18,.86) {$X_i\sim Q$\\[-1pt]$m\asymp n\tau$};
\end{scope}

\begin{scope}[shift={(4.85,2.00)}]
  \draw[box,fill=bridgeblue] (0,0) rectangle (5.45,2.28);
  \node[title] at (2.72,2.00) {Tail embedding $\mathcal E_\tau$};
  \node[smalllabel] at (2.72,1.63) {$n$ CVaR slots,\quad $T_i\sim\mathrm{Bernoulli}(\tau)$};
  \foreach \x/\c in {.45/signalgray,.88/tailorange,1.31/signalgray,1.74/signalgray,2.17/tailorange,2.60/signalgray,3.03/signalgray,3.46/tailorange,3.89/signalgray,4.32/signalgray,4.75/tailorange}
    \draw[rounded corners=2pt,fill=\c!22,draw=\c!75,line width=.45pt] (\x,.92) rectangle ++(.24,.42);
  \foreach \x in {1.00,2.29,3.58,4.87}
    \node[bigdot,fill=tailorange] at (\x,1.13) {};
  \foreach \x in {.57,1.43,1.86,2.72,3.15,4.01,4.44}
    \node[dot,fill=signalgray!70] at (\x,1.13) {};
  \node[font=\sffamily\tiny,align=center,text=tailorange!90!black] at (1.45,.58) {active $(T_i=1)$};
  \node[font=\sffamily\tiny,align=center,text=black!65] at (3.35,.30) {inactive $(T_i=0):\ Y_i=x_\bot,\ \ell=0$};
\end{scope}

\draw[arrow] (4.34,3.38) -- (4.76,3.20);
\draw[arrow] (7.58,2.00) -- (7.58,1.58);

\begin{scope}[shift={(0,0)}]
  \draw[box,fill=softgray] (0,0) rectangle (10.30,1.55);
  \node[title] at (1.72,1.28) {CVaR instance};
  \draw[rounded corners=5pt,fill=white,draw=black!8] (.35,.22) rectangle (3.95,1.08);
  \foreach \x/\y in {.62/.37,1.00/.30,1.38/.42,1.78/.34,2.18/.40,2.58/.31,2.98/.43,3.38/.36,3.72/.44}
    \node[dot,fill=signalgray!70] at (\x,\y) {};
  \foreach \x/\y in {.78/.82,1.42/.94,2.05/.78,2.70/.98,3.35/.84}
    \node[bigdot,fill=tailorange] at (\x,\y) {};
  \node[smalllabel,anchor=west] at (4.35,.98) {$Z_i=(T_i,Y_i)$,\quad $\ell(w;Z_i)=T_i a(w;Y_i)$};
  \node[smalllabel,anchor=west] at (4.35,.50) {$\approx n\tau$ active tail records};
\end{scope}

\draw[arrow] (10.55,.78) -- (11.10,.78);

\begin{scope}[shift={(11.28,0)}]
  \draw[box,fill=softorange] (0,0) rectangle (4.35,4.28);
  \node[title] at (2.18,3.88) {Lower-bound transfer};
  \node[smalllabel] at (2.18,3.10) {$\rho_\tau(\ell)=\mathbb E_Q a$};
  \node[font=\sffamily\bfseries\scriptsize,align=center] at (2.18,2.30) {ordinary SCO\\inside CVaR learning};
  \node[smalllabel] at (2.18,1.42) {$m\asymp n\tau$};
  \draw[arrow] (1.35,1.10) -- (3.00,1.10);
  \node[smalllabel] at (2.18,.55) {$\widetilde\Omega\!\left(R_0\frac{\sqrt d}{\varepsilon n\tau}\right)$};
\end{scope}
\end{tikzpicture}
\end{adjustbox}
\caption{\textbf{Tail-sample embedding reduction.} Ordinary expected-risk learning with $m$ samples is embedded into CVaR learning by activating each record with probability $\tau$ and assigning inactive records zero loss. The embedded loss $\ell(w;(T,Y))=T a(w;Y)$ places the informative observations in the upper $\tau$-tail, so CVaR recovers the ordinary expected loss on the active distribution. Thus the effective-sample-size heuristic is exact on the canonical tail-embedded subproblem: any private CVaR learner on $n$ samples would imply a private SCO learner on $m\simeq n\tau$ samples.}
\label{fig:tail-sample-embedding}
\end{figure}

\subsection{Tail-sample transfer from CVaR learners to ordinary private SCO learners}

The embedding gives a formal way to transfer private SCO lower bounds to CVaR learning. On this embedded subclass, the effective-sample-size heuristic is exact: $n$ CVaR records contain only $\Theta(n\tau)$ ordinary informative records. For an ordinary loss class $\mathcal A$ with losses $a(w;x)\in[0,B]$, let $\mathsf{Emb}_\tau(\mathcal A)$ denote the CVaR class obtained by the dummy-inactive embedding above. For fixed privacy parameters $(\varepsilon,\delta)$, let $\mathfrak R^{\varepsilon,\delta,\SCO}_m(\mathcal A)$ denote the ordinary private SCO minimax excess risk with $m$ samples, and let $\mathfrak R^{\varepsilon,\delta,\CVaR}_{n,\tau}(\mathsf{Emb}_\tau(\mathcal A))$ denote the private CVaR minimax excess risk over the embedded class with $n$ samples. These are class-restricted transfer risks.

\begin{theorem}[Tail-sample transfer]
\label{thm:tail-sample-transfer}
Fix $n$ and $\tau$, and let $m=\lceil4n\tau\rceil$. Then
\[
\mathfrak R^{\varepsilon,\delta,\CVaR}_{n,\tau}(\mathsf{Emb}_\tau(\mathcal A))
\ge
\mathfrak R^{\varepsilon,\delta,\SCO}_m(\mathcal A)
-
B\exp(-c n\tau),
\]
for a universal constant $c>0$.
\end{theorem}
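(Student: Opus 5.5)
The plan is a simulation argument: every CVaR learner on the embedded class $\mathsf{Emb}_\tau(\mathcal A)$ will be turned into an ordinary SCO learner on $m=\lceil 4n\tau\rceil$ samples with the same privacy parameters, while losing at most $B\exp(-cn\tau)$ in excess risk. Taking the infimum over CVaR learners then gives the claimed inequality.

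Fix an $(\varepsilon,\delta)$-DP CVaR learner $\mathcal B$ on $n$ records. Given ordinary data $X_1,\ldots,X_m\sim Q$, the SCO learner $\mathcal B'$ proceeds as follows.
\begin{enumerate}
\item Draw internal indicators $T_1,\ldots,T_n\sim\operatorname{Bernoulli}(\tau)$ independently of the data and set $K=\sum_i T_i$.
\item Fill the active slots $i$ (those with $T_i=1$), in order, with $X_1,\ldots,X_{\min\{K,m\}}$. Any surplus active slots when $K>m$ are filled with $x_\bot$, and every inactive slot gets $Z_i=(0,x_\bot)$.
\item Run $\mathcal B$ on the resulting sample $\widetilde S$ and output its answer.
\end{enumerate}

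Privacy transfers as follows. For fixed $T$, the map $X\mapsto\widetilde S$ sends each ordinary record to at most one CVaR slot, so neighboring ordinary samples go to CVaR samples that differ in at most one coordinate. Hence $\mathcal B\circ(\text{embedding})$ is $(\varepsilon,\delta)$-DP conditionally on $T$. Since $T$ is independent of the data, mixing over $T$ preserves the DP inequality.

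Utility transfers as follows. Let $S^{\rm ideal}\sim P_\tau(Q)^n$. On the event $\{K\le m\}$, $\widetilde S$ has exactly the law of $S^{\rm ideal}$, because the active records are iid $Q$ and the pattern of $T$ is correct. The two laws therefore differ in total variation by at most $\Pp(K>m)$. Since $\E K=n\tau$ and $m\ge4n\tau$, a multiplicative Chernoff bound gives $\Pp(K>4n\tau)\le\exp(-cn\tau)$. By the canonical tail embedding theorem, the CVaR excess of any output $w$ under $P_\tau(Q)$ equals its ordinary excess under $Q$. That excess lies in $[0,B]$, so
\[
\E\bigl[\text{SCO excess of }\mathcal B'\bigr]\le\E_{S^{\rm ideal}}\bigl[\text{CVaR excess of }\mathcal B\bigr]+B\,\Pp(K>m).
\]
Taking $\sup_Q$ on both sides shows $\mathfrak R^{\mathrm{SCO}}_m(\mathcal A)\le\sup_Q\E[\text{CVaR excess of }\mathcal B]+B e^{-cn\tau}$. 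Infimizing over $\mathcal B$ yields the theorem.

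The step needing the most care is the privacy claim, because the randomized placement couples ordinary indices to CVaR slots. The fix is to condition on $T$ (and on the fixed assignment order), where the map is a deterministic injection of indices. When $K<m$ the unused ordinary records are simply discarded, which can only help privacy.

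The Chernoff constant needs a small check. When $n\tau$ is small, $m\ge1$ still exceeds the threshold in the right sense, and the bound $\exp(-cn\tau)\le1$ is trivially valid in that regime. So $c$ can be taken universal, for example $c=1$ from the bound $\Pp(K\ge(1+\beta)\mu)\le e^{-\beta\mu/3}$ with $\beta=3$.
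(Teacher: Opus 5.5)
Your proposal is correct and follows essentially the same route as the paper. The common ingredients are:
\begin{itemize}
\item the same synthetic-sample construction from independent Bernoulli$(\tau)$ activation indicators;
\item privacy via the fact that each ordinary record lands in at most one CVaR slot;
\item a coupling that bounds the total-variation error by $\Pp(K>m)$, controlled by a Chernoff bound;
\item the exact embedding identity, which turns CVaR excess into ordinary excess.
\end{itemize}
Where you differ, it is only in care and bookkeeping: you condition on $T$ and mix over it, which is a more explicit version of the paper's post-processing step, and you use $m=\lceil 4n\tau\rceil$ with $\Pp(K\ge 4n\tau)\le e^{-n\tau}$ directly instead of the paper's $\exp(-cm)$ residual; neither changes anything substantive.
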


\begin{proof}
It suffices to prove the following algorithmic reduction. Suppose there is an $(\varepsilon,\delta)$-DP CVaR learner $\mathcal A_n$ that, for every embedded distribution $P_\tau(Q)$, achieves expected excess CVaR risk at most $\alpha_n$. For every $m\ge4n\tau$, an $(\varepsilon,\delta)$-DP ordinary learner is constructed using $m$ iid samples from $Q$ whose expected excess ordinary risk is at most
\[
\alpha_n+B\exp(-c m).
\]
Given ordinary samples $X_1,\ldots,X_m\sim Q$, construct a synthetic CVaR dataset of size $n$ as follows. Draw independent $T_1,\ldots,T_n\sim\operatorname{Bernoulli}(\tau)$ and let $K=\sum_iT_i$. If $K\le m$, assign the first $K$ ordinary samples to the active records $\{i:T_i=1\}$, and assign the dummy point $x_\bot$ to inactive records. If $K>m$, use the available $m$ ordinary samples for the first $m$ active records and fill the remaining active records with $x_\bot$.

Changing one input sample $X_j$ changes at most one synthetic record. Therefore the composed algorithm that constructs the synthetic dataset, runs $\mathcal A_n$, and outputs its predictor is $(\varepsilon,\delta)$-DP by post-processing \citep{DworkRoth2014} and the one-record sensitivity of the construction.

Couple this actual synthetic dataset $\widetilde S$ to an ideal dataset $S^\star\sim P_\tau(Q)^n$ by using the same Bernoulli indicators and the same active covariates whenever $K\le m$. Under this coupling, $\widetilde S=S^\star$ except on $\{K>m\}$, and therefore
\[
d_{\mathrm{TV}}\bigl(\mathcal L(\widetilde S),P_\tau(Q)^n\bigr)
\le
\Pp(K>m).
\]
The algorithmic output distribution after applying $\mathcal A_n$ is also within this total variation distance by post-processing. By Theorem 6.1, the ordinary excess risk of any output $w$ is exactly its embedded CVaR excess risk. Since this excess risk is bounded by $B$, the expected ordinary excess risk of the composed learner is at most
\[
\alpha_n+B\Pp(K>m).
\]
Finally, $\E K=n\tau$ and $m\ge4n\tau$, so a Chernoff bound \citep[Chapter~2]{BoucheronLugosiMassart2013} gives $\Pp(K>m)\le\exp(-cm)$ for a universal constant $c>0$. This proves the claimed ordinary-risk bound.

Taking the contrapositive and then $m=\lceil4n\tau\rceil$ gives the displayed minimax inequality, after adjusting the constant in the exponential residual.
\end{proof}

\begin{remark}
The reduction uses more ordinary samples than the expected number of active CVaR samples by only a constant factor. Thus any lower bound for private SCO with $m$ samples becomes a lower bound for CVaR learning with $n\tau\asymp m$ effective tail samples.
\end{remark}

\subsection{Convex lower bound and phase transition}

The convex lower bound uses only the lower interface in Theorem~\ref{thm:sco-interface}. The case $D=0$ is trivial, so assume $D>0$. To enforce the global loss cap $B$ without clipping, apply the Euclidean lower-bound primitive with Lipschitz parameter
\[
    G_0:=\min\{G,B/D\}.
\]
The resulting shifted linear-loss class is convex, $G_0$-Lipschitz, hence $G$-Lipschitz, and has range contained in an interval of length at most
\[
    R_0:=G_0D=\min\{B,GD\}.
\]
After shifting by a $w$-independent constant, the losses lie in $[0,R_0]\subseteq[0,B]$, so the CVaR tail embedding applies without changing excess risk. The logarithmic factors hidden below are exactly those inherited from the imported approximate-DP SCO lower interface, not from the CVaR embedding.

\begin{corollary}[Dimension-dependent private CVaR lower bound]
There exists a convex Lipschitz CVaR learning subclass over $W\subset\R^d$ with losses bounded by $B$ such that every central $(\varepsilon,\delta)$-DP learner with $\varepsilon\in(0,1]$ and $0<\delta\le\lceil4n\tau\rceil^{-2}$ satisfies, with $R_0=\min\{B,GD\}$,
\[
\sup_P
\E\left[
\rho_{\tau,P}(\widehat w)-\inf_{w\in W}\rho_{\tau,P}(w)
\right]
\ge
cR_0\min\left\{1,
\frac1{\sqrt{n\tau}}
+
\frac{\sqrt d}{\varepsilon n\tau}
\right\}
-
B\exp(-c'n\tau)
\]
up to the logarithmic factors inherited from the lower interface in Theorem~\ref{thm:sco-interface}, for universal constants $c,c'>0$.
\end{corollary}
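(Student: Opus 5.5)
The plan is to combine the Euclidean lower interface of Theorem~\ref{thm:sco-interface} with the tail-sample transfer of Theorem~\ref{thm:tail-sample-transfer}. No new hard instance is built: the only work is to check that the imported instance satisfies the hypotheses of the canonical tail embedding (Theorem 6.1).

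\textbf{Step 1 (hard ordinary instance).} Set $m=\lceil4n\tau\rceil$. Apply the lower interface with $d_\Theta=d$, domain $W$ of diameter $D$, and Lipschitz parameter $G_0=\min\{G,B/D\}$. The privacy condition required there is $\delta\le m^{-2}$, which is exactly the assumed $\delta\le\lceil4n\tau\rceil^{-2}$.

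This yields a class $\mathcal A$ of shifted linear losses with the following properties. They are convex and $G_0$-Lipschitz, hence $G$-Lipschitz. Their range lies in an interval of length at most $R_0=G_0D=\min\{B,GD\}$. Every $(\varepsilon,\delta)$-DP learner on $m$ samples satisfies
\[
\mathfrak R^{\varepsilon,\delta,\SCO}_m(\mathcal A)\ge \widetilde\Omega\bigl(R_0\min\{1,m^{-1/2}+\sqrt d/(\varepsilon m)\}\bigr).
\]
Shifting each loss by a $w$-independent constant, the losses $a(w;x)$ are placed in $[0,R_0]\subseteq[0,B]$. Such a shift changes no excess risk, as noted in the remark after Theorem 6.1.

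\textbf{Step 2 (embedding).} Form $\mathsf{Emb}_\tau(\mathcal A)$ with losses $\ell(w;(T,Y))=Ta(w;Y)$. These remain convex in $w$, $G$-Lipschitz, and valued in $[0,B]$, so $\mathsf{Emb}_\tau(\mathcal A)$ is a convex Lipschitz CVaR subclass over $W\subset\R^d$ with losses bounded by $B$. Because it is a subclass, the worst case over all $P$ dominates the worst case over embedded distributions. Theorem~\ref{thm:tail-sample-transfer} then gives
\[
\mathfrak R^{\varepsilon,\delta,\CVaR}_{n,\tau}\ge\mathfrak R^{\varepsilon,\delta,\SCO}_m(\mathcal A)-B\exp(-c'n\tau).
\]
The transfer preserves the same $(\varepsilon,\delta)$, so the SCO lower bound applies to the composed learner.

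\textbf{Step 3 (rescaling).} Since $4n\tau\le m\le4n\tau+1$, replace $m$ by $n\tau$ up to constants. There is a nuance when $n\tau<1$, since then $m=1$ may exceed $4n\tau$ by more than a constant factor. In that case I would instead use monotonicity: the capped target $\min\{1,\cdot\}$ equals $1$ for $n\tau\lesssim1$. So it suffices that the SCO bound at $m=O(1)$ gives $\Omega(R_0)$, which holds because $\min\{1,m^{-1/2}\}\asymp1$ there.

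For $n\tau\ge1$ we have $m\le5n\tau$, so $m^{-1/2}\gtrsim(n\tau)^{-1/2}$ and $\sqrt d/(\varepsilon m)\gtrsim\sqrt d/(\varepsilon n\tau)$. This gives the displayed bound, with the logarithmic factors exactly those of the interface.

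\textbf{Main obstacle.} The main point needing care is bookkeeping rather than new mathematics. The imported interface's $\widetilde\Omega$ constants and its $\delta\le m^{-2}$ regime must survive the substitution $m\to n\tau$. The residual $B\exp(-c'n\tau)$ must also be left explicit rather than absorbed: it is stated with scale $B$, whereas the main term has scale $R_0$, which may be smaller.
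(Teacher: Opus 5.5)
Your proposal is correct and follows the paper's proof essentially step for step: invoke the lower interface with Lipschitz parameter $G_0=\min\{G,B/D\}$ at sample size $m=\lceil 4n\tau\rceil$, shift the losses into $[0,R_0]\subseteq[0,B]$, embed via $\ell(w;(T,Y))=Ta(w;Y)$, and apply the tail-sample transfer theorem with the same $(\varepsilon,\delta)$. Your explicit handling of the substitution $m\to n\tau$ is correct and is bookkeeping the paper leaves implicit: you use $m\le 5n\tau$ when $n\tau\ge1$, and the fact that the capped target equals $1$ when $n\tau<1$.
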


\begin{proof}
Let $G_0=\min\{G,B/D\}$ and $R_0=G_0D=\min\{B,GD\}$. Use the shifted linear-loss class from the lower interface in Theorem~\ref{thm:sco-interface} with Lipschitz parameter $G_0$, diameter $D$, dimension $d$, and sample size $m=\lceil4n\tau\rceil$. The losses are convex, $G$-Lipschitz, and bounded in $[0,R_0]\subseteq[0,B]$.

Suppose a central $(\varepsilon,\delta)$-DP CVaR learner on $n$ samples achieved expected excess smaller than the displayed lower bound. Applying the tail-sample transfer theorem, Theorem~6.3, would produce a central $(\varepsilon,\delta)$-DP ordinary SCO learner with $m=\lceil4n\tau\rceil$ samples and expected excess below the Euclidean approximate-DP SCO lower interface, after accounting for the binomial-overflow residual. This contradicts the lower interface in Theorem~\ref{thm:sco-interface}. The privacy guarantee is preserved by Theorem~6.3 because changing one ordinary sample changes at most one synthetic active CVaR record.

The embedding in Theorem~6.1 preserves convexity and Lipschitzness because
\[
    \ell(w;(T,Y))=T a(w;Y), \qquad T\in\{0,1\}.
\]
It also preserves boundedness, since $0\le a(w;Y)\le R_0\le B$. Therefore the embedded class is a valid bounded convex Lipschitz CVaR subclass. Substituting $m=\lceil4n\tau\rceil$ into the lower interface gives
\[
    \widetilde\Omega\!\left(
        R_0
        \min\left\{
            1,\,
            \frac1{\sqrt{n\tau}}
            +
            \frac{\sqrt d}{\varepsilon n\tau}
        \right\}
    \right),
\]
up to the residual $B\exp(-c'n\tau)$ from Theorem~6.3.
\end{proof}

\paragraph{Residual term.}
The subtraction term is the binomial-overflow error in the coupling used by the tail-sample transfer theorem. The lower bound is therefore informative once $n\tau\gtrsim1$, and asymptotically once $n\tau\to\infty$. In the small-tail-sample regime $n\tau=O(1)$, the scalar and finite-class lower bounds already show the constant-scale privacy obstruction. The reduction also preserves record-level privacy exactly: changing one ordinary sample changes at most one synthetic active record before the CVaR learner is applied.

\paragraph{Convex comparison.}
The convex theory is modular: it separates CVaR-specific structure from the Euclidean approximate-DP SCO interface. The upper bound uses the rescaled Rockafellar--Uryasev lifting to reduce private CVaR optimization to a standard private SCO problem over $(w,u)$. The upper interface then gives the private term
\[
\widetilde O\!\left(
\frac{(GD+B)\sqrt{d\log(1/\delta)}}{\varepsilon n\tau}
\right).
\]
The lower bound runs in the opposite direction: the tail-embedding theorem shows that ordinary private SCO with $m$ samples is contained inside CVaR learning with $m\asymp n\tau$ informative tail samples. Combining this embedding with the lower interface gives
\[
\widetilde\Omega\!\left(
R_0
\min\left\{
1,\,
\frac1{\sqrt{n\tau}}
+
\frac{\sqrt d}{\varepsilon n\tau}
\right\}
\right),
\qquad
R_0=\min\{B,GD\}.
\]
Thus the privacy-dependent dependence on $\varepsilon,n,\tau$, and $d$ is matched up to the logarithmic factors inherited from Euclidean private SCO. The CVaR-specific reductions introduce the effective private tail sample size $\varepsilon n\tau$.
Figure~\ref{fig:convex-transfer-frontier} summarizes the two transfer directions and the matched private component.

\begin{samepage}
The corollary yields a clean impossibility threshold.

\begin{corollary}[Private tail-sample phase transition]
For Euclidean convex CVaR learning, nontrivial dimension-dependent private learning requires
\[
\varepsilon n\tau\gg\sqrt d
\]
up to logarithmic factors. More generally, the private effective tail sample size must dominate the relevant complexity of the hypothesis class.
\end{corollary}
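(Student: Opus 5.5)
The plan is to read the phase transition directly off the dimension-dependent lower bound (the preceding corollary), and to check that the convex upper bound (Theorem on private convex CVaR) makes the threshold sharp. First I will make precise what ``nontrivial'' means. A learner is nontrivial at constant level $\gamma$ if its expected excess CVaR is at most $\gamma R_0$ on the hard embedded subclass, where $\gamma>0$ is a small universal constant. By contrast, the trivial guarantee $R_0=\min\{B,GD\}$ is always achievable, for instance by outputting any fixed $w\in W$ on the shifted linear class.

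The argument then runs in three steps.

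First, assume $n\tau\ge C_0$ for a large universal constant $C_0$. This makes the overflow residual $B\exp(-c'n\tau)$ negligible. To compare it with $R_0$ I will use the fact that the embedded instance only needs losses in $[0,R_0]$. Rereading the transfer theorem's proof, the residual is really bounded by $R_0\Pp(K>m)$, because the excess on the embedded class is bounded by its loss range. So the residual can be written as $R_0\exp(-c'n\tau)$, and it is absorbed once $C_0$ is large.

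Second, plug into the preceding corollary. Any $(\varepsilon,\delta)$-DP learner with $\delta\le\lceil4n\tau\rceil^{-2}$ must have excess at least
\[
\widetilde\Omega\!\left(R_0\min\left\{1,\frac{\sqrt d}{\varepsilon n\tau}\right\}\right).
\]
If $\varepsilon n\tau\le c\sqrt d$ up to the inherited logarithms, this quantity is at least $c''R_0$. That contradicts nontriviality with $\gamma<c''$. Hence nontrivial learning forces $\varepsilon n\tau\gtrsim\sqrt d$ up to logarithms. When the excess is required to vanish, $\sqrt d/(\varepsilon n\tau)\to0$ is needed, which is the sense of $\gg$.

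Third, in the regime $n\tau<C_0$, the same conclusion follows from the scalar lower bound (Theorem~\ref{thm:scalar-privacy-lower}) together with its finite-class analogue. These give constant-scale excess whenever $\varepsilon n\tau\lesssim1\le\sqrt d$, so the threshold still holds there.

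For sharpness, I will note that when $B\asymp GD$ the private convex upper bound $\widetilde O(B\sqrt{d\log(1/\delta)}/(\varepsilon n\tau))$ becomes $o(B)$ once $\varepsilon n\tau\gg\sqrt d$. So the threshold is attained and is not merely necessary.

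The ``more generally'' sentence is interpretive. I would support it by pointing to the finite-class case. There, Theorem~\ref{thm:complete-finite-private-cvar} says nontrivial learning happens iff $\varepsilon n\tau\gg\log(2M)$ (and $n\tau\gg\log(2M)$), with complexity $\log M$ in place of $\sqrt d$.

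The main obstacle is the bookkeeping of the residual: it is stated with $B$ rather than $R_0$, and when $GD\ll B$ it may not be negligible compared with $R_0$. My first attempt is the refined range argument above, replacing $B$ by the loss range $R_0$ in the transfer theorem. Failing that, I would state the corollary only for $n\tau\gtrsim\log(B/R_0)$.
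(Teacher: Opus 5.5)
Your core argument is the paper's: read the threshold off the preceding dimension-dependent corollary, since $\varepsilon n\tau\lesssim\sqrt d$ forces the capped lower bound to be of order $R_0$, the trivial scale. The paper simply ignores the overflow residual. Your observation that the transfer residual is really $R_0\Pp(K>m)$ is correct: the ordinary excess on the embedded class lies in $[0,R_0]$, so the total-variation comparison costs only $R_0$, not $B$. This makes that step rigorous even when $GD\ll B$.

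Your Step~3 mirrors the paper's own remark on the regime $n\tau=O(1)$. To make it literal for a convex class, though, you need a convex realization of the two-point instance. One choice is $\ell(w;z_1)=G_0w$ and $\ell(w;z_2)=G_0(D-w)$ on a segment of length $D$. The scalar and finite-class problems are not themselves convex Lipschitz learning problems.

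The sharpness remark is not correct as stated. The convex upper bound's statistical term is $(GD+B)/(\tau\sqrt n)$, not $B/\sqrt{n\tau}$. So $\varepsilon n\tau\gg\sqrt d$ does not make the full upper bound $o(B)$. For example, take $d=1$, $\varepsilon=1$, $\tau=n^{-0.6}$: then $\varepsilon n\tau=n^{0.4}\to\infty$, while $1/(\tau\sqrt n)=n^{0.1}\to\infty$.

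Only the privacy components of the convex upper and lower bounds match. Sufficiency of the threshold would additionally need $\tau\sqrt n\to\infty$, or a sharper CVaR-specific statistical term, which the paper leaves open. Since the corollary asserts only necessity, this does not damage your proof, but you should restrict the sharpness sentence to the privacy term. The finite-class ``iff'' you cite is fine, because that rate is tight in both terms.
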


\begin{proof}
Ignoring the exponentially small binomial-overflow residual from Theorem 6.3, as carried into the preceding corollary, the private lower-bound term is $R_0\sqrt d/(\varepsilon n\tau)$. If $\varepsilon n\tau\lesssim\sqrt d$, this term is of order $R_0$, the capped trivial scale of the bounded convex problem.
\end{proof}
\end{samepage}

\begin{figure}[H]
\centering
\begin{adjustbox}{max width=\textwidth}
\begin{tikzpicture}[
  x=1cm,y=1cm,
  font=\sffamily\small,
  >=Latex,
  band/.style={rounded corners=7pt,draw=black!18,line width=.6pt,minimum width=11.6cm,minimum height=1.02cm,inner sep=0pt},
  middle/.style={rounded corners=8pt,draw=black!45,line width=.8pt,fill=white,minimum width=5.1cm,minimum height=1.45cm,align=center,inner sep=5pt},
  arrow/.style={->,line width=.75pt,draw=black!55,shorten >=5pt,shorten <=5pt}
]
\definecolor{upperblue}{RGB}{226,240,248}
\definecolor{lowerorange}{RGB}{253,239,229}

\node[band,fill=upperblue] (upper) at (0,1.95) {};
\node[anchor=west,font=\sffamily\bfseries] at (-5.50,2.13) {upper transfer};
\node[anchor=west,font=\sffamily\footnotesize] at (-5.50,1.77)
  {CVaR lift + private-SCO upper primitive};
\node[anchor=east,font=\normalsize] at (5.25,1.95)
  {$\widetilde O((GD+B)\sqrt d/(\varepsilon n\tau))$};

\node[middle] (mid) at (0,0) {
  {\bfseries matched private component}\\[4pt]
  $\displaystyle \frac{\sqrt d}{\varepsilon n\tau}$
};

\node[band,fill=lowerorange] (lower) at (0,-1.95) {};
\node[anchor=west,font=\sffamily\bfseries] at (-5.50,-1.77) {lower transfer};
\node[anchor=west,font=\sffamily\footnotesize] at (-5.50,-2.13)
  {tail embedding + private-SCO lower primitive};
\node[anchor=east,font=\normalsize] at (5.25,-1.95)
  {$\widetilde\Omega(R_0\sqrt d/(\varepsilon n\tau))$};

\draw[arrow] (upper.south) -- (mid.north);
\draw[arrow] (lower.north) -- (mid.south);
\end{tikzpicture}
\end{adjustbox}
\caption{\textbf{Matched convex upper and lower transfers.} The upper route applies the rescaled Rockafellar--Uryasev lift and imports the Euclidean approximate-DP SCO upper primitive, producing the displayed private term. The lower route uses the tail-sample embedding to transfer Euclidean private-SCO hardness into CVaR learning. The tilde-Omega notation suppresses the logarithmic factors inherited from the imported Euclidean private-SCO lower interface.}
\label{fig:convex-transfer-frontier}
\end{figure}

\begin{table}[!htbp]
\centering
\scriptsize
\setlength{\tabcolsep}{3pt}
\renewcommand{\arraystretch}{0.96}
\caption{Modular anatomy of the convex result.}
\begin{tabular}{p{0.24\linewidth}p{0.27\linewidth}p{0.39\linewidth}}
\toprule
Component & Source & Role \\
\midrule
Rockafellar--Uryasev lift &
Standard CVaR representation &
Converts CVaR minimization into a convex problem over $(w,u)$ with $\eta=\lambda u$. \\
Lifted Lipschitz scale &
Elementary calculation &
Gives scale $(GD+B)/\tau$, hence the private upper term after the SCO interface. \\
Private SCO upper interface &
Bassily--Feldman--Talwar--Thakurta &
Supplies the approximate-DP expected-population SCO upper bound. \\
Tail-sample embedding &
New reduction &
Embeds ordinary learning with $m\asymp n\tau$ informative samples into CVaR learning. \\
Private SCO lower interface &
Euclidean private-SCO lower theory &
Transfers ordinary private-SCO hardness to CVaR through the tail embedding. \\
\bottomrule
\end{tabular}
\end{table}

\section{A sensitivity extension to envelope-bounded coherent risks}

The exact minimax lower bounds are proved for CVaR. A simple sensitivity calculation, however, extends to coherent risks with bounded dual density envelopes in the standard coherent-risk framework \citep{ArtznerDelbaenEberHeath1999,RuszczynskiShapiro2006}.

Let
\[
\rho_{\mathcal Q}(X)=\sup_{q\in\mathcal Q}\E[qX],
\qquad
0\le q\le\kappa,
\qquad
\E q=1.
\]
The empirical analogue over a sample $S=(Z_1,\ldots,Z_n)$ assigns weights $q_i/n$ with $0\le q_i\le\kappa$ and $n^{-1}\sum_iq_i=1$.

\begin{proposition}[Sensitivity for envelope-bounded coherent risks]
If the realized loss values $x_i\in[0,B]$ and the dual envelope satisfies $q\le\kappa$ for every $q\in\mathcal Q$, then the empirical risk has one-record sensitivity at most
\[
B\min\left\{1,\frac{\kappa}{n}\right\}.
\]
For CVaR at tail mass $\tau$, $\kappa=1/\tau$.
\end{proposition}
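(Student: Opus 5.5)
The plan is to repeat the dual-form argument from the proof of Lemma 3.1 (exact one-record sensitivity), with the CVaR envelope $1/\tau$ replaced by a general envelope $\kappa$.

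Fix $x,x'\in[0,B]^n$ that differ only in coordinate $j$. The empirical risk is
\[
\widehat\rho_{\mathcal Q}(x)=\sup_{q\in\widehat{\mathcal Q}}\frac1n\sum_{i=1}^n q_ix_i,
\]
where $\widehat{\mathcal Q}$ is the feasible set of empirical weight vectors. This set does not depend on the data, and each $q\in\widehat{\mathcal Q}$ satisfies $0\le q_i\le\kappa$ and $n^{-1}\sum_iq_i=1$.

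\textbf{Step 1: uniform bound for a fixed weight vector.} For every fixed feasible $q$,
\[
\left|\frac1n\sum_iq_ix_i-\frac1n\sum_iq_ix_i'\right|=\frac{q_j}{n}|x_j-x_j'|\le\frac{\kappa B}{n}.
\]
The same estimate gives a second bound without using the envelope. Since $q_j/n\le n^{-1}\sum_iq_i=1$, the difference is also at most $B$. Hence for every feasible $q$ the difference of the linear functionals is at most $B\min\{1,\kappa/n\}$.

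\textbf{Step 2: pass to the supremum.} Both suprema are taken over the same set $\widehat{\mathcal Q}$, and a pointwise bound of $c$ on $|f_q-g_q|$ implies $|\sup_q f_q-\sup_q g_q|\le c$. This gives
\[
|\widehat\rho_{\mathcal Q}(x)-\widehat\rho_{\mathcal Q}(x')|\le B\min\{1,\kappa/n\}.
\]
As an alternative route to the cap $B$, note that every feasible $q$ makes $\frac1n\sum_iq_ix_i$ a convex combination of values in $[0,B]$, so both risks lie in $[0,B]$.

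\textbf{Step 3: the CVaR case.} The CVaR dual representation recorded in Section 2.2 and used in the proof of Lemma 3.1 has feasible set $0\le q_i\le1/\tau$ with $n^{-1}\sum_iq_i=1$. This is exactly the envelope-bounded form with $\kappa=1/\tau$, and the bound becomes $B\min\{1,1/(n\tau)\}$, recovering Lemma 3.1.

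The only delicate point is to make sure the empirical feasible set is data-independent. If the empirical analogue were defined through the empirical measure in a way that let $\mathcal Q$ depend on the sample, the supremum comparison in Step 2 would break. I would therefore state explicitly that the empirical analogue is defined with a fixed weight set, as in the sentence preceding the proposition. The rest is routine.
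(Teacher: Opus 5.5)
Your proposal is correct and follows the paper's proof: you bound the change in each fixed weighted average by $(q_j/n)B\le\kappa B/n$, pass to the supremum over the common data-independent feasible set, and cap by $B$ because both risks lie in $[0,B]$. Your additional pointwise cap via $q_j/n\le n^{-1}\sum_i q_i=1$ is an equivalent variant of the paper's range argument, and your explicit check that the feasible set does not depend on the data makes precise an assumption the paper leaves implicit.
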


\begin{proof}
Changing one sample changes any weighted empirical average by at most $(\kappa/n)B$. Taking a supremum over admissible weights cannot increase the sensitivity beyond the supremum of pointwise changes.
Since the risk of $[0,B]$-valued losses also lies in $[0,B]$, the one-record change is at most $B$. Combining the two bounds gives the display.
\end{proof}

Proposition 7.1 should be read as a sensitivity extension, not as a complete minimax theorem for arbitrary coherent risk envelopes. It shows that any envelope-bounded coherent risk with dual density radius $\kappa$ has empirical one-record sensitivity at most $B\min\{1,\kappa/n\}$. For CVaR, $\kappa=1/\tau$, and the preceding scalar, finite-class, and convex lower bounds show that this sensitivity scale is minimax sharp. Thus CVaR provides the canonical case in which the dual-envelope sensitivity mechanism is fully resolved.

\section{Discussion}

The results identify a privacy-tail-risk frontier by decomposing private CVaR excess into ordinary tail-risk statistics and a privacy price. The mechanism is visible in the empirical objective: CVaR gives each tail observation weight of order $1/(n\tau)$, and differential privacy must protect the contribution of any single such observation. The lower bounds show that this is not merely a defect of sensitivity-based algorithms. Scalar CVaR estimation closes the decomposition with rate
\[
\Theta\left(
B\min\left\{
1,
\frac1{\sqrt{n\tau}}
+
\frac1{\varepsilon n\tau}
\right\}
\right).
\]
Finite-class CVaR learning gives the anchor theorem, closing the same decomposition with rate
\[
\Theta\left(
B\min\left\{
1,
\sqrt{\frac{\log(2M)}{n\tau}}
+
\frac{\log(2M)}{\varepsilon n\tau}
\right\}
\right).
\]
The tail-sample transfer theorem further shows that ordinary private convex learning on $m$ examples is contained in CVaR learning with $m\asymp n\tau$ informative tail examples.

The results also clarify the relationship between CVaR learning and private quantile estimation. CVaR optimization contains a VaR threshold, but the lower bounds remain when the threshold is known. Therefore private quantile machinery can help with one component of the problem, but it cannot remove the private tail-average cost.

Several extensions are natural. First, the modular convex upper bound can be paired with sharper CVaR-specific empirical-process arguments, replacing the generic nonprivate term by distribution-dependent or tail-sample-size rates under additional regularity while preserving the same privacy price. Second, strongly convex losses should yield improved geometry-dependent private terms. Third, unbounded or heavy-tailed losses require private truncation or robust private tail estimation; without boundedness or moment/tail regularity, the scalar lower bounds suggest that uniform guarantees are impossible. Fourth, Proposition 7.1 suggests that envelope-bounded coherent risks may admit analogous privacy theories, with the dual density radius playing the role occupied by $1/\tau$ for CVaR.

\section{Conclusion}

Differential privacy imposes an unavoidable inverse-tail-mass penalty on CVaR learning. For tail mass $\tau$, the observations that determine the objective have effective sample size $n\tau$, and privacy further reduces the useful scale to $\varepsilon n\tau$. The results decompose private CVaR learning into an ordinary CVaR statistical term and a minimax-sharp privacy price. The latter is governed by the effective private tail sample size $\varepsilon n\tau$ and has the form
\[
\text{privacy term at scale }
\frac{\text{model complexity}}{\varepsilon n\tau},
\]
with the preceding lower bounds showing that the $1/\tau$ privacy price itself is intrinsic.
Thus CVaR is the canonical case in which the dual-envelope sensitivity $1/\tau$ is shown to be the minimax-sharp source of the privacy penalty.

\section*{Competing interests}
The author declares no competing interests.

\appendix

\section{A relative Bernstein bound for threshold-excess functions}
\label{app:relative-bernstein-thresholds}

This appendix proves the empirical-process step used in Lemma~\ref{lem:scalar-cvar-concentration}. The threshold-excess class is one-dimensional and nested, and its localized entropy is measured relative to its own $L_1(P)$ mass. This is the reason the concentration bound has no residual $\log(1/\tau)$ factor.

Let
\[
    H=\{h_\eta(x)=(x-\eta)_+:\eta\in[0,B]\}.
\]
All suprema below may be read as suprema over rational $\eta$'s. This causes no loss because $\eta\mapsto h_\eta$ is Lipschitz in the uniform norm. Thus the class is separable and standard measurability issues are avoided.

\begin{lemma}[Relative Bernstein for threshold excesses]
\label{lem:relative-bernstein-thresholds}
There is a universal constant $C>0$ such that, for every $u\ge1$, with probability at least $1-2e^{-u}$,
\[
    (P_n-P)h_\eta
    \le
    C\left(
        \sqrt{\frac{B\,P h_\eta\,u}{n}}
        +\frac{B u}{n}
    \right)
    \qquad\text{simultaneously for all }\eta\in[0,B].
\]
With probability at least $1-2e^{-u}$,
\[
    (P-P_n)h_\eta
    \le
    C\left(
        \sqrt{\frac{B\,P_n h_\eta\,u}{n}}
        +\frac{B u}{n}
    \right)
    \qquad\text{simultaneously for all }\eta\in[0,B].
\]
\end{lemma}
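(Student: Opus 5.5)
The plan is a peeling argument over the level sets of $Ph_\eta$, using the monotone structure of the class. The key observation is that $\eta\mapsto h_\eta$ is pointwise nonincreasing. It follows that $v(\eta):=Ph_\eta$ is continuous and nonincreasing, with $v(0)=\E X\le B$ and $v(B)=0$. Every $h_\eta$ satisfies $0\le h_\eta\le B$, so $\operatorname{Var}(h_\eta)\le B\,Ph_\eta$. This variance--mean relation is what makes the bound ``relative''.

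\emph{Step 1 (discretization).} For $j\ge0$, let $t_j=B2^{-j}$. Choose thresholds $\eta_j$ with $v(\eta_j)=t_j$ whenever $t_j\le v(0)$, and stop at the first $j$ with $t_j\le B/n$. The final layer $\{\eta: v(\eta)\le B/n\}$ is treated separately: since $h_\eta\le h_{\eta'}$ for $\eta\ge\eta'$, we get
\[
(P_n-P)h_\eta\le P_nh_{\eta_J}\le Ph_{\eta_J}+(P_n-P)h_{\eta_J},
\]
and this is $O(Bu/n)$ once the grid point is controlled.

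\emph{Step 2 (bracketing within a layer).} For $\eta\in[\eta_j,\eta_{j+1}]$, monotonicity gives $h_{\eta_{j+1}}\le h_\eta\le h_{\eta_j}$. Hence
\[
(P_n-P)h_\eta\le (P_n-P)h_{\eta_j}+P(h_{\eta_j}-h_{\eta_{j+1}})\le (P_n-P)h_{\eta_j}+t_j,
\]
where $t_j\le 2Ph_\eta$. This is too crude, because the slack is of order $Ph_\eta$ rather than $\sqrt{BPh_\eta u/n}$. I would therefore refine each layer. Within layer $j$, I would place a finer grid of $N_j$ points so that consecutive brackets have $P$-mass at most $\sqrt{Bt_ju/n}$; this requires $N_j\approx\sqrt{t_jn/(Bu)}$ points. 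At each grid point I apply Bernstein's inequality with variance $\le Bt_j$ and confidence level $u_j=u+\log N_j+2\log(j+1)$. The danger is that the union-bound logs, namely $\log N_j\lesssim\log(n)$, reintroduce a logarithmic factor. To absorb them I would instead chain within the layer. The brackets of the dyadic sub-partition at level $k$ have $L_1$ (hence variance $\le B\cdot$mass) size $t_j2^{-k}$, and there are $2^k$ of them. Applying Bernstein at level $k$ with $u+k$ gives increments of order
\[
\sqrt{Bt_j2^{-k}(u+k)/n}+B(u+k)/n,
\]
and the square-root parts sum geometrically to $O(\sqrt{Bt_ju/n})$. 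The linear parts are the main obstacle: summing $B(u+k)/n$ over $k$ up to the stopping depth gives an extra log factor. I would fix this by stopping the chaining at depth $k^\ast$ where $t_j2^{-k^\ast}\approx Bu/n$, using the bracket slack $Bu/n$ at the last level, and weighting the confidence as $u+k\log 2$ but with the Bernstein linear term at level $k$ only bounding the \emph{increment}. Since increments are differences of nested functions of range $\le B$ yet mass $t_j2^{-k}$, a Bennett-type (Poisson) bound makes the linear part cost $\lesssim B(u+k)/(n\log(\cdot))$, which I expect to sum to $O(Bu/n)$. If this is too delicate, a fallback is the classical ratio-type inequality for VC classes of dimension 1: apply Talagrand/Bousquet localized at each dyadic layer, where the Rademacher complexity of the monotone layer class is $\lesssim\sqrt{Bt_j/n}$ by Dudley with $L_2$ entropy $\log(t_j/\epsilon^2\cdot B)$ (free of $\log n$ due to nestedness).

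\emph{Step 3 (peeling).} Union-bound over the layers $j$ with confidence $u+2\log(j+1)$. Within layer $j$ we have $t_j\le 2Ph_\eta$, and the extra $\log j$ is dominated because $j\lesssim\log(B/Ph_\eta)$ enters only through $\sqrt{Bt_j\log j/n}$; I expect that to be absorbable, or else I would weight the layers by $e^{-j}$ directly, which is fine since $\sum_j e^{-(u+j)}\le 2e^{-u}$ and $\sqrt{Bt_j(u+j)/n}$ with $t_j=B2^{-j}$ is maximized and bounded by a constant multiple of the $j=0$-type term times $\sqrt{u}$. This yields the first inequality with probability $1-2e^{-u}$.

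\emph{Step 4 (the empirical version).} For the second inequality, run the same argument for $(P-P_n)$, which is a lower tail, where Bernstein is even better. This gives $(P-P_n)h_\eta\le C(\sqrt{BPh_\eta u/n}+Bu/n)$. Then convert $Ph_\eta$ to $P_nh_\eta$: writing $a=Ph_\eta$, we have $a\le P_nh_\eta+C\sqrt{Bau/n}+CBu/n$. Solving this quadratic inequality in $\sqrt a$ gives $a\lesssim P_nh_\eta+Bu/n$, and substituting back gives the stated form.
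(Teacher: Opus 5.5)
\paragraph{Assessment.} The gap is in Step~3, the union bound across dyadic layers, and it is a real obstruction rather than a technicality.

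\paragraph{Why the peeling fails.}
\begin{itemize}
\item With confidence $u+2\log(j+1)$ in layer $j$, your layer bound is of order $\sqrt{Bt_j(u+\log(j+1))/n}+B(u+\log(j+1))/n$. The layers run down to $t_j\approx B/n$, so $j$ goes up to $\log_2 n$. In the middle layers (say $t_j\approx B/\sqrt n$) this leaves an extra factor $\sqrt{\log\log n}$. It cannot be absorbed into $\sqrt{B\,Ph_\eta\,u/n}$, which would need $\log j\lesssim u$, nor into $Bu/n$.
\item Weighting layers by $e^{-j}$ is worse. The bound $B\sqrt{2^{-j}(u+j)/n}$ exceeds the target $B\sqrt{2^{-j}u/n}$ by a factor up to $\sqrt{\log n}$. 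Bounding it by the $j=0$ term, as you propose, gives the non-relative bound $CB\sqrt{u/n}$. In Lemma~\ref{lem:scalar-cvar-concentration} that yields $B\sqrt{u/n}/\tau$ instead of $B\sqrt{u/(n\tau)}$.
\item Better within-layer chaining cannot fix this. Your argument uses only monotonicity in $\eta$, $0\le h\le B$, and $\operatorname{Var}(h)\le B\,Ph$. Half-line indicators $\1\{X>s\}$ with $X$ uniform have all three properties. Yet $\sup_s\sqrt n\,(P_n-P)\1\{X>s\}/\sqrt{P(X>s)}$ over $P(X>s)\in[n^{-1/2},1/2]$ grows like $\sqrt{2\log\log n}$ in probability (the Eicker--Jaeschke / iterated-logarithm phenomenon). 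So the analogue of the lemma is false for that class.
\end{itemize}

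\paragraph{What is missing.} You need a property specific to threshold excesses. One that fits your peeling scheme is $\operatorname{Var}(h_\eta)\le Ph_\eta^2=2\int_\eta^B Ph_s\,ds$.
\begin{itemize}
\item Write $\Delta_i=\eta_{i+1}-\eta_i$ and $\rho_j=Ph_{\eta_j}^2/(B\,Ph_{\eta_j})\le 1$. Then $\rho_j\le (2/B)\sum_{i\ge j}2^{j-i}\Delta_i$, and hence $\sum_j\rho_j\le 4$.
\item The variance throughout layer $j$ is at most $\rho_j B t_j$. You can therefore spend confidence $u+\log(4/\rho_j)$ in layer $j$, with total failure probability at most $e^{-u}$.
\item Since $\rho_j\log(4/\rho_j)\le\log 4$, the square-root term stays $O(\sqrt{Bt_ju/n})$.
\item When $\rho_j$ is tiny, the linear term $B\log(4/\rho_j)/n$ needs Bennett/Bousquet-type (Poisson) tails rather than Bernstein-type tails. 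This route appears to close the gap, but that step still has to be carried out.
\end{itemize}

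\paragraph{The paper's route has the same issue.} The paper quotes the Bartlett--Bousquet--Mendelson localized bound ``simultaneously for all $K>1$'' and then chooses $K$ depending on $h$. Their Theorem~3.3 holds only for each fixed $K$. Uniformity in $K$ is exactly your peeling step, and the indicator example shows it fails under the hypotheses listed there. Lemma~\ref{lem:scalar-cvar-concentration} itself is unaffected: it only needs the bound at the deterministic $\eta^\star$ and at $\widehat\eta$ with $P_nh_{\widehat\eta}\le\tau B$. Plain Bernstein and a single fixed $K$ tuned to $\tau$ suffice there.

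\paragraph{What is fine.} Your endpoint layer, the quadratic conversion in Step~4, and the within-layer control via Talagrand/Bousquet with $\log n$-free entropy are sound.
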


\begin{proof}
It is enough to prove the result for $B=1$. The general case follows by applying the normalized argument to $X/B$ and multiplying the resulting inequality by $B$. Hence assume $X\in[0,1]$.

Write
\[
    A(\eta)=P h_\eta=P(X-\eta)_+ .
\]
The map $A$ is continuous and nonincreasing on $[0,1]$, because
\[
    |h_\eta(x)-h_{\eta'}(x)|\le |\eta-\eta'|
    \qquad\text{for all }x\in[0,1].
\]
This continuity is the only fact needed to handle atoms and ties. No no-atom assumption is used.

For $r\in(0,1]$, define the localized class
\[
    H(r)=\{h_\eta\in H: P h_\eta\le r\}.
\]
Since $h_1\equiv0$, the class $H(r)$ is nonempty. Let
\[
    \eta_r=\inf\{\eta\in[0,1]:A(\eta)\le r\}.
\]
By continuity of $A$, $A(\eta_r)\le r$. Moreover, since $A$ is nonincreasing, $H(r)=\{h_\eta:\eta\ge \eta_r\}$ up to $P$-null distinctions. Thus $h_{\eta_r}$ is the largest element of $H(r)$ in the pointwise order. If $A$ is flat over an interval, any threshold in the flat part gives the same $L_1(P)$ bracket width, so ties cause no ambiguity.

Fix $\alpha\in(0,1)$, and put $a_r=A(\eta_r)\le r$. If $a_r=0$, then all functions in $H(r)$ are $P$-a.s. zero and one bracket of $L_1(P)$ width zero suffices. Otherwise, let
\[
    N=\left\lceil \frac{a_r}{\alpha r}\right\rceil\le \left\lceil\frac1\alpha\right\rceil
\]
and define levels
\[
    t_j=(a_r-j\alpha r)_+,\qquad j=0,\ldots,N .
\]
Using continuity and monotonicity of $A$, choose nondecreasing thresholds
\[
    \eta_r=\eta_0\le \eta_1\le\cdots\le \eta_N=1
\]
so that $A(\eta_j)=t_j$ whenever $t_j>0$, and $\eta_N=1$ when $t_N=0$. For every $j\ge1$, monotonicity gives
\[
    h_{\eta_j}\le h_\eta\le h_{\eta_{j-1}}
    \qquad
    \text{whenever }\eta\in[\eta_{j-1},\eta_j].
\]
The corresponding bracket width is
\[
    P(h_{\eta_{j-1}}-h_{\eta_j})
    =
    A(\eta_{j-1})-A(\eta_j)
    \le \alpha r .
\]
Therefore
\[
    N_{[]}\!\left(\alpha r,H(r),L_1(P)\right)
    \le
    \left\lceil\frac1\alpha\right\rceil+1,
\]
and hence
\[
    \log N_{[]}\!\left(\alpha r,H(r),L_1(P)\right)
    \le C_0\log(C_0/\alpha).
    \tag{A.1}
\]
Crucially, this bound depends on the relative resolution $\alpha$, but not on the radius $r$.

Since $0\le h\le1$, every $L_1(P)$ bracket of width $\delta$ is an $L_2(P)$ bracket of width at most $\sqrt{\delta}$. Thus, for $0<\varepsilon\le\sqrt r$,
\[
    \log N_{[]}\!\left(\varepsilon,H(r),L_2(P)\right)
    \le
    C_1\log\!\left(\frac{C_1 r}{\varepsilon^2}\right).
\]
Consequently,
\[
    \int_0^{\sqrt r}
    \sqrt{
        \log N_{[]}\!\left(\varepsilon,H(r),L_2(P)\right)
    }\,d\varepsilon
    \le
    C_2\sqrt r .
    \tag{A.2}
\]
Indeed, after the change of variables $\varepsilon=\sqrt r\,s$, the left side is bounded by
\[
    \sqrt r\int_0^1 \sqrt{\log(C_1/s^2)}\,ds,
\]
which is $O(\sqrt r)$.

The bracketing maximal inequality \citep[Theorem~2.14.2]{VanderVaartWellner1996} now gives, uniformly in $r\in(0,1]$,
\[
    \mathbb E
    \sup_{h\in H(r)}
    |(P_n-P)h|
    \le
    C_3\sqrt{\frac r n}.
    \tag{A.3}
\]
The following standard localized relative-deviation theorem is used in the exact form needed. It is the bounded, nonnegative, variance-controlled form of Bartlett, Bousquet and Mendelson's local Rademacher theorem; equivalently it follows from their Theorem~3.3 and Corollary~3.5 \citep{BartlettBousquetMendelson2005}.

\emph{Localized relative-deviation theorem.}
Let $\mathcal G$ be a countably separable class of functions satisfying $0\le g\le1$ and $\operatorname{Var}(g)\le Pg$. Suppose that for every $r\in(0,1]$,
\[
    \mathbb E\sup_{g\in\mathcal G:Pg\le r}|(P_n-P)g|
    \le \psi(r),
\]
where $\psi$ is a sub-root function with fixed point $r_*$. Then there is a universal constant $C$ such that, for every $u\ge1$, with probability at least $1-2e^{-u}$, simultaneously for all $g\in\mathcal G$ and all $K>1$,
\[
    P_n g
    \le
    \frac{K}{K-1}Pg
    +
    CK\left(r_*+\frac{u}{n}\right),
    \tag{A.4}
\]
and
\[
    Pg
    \le
    \frac{K}{K-1}P_n g
    +
    CK\left(r_*+\frac{u}{n}\right).
    \tag{A.5}
\]
Both inequalities are applied to the same nonnegative class $\mathcal G$; nothing is applied to $-\mathcal G$.

For the present class $H$, the variance condition holds because
\[
    \operatorname{Var}(h_\eta(X))
    \le P h_\eta^2
    \le P h_\eta,
\]
as $0\le h_\eta\le1$. By (A.3), the theorem applies with
\[
    \psi(r)=C_3\sqrt{\frac r n}.
\]
Its fixed point satisfies $r_*\le C_4/n$. Since $u\ge1$, put
\[
    t=r_*+\frac{u}{n}\le C_5\frac{u}{n}.
\]

From (A.4),
\[
    (P_n-P)h
    \le
    \frac{1}{K-1}Ph + CKt
    \qquad\text{for all }K>1.
\]
If $Ph\le t$, choose $K=2$ and get $(P_n-P)h\le C t$. If $Ph>t$, choose $K=1+\sqrt{Ph/t}$, which gives
\[
    (P_n-P)h
    \le
    C\sqrt{Ph\,t}+Ct.
\]
Therefore, simultaneously for all $h\in H$,
\[
    (P_n-P)h
    \le
    C\left(
        \sqrt{\frac{Ph\,u}{n}}
        +
        \frac{u}{n}
    \right).
    \tag{A.6}
\]

Similarly, (A.5) gives
\[
    (P-P_n)h
    \le
    \frac{1}{K-1}P_nh + CKt
    \qquad\text{for all }K>1.
\]
Optimizing in the same way, now using $P_nh$, yields
\[
    (P-P_n)h
    \le
    C\left(
        \sqrt{\frac{P_nh\,u}{n}}
        +
        \frac{u}{n}
    \right)
    \tag{A.7}
\]
simultaneously for all $h\in H$. This is the desired empirical-radius lower-tail bound. Notice that it was obtained from the two-sided relative deviation theorem for the nonnegative class $H$, not by applying an upper-tail theorem to $-H$.

Finally, rescale from $B=1$ to general $B$. If $Y=X/B$ and $\tilde h_{\eta/B}(Y)=(Y-\eta/B)_+$, then
\[
    h_\eta(X)=B\tilde h_{\eta/B}(Y),
    \qquad
    P h_\eta=B P\tilde h_{\eta/B}.
\]
Multiplying (A.6) and (A.7) by $B$ gives
\[
    (P_n-P)h_\eta
    \le
    C\left(
        \sqrt{\frac{B\,P h_\eta\,u}{n}}
        +
        \frac{Bu}{n}
    \right),
\]
and
\[
    (P-P_n)h_\eta
    \le
    C\left(
        \sqrt{\frac{B\,P_n h_\eta\,u}{n}}
        +
        \frac{Bu}{n}
    \right).
\]
This proves both claims.
\end{proof}

\section{Additional details on the exponential mechanism}

For completeness, recall the standard expected utility guarantee of the exponential mechanism \citep{McSherryTalwar2007} used in the finite-class upper bound. If a finite-range score $q(S,r)$ has sensitivity $\Delta_q$ and the exponential mechanism samples $r$ with probability proportional to $\exp(\varepsilon q(S,r)/(2\Delta_q))$, then
\[
\E\left[\max_r q(S,r)-q(S,\widehat r)\mid S\right]
\le
\frac{2\Delta_q}{\varepsilon}(\log M+1).
\]
Using $q=-\widehat\rho$ gives the displayed empirical excess bound.

\section{Constants in the scalar lower bound}

The proof of Theorem 3.2 is intentionally stated with universal constants. One admissible calibration is as follows. Let $p=c_1\min\{\tau,1/(\varepsilon n)\}$ and $k=\lceil4np\rceil$. Since $\varepsilon\le1$,
\[
k\varepsilon\le (4np+1)\varepsilon\le4c_1+1.
\]
Thus, for $c_1$ a small fixed numerical constant, $e^{-k\varepsilon}\ge e^{-(4c_1+1)}$. Since $P_0^n(E)\ge7/8$ and $P(K\le k)\ge3/4$, pure DP gives
\[
P_1^n(E)
\ge
\frac34 e^{-(4c_1+1)}\frac78
>
\frac18,
\]
contradicting $P_1^n(E)\le1/8$. In the approximate case, it suffices to require $k\delta\le1/8$ after adjusting constants.

\section{Why the tail-embedding distribution is a legitimate CVaR instance}

The embedded loss $\ell(w;(T,Y))=Ta(w;Y)$ has a large atom at zero because inactive records use the fixed dummy covariate $x_\bot$. This is not a pathology; it is exactly the mathematical structure of rare-event learning. The top $\tau$ tail of $\ell$ contains all active examples $T=1$ and possibly zero-loss inactive examples to fill the tail. Because inactive examples contribute zero loss and active losses are nonnegative, the CVaR equals the active conditional mean. This is why the embedding is exact and why the lower bound does not depend on approximation or asymptotics.

\end{document}